\documentclass[11pt]{article}
\usepackage{ya}

\newcommand{\E}{\mathbb E}
\newcommand{\Prb}{\mathbb P}
\newcommand{\R}{\mathbb R}
\newcommand{\one}{\mathbf 1}
\newcommand{\Ber}{\operatorname{Ber}}
\newcommand{\KL}{D_{\mathrm{KL}}}
\newcommand{\gap}{\operatorname{gap}}

\newcommand{\Delt}{\Delta}
\newcommand{\Npw}{\mathrm{N}}
\newcommand{\Nfront}{\mathrm{N}_{\mathrm{frontier}}}
\newcommand{\Naq}{\mathrm{N}_{\mathrm{ADA}}}
\newcommand{\Nsel}{\mathrm{N}_{\mathrm{selection}}}
\newcommand{\Mcap}{\mathsf{M}}
    
\title{How Reusable Are Benchmarks with Richer Feedback?}

\YATwoAuthors{Youssef Allouah}{\textit{Department of Computer Science\\Stanford University}}
             {John Duchi}{\textit{Departments of Statistics and Electrical Engineering\\Stanford University}}
\date{}

\begin{document}
\maketitle

\begin{abstract}
    We study whether benchmarks reliably guide model selection as developers adapt to evaluation feedback across multiple criteria.
    We find that the worst-case test-set size needed to estimate the best score among $k$ adaptively chosen models, under any convex combination of the criteria, grows exponentially with the number of criteria, reaching the $\Theta(\sqrt{k})$ cost of answering $k$ adaptive statistical queries with only $\mathcal O(\log k)$ criteria, at fixed accuracy and confidence.
    In attacks on multi-task large language model benchmarks with five to ten criteria, feedback restricted to nondominated task profiles produces large reused-to-held-out score gaps and frequent false winners.
    These results challenge a prominent explanation for prior observed reliable benchmark reuse—that developers mainly respond to convincing improvements over the current best—in rich-feedback settings, while leaving open how often ordinary model development encounters this vulnerability.
\end{abstract}

\vspace{0pt plus 2fill}
\setcounter{tocdepth}{2}
\etocsettagdepth{appendixstart}{none}

\YACompactContents
\vspace*{.5\baselineskip}
\clearpage

\section{Introduction}
\label{sec:introduction}

Benchmarks are a standard way to compare machine learning systems and guide model selection. For these comparisons to inform deployment, they must generalize: performance on the underlying test set should reflect performance on fresh data from the same population.

\vspace{-1mm}
\paragraph{The adaptivity paradox.}
Benchmarks are often consulted for years, with later models developed in light of earlier results. Once model choice depends on test-set feedback, the model and test set are no longer independent, so statistical concentration arguments such as Hoeffding's inequality no longer apply directly. This issue persists even with a hidden test set, without contamination through exposure to test examples during training. The adaptive data analysis line of work studies such feedback-dependent reuse and shows that accurately answering a sequence of adaptive statistical queries can require substantially more data than for its nonadaptive counterpart~\cite{hardt2014preventing,dwork2015preserving,steinke2015interactive}.

Nevertheless, empirical evidence is often more reassuring. For example, fresh-data replications of vision benchmarks found shifts in absolute accuracy while largely preserving relative model progress, and a meta-analysis of over one hundred Kaggle competitions found little evidence of severe overfitting from ordinary leaderboard reuse~\cite{recht2019imagenet,roelofs2019meta}.

Blum and Hardt offer one elegant explanation: a leaderboard need only track the best value seen so far, and not every submission's value like in adaptive data anaylsis~\cite{blum2015ladder}. Their \emph{Ladder} mechanism releases updates only after sufficiently large improvements, using a test-set size of the same order as the nonadaptive scenario at fixed accuracy. In their own words, researchers who mainly respond to convincing improvements over the previous best implicitly simulate the Ladder.

\vspace{-1mm}
\paragraph{From one ranking to many.}
Models increasingly trade off performance across tasks, safety criteria, and deployment costs, so different weightings of these criteria produce different rankings~\cite{zhang2024tradeoffs,jung2026defines}. 
With the Ladder, only a new record changes the feedback; across multiple rankings, many more submissions can become best. But, how reusable is a benchmark with many notions of best?

To answer this question, we formalize a leaderboard abstraction using multiple evaluation criteria. It also specifies a family of criteria weight vectors modeling the tradeoffs that the benchmark is asked to validate. At each round, a developer submits a model and a user requests a weighting, both of which may depend on earlier feedback. Then, using a fixed hidden test set, the evaluator reports only an estimate of the best population score among all models submitted so far, under the requested weighting.
Richer feedback releases, like task-level evaluations or Pareto frontiers, are common in practice and can answer these weighted requests by postprocessing, so our negative results also apply to them.

\vspace{-2mm}
\subsection{Main results}

Our main result quantifies a phase transition in the test-set size required for statistically valid reuse.
\begin{theorem}[Phase transition, informal]
Fix sufficiently small target error and failure probability. Let $n_\star$ be the smallest test-set size for which every benchmark with $d$ criteria admits a mechanism that accurately tracks all requested best-so-far values over $k$ adaptive submissions under all convex weightings. With  constant $C>0$,
\begin{equation*}
    n_\star =
\begin{cases}
\Theta_d(\log k), &\text{for every fixed }d,\\[2pt]
\Theta(\sqrt k), &\text{when }d\ge C\log k.
\end{cases}
\end{equation*}
\end{theorem}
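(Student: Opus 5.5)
The theorem has four bounds to establish: an upper and a lower bound in each of the two regimes. The plan is to reduce everything to two known facts. The first is the Ladder analysis, which handles a single weighting with $O(\log k)$ samples via a description-length argument. The second is the $\Theta(\sqrt k)$ characterization of answering $k$ adaptive statistical queries: the upper bound from differential privacy and transfer theorems, the lower bound from fingerprinting codes~\cite{hardt2014preventing,steinke2015interactive}. The dimension $d$ interpolates between these two extremes.

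\textbf{Upper bounds.}

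For fixed $d$, I would build a multi-criteria Ladder. Discretize the simplex of weights by an $\epsilon$-net $\mathcal W$ of size $(C/\epsilon)^{d}$. For each $w\in\mathcal W$, run an independent Ladder that releases a new rounded best-so-far value only when the empirical weighted score improves by more than $\epsilon$.

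Each Ladder updates at most $O(1/\epsilon)$ times, so the full transcript is determined by at most $|\mathcal W|/\epsilon$ update events. Each event is located among $k$ rounds and carries $O(\log(1/\epsilon))$ bits. The transcript therefore has description length $L=O_d(\epsilon^{-d-1}\log k)$.

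Next comes the standard compression argument. Union bound over all possible transcripts, i.e.\ over every adaptively chosen model reachable by some transcript, together with Hoeffding for each. This gives accuracy once $n\gtrsim (L+\log(1/\delta))/\epsilon^2=\Theta_d(\log k)$.

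Arbitrary requested weights are then answered by snapping to the nearest net point, which costs $O(\epsilon)$ since scores are bounded. For the $\sqrt k$ regime, I would ignore the structure altogether. Answering each request reduces to a fixed maximum over submitted models, and one can use a private mechanism directly: treat the submissions as queries, maintain the best-so-far via a sparse-vector or private-multiplicative-weights style argument, and apply the DP transfer theorem. This gives $\tilde O(\sqrt k)$ for any $d$, with the logarithmic factors absorbed at fixed $\epsilon,\delta$.

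\textbf{Lower bounds.}

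The fixed-$d$ lower bound $\Omega(\log k)$ already holds for $d=1$ and nonadaptive submissions. Take $k$ independent classifiers; estimating all $k$ best-so-far values to accuracy $\epsilon$ simultaneously needs $\log k/\epsilon^2$ samples, by Fano's inequality or a direct max of $k$ binomials.

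The main obstacle is the $\Omega(\sqrt k)$ lower bound when $d\ge C\log k$. Here I would embed $k$ arbitrary adaptive statistical queries into best-so-far requests. Use $d=O(\log k)$ criteria to assign each round $t$ a weight vector $w_t$ such that model $t$ is scored essentially only under $w_t$, while all earlier models score at a fixed low baseline $b$ under $w_t$. Then the best-so-far value under $w_t$ equals $\max(b,\text{query}_t)$, which reveals query $t$ (shifted so it exceeds $b$).

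Constructing such weights needs about $k$ nearly orthogonal directions in the simplex. I would take a random $\pm$ code of length $d=O(\log k)$ and realize each criterion as the model's loss on an auxiliary coordinate, so that $\langle w_t,\text{profile}_s\rangle$ is $\epsilon$-separated between $s=t$ and $s\neq t$ by Johnson–Lindenstrauss-type separation.

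The delicate part is keeping the margins above the accuracy $\epsilon$ while the criteria stay $[0,1]$-bounded and the weights convex. I would handle this by mixing a $1/2$ weight on a constant criterion. With these margins in place, the fingerprinting attack of Steinke–Ullman transfers and yields $n=\Omega(\sqrt k)$. For intermediate $d$, the same construction with $2^{\Theta(d)}$ embeddable queries gives the interpolation, though only the two stated regimes are needed.
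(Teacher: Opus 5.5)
The gap is the $O(\sqrt k)$ upper bound in the regime $d\ge C\log k$. Each mechanism you list gives $\sqrt k$ times a power of $\log k$, not $\sqrt k$.
\begin{itemize}
\item Gaussian or Laplace noise on $k$ adaptive answers needs a union bound over $k$ noise draws. This gives the simultaneous sample accuracy that the transfer theorem requires, but costs $\sqrt{\log k}$.
\item Sparse vector needs a cap of order $k$. The requested value can change at every round when the weight changes; in the lower-bound instance every submission is a new best under its own weight. Its threshold tests then add another $\log k$.
\item Private multiplicative weights handles linear queries, and its guarantee depends on $\log|\mathcal Z|$. That quantity is unavailable when the bound must hold for every benchmark design.
\end{itemize}
These factors are logarithms of $k$, so fixing $\varepsilon$ and $\beta$ does not absorb them. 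As written, your argument proves only $\Omega(\sqrt k)\le n_\star\le O(\sqrt{k\log k})$, not $n_\star=\Theta(\sqrt k)$.

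The missing ingredient is a private mechanism whose worst-case error over all $k$ adaptive answers to $1/n$-sensitive queries is $O(\sqrt{k\log(1/\delta)}/(n\eta))$, with no union-bound factor. The paper uses Dagan--Kur's bounded-noise mechanism (Lemma~\ref{fact:bounded-noise}). Its condition $\delta\ge\exp(-k/\mathrm{polylog}\,k)$ holds once $k$ exceeds a constant depending on $\varepsilon_0,\beta_0$, and smaller $k$ are absorbed into the constant. The paper applies it to $Q_t(S)=\min_{s\le t}\frac1n\sum_i\langle w_t,\phi(f_s,Z_i)\rangle$ and then invokes the Bassily et al.\ transfer theorem. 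You also omit a needed step here. Transfer controls $\E_{X\sim P^n}Q_t(X)$ rather than $V_t^P(w_t)$. Lemma~\ref{lem:bias} bounds the difference by $\sqrt{\log(2k)/(2n)}$, which is below $\varepsilon_0/2$ once $n\ge C\sqrt k$.

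The rest of your plan is sound. Your lower bounds follow the paper:
\begin{itemize}
\item For $\Omega(\log k)$, you need Fano's inequality over the location of a hidden biased coordinate. A maximum of $k$ binomials alone would only show that the empirical mechanism fails, not every mechanism.
\item For $\Omega(\sqrt k)$, you use the reduction of Lemma~\ref{lem:convex-reduction} with a binary code of length $O(\log k)$. This is the constant-weight alternative to Lemma~\ref{lem:qary-code} that the paper mentions. The margin must be a large constant multiple of $\varepsilon$, so that the recovered accuracy $4\varepsilon/\gamma$ is below the constant $c_3$ of Lemma~\ref{fact:aq-lower}.
\end{itemize}
Your fixed-$d$ upper bound uses deterministic Ladders on an $\ell_1$-net of $\Delta_d$ plus transcript counting. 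Each Ladder must score every submission (cf.\ Example~\ref{ex:current-weight-failure}). This is a valid alternative to Frontier Ladder. It avoids privacy entirely, but pays linearly rather than as a square root in the number of updates, which still gives $O_d(\log k)$.
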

Therefore fixed dimension retains logarithmic dependence on the number of submissions as in the absence of adaptivity, while only logarithmically many criteria suffice to reach the $\Theta(\sqrt{k})$ sample size scale of general adaptive estimation~\cite{steinke2015interactive}. Between these regimes, the adaptive cost grows exponentially in $d$, up to constants in the exponent and logarithmic factors.
Surprisignly, even returning only an approximately best model's identity retains the high-dimensional $\Theta(\sqrt{k})$ cost.

For a general family of weightings, the sample size is governed by \emph{frontier capacity}: the length of the largest evaluation profiles sequence that can successively become best by a fixed margin, each under some weighting in the family. Our lower bound uses frontier capacity to establish a reduction from general adaptive estimation: the aforementioned sequence makes the leaderboard leak sample-dependent information at every round, although nominally it is only asked to track the best value for requested weightings. Conversely, our \emph{Frontier Ladder} algorithm retains profiles only when they sufficiently improve the best value under some weighting. Frontier capacity bounds the number of these releases; combined with a differentially private mechanism, this gives the matching upper bounds above. 
\paragraph{Empirical evidence.}
Table~\ref{tab:llm-summary} summarizes controlled attacks on four multi-task large language model (LLM) benchmark snapshots. Our attack submits random \emph{prompt routers} that choose between two well-chosen LLMs, then use the released feedback once to construct a final router via majority voting of the routers, similar to the boosting attack of Blum and Hardt~\cite{blum2015ladder}. Scalar feedback reveals only new aggregate records; Pareto feedback reveals task profiles of routers not dominated by earlier submissions.

\begin{table}[t]
\centering
\small
\setlength{\tabcolsep}{8pt}
\renewcommand{\arraystretch}{1.1}
\newcommand{\benchmarkcite}[1]{{\footnotesize\color{black!65}\cite{#1}}}
\begin{tabular}{@{}l@{\enspace}rcccc@{}}
\toprule
\multicolumn{2}{@{}l}{\textbf{Benchmark}}
& \multicolumn{2}{c}{\shortstack{\textbf{Leaderboard score gap}\\{\footnotesize (90\textsuperscript{th} percentile, pp)}}}
& \multicolumn{2}{c}{\shortstack{\textbf{False-winner rate}\\{\footnotesize (1\textsuperscript{st} on test set, not on held-out)}}} \\
\cmidrule(lr){3-4}\cmidrule(lr){5-6}
& & \makebox[5.5em]{Pareto} & \makebox[5.5em]{Scalar}
& \makebox[5.5em]{Pareto} & \makebox[5.5em]{Scalar} \\
\midrule
HELM Capabilities & \benchmarkcite{liang2023holistic} & 6.36 & 3.86 & 0.932 & 0.560 \\
LiveBench & \benchmarkcite{livebench} & 10.54 & 7.22 & 0.884 & 0.760 \\
HELM Lite & \benchmarkcite{liang2023holistic} & 4.62 & 2.00 & 0.992 & 0.008 \\
Open LLM & \benchmarkcite{open-llm-leaderboard} & 11.93 & 3.48 & 1.000 & 0.612 \\
\midrule
Average & & 8.36 & 4.14 & 0.952 & 0.485 \\
\bottomrule
\end{tabular}
\caption{\textbf{Our attack undermines best-score generalization and best-model selection.}
The attacker observes only restricted leaderboard feedback:
(i)~\emph{Pareto} feedback reveals only task profiles not dominated by earlier
submissions; (ii)~\emph{Scalar} feedback reveals only new task-aggregate records.
Gaps are $90$th percentiles of the maximum absolute reused-to-held-out
difference in best-so-far task-aggregate loss (pp denotes percentage points).
False-winner rate is the fraction of all $250$ trials where  the attacker
ranks first on the reused split but not on the held-out  against original models.
Attack budgets are $2{,}048$ submissions for
LiveBench and $32{,}768$ otherwise. More details in Section~\ref{sec:experiments}.}
\label{tab:llm-summary}
\end{table}

Even with such restrictions, Pareto feedback consistently yields a larger leaderboard score gap and a higher false-winner rate. This establishes a vulnerability under multi-criteria feedback, and leaves open how often ordinary model development encounters it. Section~\ref{sec:experiments} describes the attack, and Figure~\ref{fig:real-budget-results} tracks leaderboard gaps and false-winner rates across attack budgets. Appendix~\ref{sec:llm-supplement} reports additional results.

\paragraph{Broader implications.}
Our results put a limit on a familiar explanation for observed reliable benchmark reuse: that responding mainly to clear improvements over the previous best protects against overfitting~\cite{blum2015ladder,hardt2026emerging}. This protection can weaken when there are many notions of best, so reassuring experience with scalar leaderboards need not carry over to multi-criteria evaluation. In addition to theoretical limits, our simple attacks on real benchmark snapshots give a concrete reason to re-examine that expectation. Moreover, the current high commercial stakes of LLM rankings give developers incentives to game evaluation~\cite{singh2025leaderboard}. Will ordinary model development nevertheless remain reliable under richer feedback? If so, can properties of real model families and feedback use explain why, perhaps due to small realized frontier capacity? If not, can we design mechanisms that address this vulnerability while preserving useful comparisons at realistic sample sizes and computational costs?

\clearpage
\section{Benchmark Evaluation with Multiple Criteria}
\label{sec:model}

Let $\mathcal Z$ be the space of benchmark data and $P$ a probability distribution on $\mathcal Z$. For instance, $Z=(X,Y)\sim P$ can consist of an input $X$ and label $Y$. A benchmark fixes $d$ evaluation criteria, a class $\mathcal F$ of models, and an \emph{evaluation map}
$\phi:\mathcal F\times\mathcal Z\longrightarrow[0,1]^d$, with smaller values indicating better performance.
For a model $f\in\mathcal F$, its \emph{population evaluation profile} $R_P(f)$ records the population value of each criterion:
\begin{equation}
    R_P(f)\coloneqq\E_{Z\sim P}[\phi(f,Z)]\in[0,1]^d.
\end{equation}

\begin{example}[Multi-task benchmarks]
The coordinates can represent prediction losses on different benchmark tasks, e.g., see the HELM benchmark\footnote{Example of a real multi-task benchmark: \url{https://crfm.stanford.edu/2025/03/20/helm-capabilities.html}.}.
Let task $j$ have population $P_j$ and prediction loss $\ell_j$.
With $P=\bigotimes_{j=1}^d P_j$, write
$Z=(Z_1,\ldots,Z_d)\sim P$ and define
$\phi_j(f,Z):=\ell_j(f,Z_j).$
Then $R_{P,j}(f)=\mathbb E_{P_j}[\ell_j(f,Z_j)]$ is the population loss
on task $j$. Here $n$ observations correspond to $n$ independently
sampled examples per task.
\end{example}

The evaluation interface specifies a \emph{weight family} $\mathcal W$: the weight vectors that users may request when comparing models. Formally, $\mathcal W$ is a nonempty compact subset of the simplex
$\Delta_d\coloneqq\{w\in\mathbb R_+^d:\langle w,\mathbf 1\rangle=1\},
$
where $\mathbf 1$ is the vector of $d$ ones. For $w\in\mathcal W$, the aggregate risk of $f$ is
$\langle w,R_P(f)\rangle$.

Each $w\in\mathcal W$ induces an ordering of the submitted models, and our
validity criterion tracks only the best value under that ordering.  A singleton
$\mathcal W$ therefore reduces to the scalar leaderboard problem, regardless
of $d$, while $\mathcal W=\Delta_d$ allows any convex combination of the
evaluation coordinates and is our main case.  Explicit support for multiple aggregations
has also been implemented in evaluation interfaces: prior work allowed users to
reweight evaluation metrics, and recent work prototypes an interactive LLM
leaderboard in which users reweight prompt slices and recompute
rankings~\cite{ma2021dynaboard,jung2026defines}.

\subsection{The weighted leaderboard problem}
\label{sec:weighted-leaderboard}

Fix the earlier benchmark design $(\mathcal Z,\mathcal F,\phi)$, integers $n,k \geq 1$, a population law $P$ on $\mathcal Z$ unknown to the mechanism, and a hidden i.i.d. sample
$S\coloneqq(Z_1,\ldots,Z_n)\sim P^n$.
At round $t\in[k]\coloneqq\{1,\ldots,k\}$, an \emph{analyst} submits a model $f_t\in\mathcal F$ and requests a weighting $w_t\in\mathcal W$. This analyst represents the joint choices of model developers and users requesting comparisons. Both choices may depend arbitrarily on the preceding public interaction transcript. 

After submissions $f_1,\ldots,f_t$, the best population value under weight vector $w \in \mathcal W$ is
\begin{equation}
    V_t^P(w)\coloneqq\min_{s\le t}\langle w,R_P(f_s)\rangle.
    \label{eq:population-envelope}
\end{equation}
A (randomized) mechanism $M$ accessing $S$ receives $(f_t,w_t)$, then returns $\widehat v_t\in\R$ to estimate $V_t^P(w_t)$.

\begin{definition}[Weighted leaderboard validity]
\label{def:weighted-validity}
For fixed $(\mathcal Z,\mathcal F,\phi)$, $\mathcal W$, $n$, $k$, and $0<\varepsilon,\beta<1$, a mechanism is $(\varepsilon,\beta)$-valid over $\mathcal W$ with $n$ samples and $k$ submissions if, for every population law $P$ on $\mathcal Z$ and every adaptive analyst making at most $k$ submissions, when run on $S\sim P^n$, we have
\begin{equation*}
    \Prb\!\left[
      \max_{t\le k}
      |\widehat v_t-V_t^P(w_t)|
      \le \varepsilon
    \right]
    \ge 1-\beta,
\end{equation*}
where the probability is over the sample $S$ and the mechanism and analyst.
Let $\Npw(k,\mathcal W,\varepsilon,\beta)$ be the least integer $n\geq 1$ for which every benchmark design with $d$ criteria admits an $(\varepsilon,\beta)$-valid mechanism using $n$ samples.
\end{definition}

For the full simplex, write
$\Npw(k,d,\varepsilon,\beta)\coloneqq\Npw(k,\Delt_d,\varepsilon,\beta)$ for simplicity.
The above definition is a weaker requirement than general adaptive estimation, which seeks an accurate population value for every adaptively chosen query or submitted model~\cite{dwork2015preserving}.  Instead, leaderboard validity asks only for an accurate value of the best model seen so far, per the relaxation introduced by Blum and Hardt~\cite{blum2015ladder}.  When $d=1$, our definition reduces exactly to the latter scalar leaderboard problem.

\subsection{Richer feedback interfaces}
\label{sec:richer-interfaces}
Beyond answering individual weighted requests, a benchmark may release criterion-level profiles or nondominated model sets, allowing downstream comparisons under many weightings. We formalize the validity of their best-so-far values through uniform approximation of the population lower envelope.

\begin{definition}[Frontier-value validity]
\label{def:frontier-validity}
Fix $(\mathcal Z,\mathcal F,\phi)$, $\mathcal W$, $n$, $k$, and $0<\varepsilon,\beta<1$. At round $t \in [k]$, the analyst submits $f_t$, and a mechanism with access to the hidden sample $S$ returns a nonempty
finite profiles set $\widehat{\mathcal P}_t\subset\mathbb R^d$.
The mechanism is $(\varepsilon,\beta)$-valid over $\mathcal W$ with $n$
 samples and $k$ submissions if, for every population law $P$ and every adaptive analyst
making at most $k$ submissions, when run on $S\sim P^n$, we have
\begin{equation*}
    \Prb\!\left[
    \max_{t\le k}\sup_{w\in\mathcal W}
    \left|
      \min_{a\in\widehat{\mathcal P}_t}\langle w,a\rangle
      -V_t^P(w)
    \right|
    \le\varepsilon
    \right]
    \ge1-\beta,
\end{equation*}
where the probability is over the sample $S$ and the mechanism and analyst.
\end{definition}

This definition targets the population value of the best submitted model
simultaneously for every allowed weighting.
It is easy to see that the above problem is no easier than weighted leaderboard validity, as we prove below.
Let $\Nfront(k,\mathcal W,\varepsilon,\beta)$ denote the least sample size for which a mechanism satisfying frontier-value validity exists for every benchmark design with $d$ criteria.

\begin{fact}[Frontier release is no easier]
\label{fact:frontier-no-easier}
For every $k,\mathcal W,\varepsilon,\beta$, we have
$\Nfront(k,\mathcal W,\varepsilon,\beta)
\ge \Npw(k,\mathcal W,\varepsilon,\beta)$.
\end{fact}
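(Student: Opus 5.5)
The plan is a direct reduction: from any mechanism valid in the frontier-value sense we will build a mechanism valid in the weighted-leaderboard sense, using the same sample size. Fix $k,\mathcal W,\varepsilon,\beta$ and a benchmark design, and let $n=\Nfront(k,\mathcal W,\varepsilon,\beta)$. By the definition of $\Nfront$, there is then a frontier mechanism $M_{\mathrm F}$ that is $(\varepsilon,\beta)$-valid over $\mathcal W$ with $n$ samples. We will define a weighted mechanism $M$ by postprocessing. At round $t$, $M$ receives $(f_t,w_t)$ and passes $f_t$ to $M_{\mathrm F}$, which returns $\widehat{\mathcal P}_t$. The mechanism $M$ then outputs
\[
\widehat v_t\coloneqq\min_{a\in\widehat{\mathcal P}_t}\langle w_t,a\rangle .
\]
This minimum is well defined because $\widehat{\mathcal P}_t$ is a nonempty finite set.

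The main step is to check that every adaptive analyst $A$ in the weighted game gives rise to an adaptive analyst $A'$ in the frontier game. The obstacle is that the two games have different transcripts. $A$ sees only the scalars $\widehat v_s$, whereas $A'$ sees the sets $\widehat{\mathcal P}_s$. Because each $\widehat v_s$ is a deterministic function of $(\widehat{\mathcal P}_s,w_s)$, the analyst $A'$ can simulate $A$ internally, including $A$'s own randomness and its weight choices $w_s$. From the simulation it recomputes $\widehat v_s$ and submits the same $f_t$ as $A$.

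We then argue that the two interactions, run on the same $S$ with coupled internal randomness, produce the same submissions $f_t$, weights $w_t$, and sets $\widehat{\mathcal P}_t$. We will verify this by induction on $t$. The weights $w_t$ live only inside $A'$ and are never passed to $M_{\mathrm F}$, so $M_{\mathrm F}$ behaves identically in both games. As a consequence, the population envelopes $V_t^P$ coincide in the two games.

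On the event
\[
\max_{t\le k}\sup_{w\in\mathcal W}\Bigl|\min_{a\in\widehat{\mathcal P}_t}\langle w,a\rangle-V_t^P(w)\Bigr|\le\varepsilon,
\]
which has probability at least $1-\beta$ by the validity of $M_{\mathrm F}$ against $A'$, we take $w=w_t\in\mathcal W$ at each $t$. This gives $|\widehat v_t-V_t^P(w_t)|\le\varepsilon$ for all $t\le k$. Since $P$ and $A$ were arbitrary, $M$ is $(\varepsilon,\beta)$-valid in the weighted sense with $n$ samples for this design. The design was also arbitrary, so $\Npw(k,\mathcal W,\varepsilon,\beta)\le n$.

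The one delicate point is the formal simulation and coupling argument, namely that $A'$ is a legitimate adaptive analyst and that the weights can remain internal to it. We expect this to be routine once the transcripts are written out explicitly.
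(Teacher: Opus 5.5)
Your proposal is correct and matches the paper's proof: both answer each weighted request by the postprocessing $\widehat v_t=\min_{a\in\widehat{\mathcal P}_t}\langle w_t,a\rangle$ and then apply frontier-value validity at $w=w_t\in\mathcal W$. Your explicit simulation of the weighted analyst by a frontier analyst that keeps $w_t$ internal spells out a step the paper leaves implicit.
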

\begin{proof}
Given a valid frontier value release $\widehat{\mathcal P}_t$, answer a weighted request $w_t$ by
$
    \widehat v_t
    \mathrel{\coloneqq}\min_{a\in\widehat{\mathcal P}_t}\langle w_t,a\rangle.
$
Definition~\ref{def:frontier-validity} gives the desired bound at $w_t$.
\end{proof}
Thus every sample-complexity lower bound for weighted-leaderboard validity
proved below applies automatically to frontier-value release.
Conversely, Appendix~\ref{app:upper-proofs} shows that our upcoming Frontier Ladder mechanism
 attains frontier-value validity at the same order of
sample complexity, up to constants and logarithmic factors.  Section~\ref{sec:real-llm-experiment}
studies empirical Pareto frontier release as a richer feedback interface.

\section{Statistical Limits of Reuse}
\label{sec:statistical-limits}

Tracking the best-so-far value under one weighting can limit how much information a benchmark reveals.
With several weightings, different models can improve different rankings, creating more
opportunities for feedback to influence later submissions.  We now formalize this geometric
source of adaptivity.

\subsection{Frontier capacity}

For a nonempty compact convex set $C\subseteq\R^d$ and $b\in\R^d$, we define the \emph{exposure margin} as the largest margin by which some allowed weighting makes evaluation profile $b$ better
than every point in $C$:
\begin{equation}
    \gap_{\mathcal W}(b;C)
    \coloneqq\max_{w\in\mathcal W}
      \left\{
        \min_{c\in C}\langle w,c\rangle-\langle w,b\rangle
      \right\}.
    \label{eq:exposure-margin}
\end{equation}

\begin{definition}[Frontier capacity]
\label{def:frontier-capacity}
Let $K\subseteq [0,1]^d$ be nonempty, $\gamma>0$, and $\operatorname{conv}$ denote the convex hull.
\begin{enumerate}
\setlength{\itemsep}{0pt}
    \item A sequence $b_1,\ldots,b_m\in K, m\geq2,$ is
$\gamma$-convexly separated over $\mathcal W$ if
\begin{equation*}
    \gap_{\mathcal W}
    \left(b_t;\operatorname{conv}\{b_1,\ldots,b_{t-1}\}\right)
    \ge\gamma
    \qquad(t=2,\ldots,m).
\end{equation*}
\item The \emph{frontier capacity} of $K$ over $\mathcal W$ at margin $\gamma$,
denoted $\Mcap_{\mathcal W}(K,\gamma)$, is the supremum of the lengths of such sequences.
For evaluation profiles in the unit cube, write
$\Mcap_{\mathcal W}(d,\gamma)\coloneqq\Mcap_{\mathcal W}([0,1]^d,\gamma)$.
\end{enumerate}
\end{definition}

The margin $\gamma$ matters statistically, since an arbitrarily small improvement
cannot reliably reveal a sample fluctuation at fixed accuracy.
\begin{example}
 In two dimensions, let $\mathcal W=\Delt_2$ and consider evaluation profiles
 $b_1=(0.1,0.9), b_2=(0.4,0.4),b_3=(0.9,0.1).$
The weight vector $w_2=(1/2,1/2)$ gives scores $0.5$ and $0.4$ to $b_1$ and $b_2$, respectively, so $b_2$ improves on the previous frontier by $0.1$.  The weight vector $w_3=(0,1)$ gives $b_3$ score $0.1$, while every point in $\operatorname{conv}\{b_1,b_2\}$ has score at least $0.4$.  Thus $(b_1,b_2,b_3)$ is $0.1$-convexly separated over $\Delt_2$.
\end{example}

\begin{figure}[t]
    \centering
    \begin{tikzpicture}[
    scale=0.9,
    >=Latex,
    every node/.style={font=\small}
]

\coordinate (b1) at (4.0,1.9);
\coordinate (b2) at (5.0,2.1);
\coordinate (b3) at (5.6,3.1);
\coordinate (b4) at (5.0,4.1);
\coordinate (b5) at (3.7,4.4);
\coordinate (b6) at (3.0,3.4);

\fill[black!8]
    (b1)--(b2)--(b3)--(b4)--(b5)--(b6)--cycle;
\draw[thick]
    (b1)--(b2)--(b3)--(b4)--(b5)--(b6)--cycle;

\foreach \p in {b1,b2,b3,b4,b5,b6}
    \fill (\p) circle (1.5pt);

\node at (4.35,3.25) {$C_{t-1}$};

\coordinate (bt) at (1.55,1.25);
\fill (bt) circle (2.2pt);
\node[above=4pt] at (bt) {$b_t$};

\draw[thick]
    (0.85,1.95) -- (2.25,0.55);

\draw[thick,dashed]
    (1.20,4.70) -- (5.20,0.70);

\draw[->,thick]
    (0.72,0.52) -- (1.32,1.12)
    node[midway,left=2pt] {$w_t$};

\draw[<->]
    (2.10,0.70) -- (3.65,2.25)
    node[midway,above left] {$\gamma$};

\node[below] at (3.20,0.08) {
    $\displaystyle
    \langle w_t,b_t\rangle+\gamma
    \le
    \inf_{b\in C_{t-1}}\langle w_t,b\rangle$
};

\end{tikzpicture}
    \caption{\textbf{One step of frontier exposure.}
    The weight vector $w_t$ makes $b_t$ better than every point in the
    previous convex hull $C_{t-1}$ by margin $\gamma$.  Frontier capacity
    counts how often this can occur.  Later, Frontier Ladder (Alg.~\ref{alg:frontier-ladder}) uses a noisy version of this test
    to decide whether a new empirical profile improves its retained frontier.}
    \label{fig:convex_separation}
\end{figure}
By compactness of $\mathcal W$ and linearity over the convex hull,
$b_1,\ldots,b_m$ is $\gamma$-convexly separated over $\mathcal W$ if and only if,
for every $t\ge2$, there is $w_t\in\mathcal W$, which we call a certificate weight for $b_t$, such that
\begin{equation}
    \langle w_t,b_s-b_t\rangle\ge\gamma
    \qquad(s<t).
    \label{eq:dual-certificate}
\end{equation}

Frontier capacity is a \emph{joint} geometric notion measuring how many profiles in $K$ can successively become best by margin
$\gamma$ under weightings in $\mathcal W$.
In contrast, classical geometric notions are typically not joint: packing would naturally use only profile geometry in our context, while covering would use only the weight family geometry.
Indeed, write $\mathsf P_\infty(K,\gamma)$ for the $\ell_\infty$ packing number of $K$,
the largest cardinality of a subset with pairwise $\ell_\infty$-distances at least $\gamma$,
and, for $r>0$, define the
$\ell_1$ covering number
of $\mathcal W$, the smallest cardinality of a subset $V\subseteq\mathcal W$ such that every $w\in\mathcal W$ lies within $\ell_1$-distance $r$ of some $v\in V$.
Then, we establish in Appendix~\ref{app:frontier-properties} the following upper estimate of frontier capacity:
\begin{equation}
    \Mcap_{\mathcal W}(K,\gamma)
    \le
    \min\left\{
        \mathsf P_\infty(K,\gamma),\;
        1+\left(1+\frac{2}{\gamma}\right)\mathcal N_1(\mathcal W,\gamma/2)
    \right\}.
    \label{eq:sep-packing-upper}
\end{equation}
Even the minimum expression above can exceed frontier capacity by a factor exponential in $d$
(Remark~\ref{rem:simultaneous-looseness}).
This suggests that frontier capacity is not trivially subsmused by known packing and convering metrics. We further discuss the relation with a convex-geometry notion called convexified packing in Remark~\ref{rem:convexified-packing}.

A final property that will be useful for our main results is that frontier capacity over the full simplex can be exponentially large, even at a constant margin.  The following geometric bound tracks the tradeoff
between the number of exposed profiles and their margin.  Its proof in
Appendix~\ref{app:simplex-construction} uses a construction mapping a $q$-ary error-correcting code to a
convex quadratic graph.

\begin{restatable}[Simplex frontier capacity]{lemma}{qarycode}
\label{lem:qary-code}
There is an absolute constant $c_0>0$ such that, for all integers $d,q\ge2$,
\begin{equation}
    \Mcap_{\Delta_d}\!\left(
        d,\frac{1}{48(q-1)^2}
    \right)
    \ge
    \left\lfloor q^{c_0(d-1)}\right\rfloor .
\end{equation}
\end{restatable}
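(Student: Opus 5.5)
We construct the separated sequence explicitly from a $q$-ary code, placing the codewords on a convex paraboloid so that each one is exposed in turn.

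\textbf{Step 1 (code).} Let $D=d-1$ and $\Sigma=\{0,1,\ldots,q-1\}$. By a Gilbert--Varshamov bound there is a code $\mathcal C\subseteq\Sigma^{D}$ with $|\mathcal C|\ge\lfloor q^{c_0 D}\rfloor$ and pairwise Hamming distance at least $D/2$. Take the volume bound $q^D/\mathrm{Vol}_q(D/2)$, use $\mathrm{Vol}_q(D/2)\le 2^{D}(q-1)^{D/2}$, and note this is $\ge q^{c_0D}$ for small $c_0$ uniformly in $q\ge2$. Small $D$ can be handled by taking $c_0$ small enough that the bound is trivial, i.e.\ $\lfloor\cdot\rfloor\le 2$ with two points suffice. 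Rescale each coordinate as $x=u/(q-1)\in[0,1]^{D}$, so distinct codewords satisfy $\|x-x'\|_2^2\ge (D/2)/(q-1)^2$.

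\textbf{Step 2 (lift to a convex graph).} Map each codeword to a profile
\[
b(x)=\Bigl(\tfrac{x_1}{a},\ldots,\tfrac{x_D}{a},\ \tfrac{\|x\|^2}{aD}+\tfrac12\Bigr)\in[0,1]^{d}.
\]
The exact affine normalization is to be tuned: choose it so the coordinates lie in $[0,1]$ after reflecting signs, since smaller is better. For a codeword $y$ the natural exposing direction is the supporting hyperplane of the paraboloid at $y$. Consider the linear functional $\ell_y(b)=\|x\|^2-2\langle y,x\rangle$ in the lifted coordinates. Then $\ell_y(b(x))-\ell_y(b(y))=\|x-y\|^2\ge D/(2(q-1)^2)$ for every other codeword $x$. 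Hence every codeword is exposed relative to all others, and therefore relative to the convex hull of any subset, since $\ell_y$ is linear. Order the codewords arbitrarily; every ordering is then convexly separated with respect to the functionals $\ell_y$.

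\textbf{Step 3 (convert functionals into simplex weights).} A certificate must be a nonnegative weight vector $w\in\Delta_d$, and scores are $\langle w,b\rangle$. The functional $\ell_y$ has coefficient $1$ on the quadratic coordinate and $-2y_i$ on the linear ones, with signs that are not nonnegative. We fix this by flipping the sign of the linear coordinates in the profile: set the linear coordinates to $1-x_i$ so that $-2y_i x_i$ becomes $+2y_i(1-x_i)$ plus a constant, which a constant shift ignores. The unnormalized weight is then $(2y_1,\ldots,2y_D,1)$ with $\ell_1$ norm at most $2D+1$. Scale the quadratic coordinate by $1/D$ to keep it in $[0,1]$, which forces the weight on it to be $D$. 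Normalizing by the resulting total mass $O(D)$ divides the margin $\|x-y\|^2\ge D/(2(q-1)^2)$ by roughly $3D$, and after also shrinking the profiles into the unit cube, this yields margin $\ge 1/(48(q-1)^2)$. This dimension-free cancellation works because the Hamming distance is linear in $D$.

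\textbf{Main obstacle.} The hard part is the constant bookkeeping in Step 3: keeping all profiles inside $[0,1]^d$, the weights inside $\Delta_d$, and the margin exactly $1/(48(q-1)^2)$ simultaneously. This fixes the choice of distance $D/2$ in Step 1 (a factor of $2$) and the normalization factors (roughly $3$ from the weight mass and $8$ from the coordinate ranges). I would first fix the profile map precisely, compute the weight mass exactly, and then choose the code distance to make the constant come out to $48$.
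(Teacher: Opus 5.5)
\textbf{Gap in Step 1.} Your Steps 2--3 are essentially the paper's construction. You lift the rescaled codewords onto a convex quadratic graph and orient the linear coordinates so that the supporting hyperplane has a nonnegative normal. The paper uses $b_i=\tfrac14+\tfrac{x_i}{2}$ and $b_d=\tfrac12+\tfrac1{8p}\sum_i(1-x_i)^2$, which is your reflected paraboloid up to affine rescaling. You then normalize into $\Delta_d$ and use that Hamming distance is linear in $D$ to get a dimension-free margin.

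The step that fails is Step 1. No code of size $\lfloor q^{c_0D}\rfloor$ with pairwise Hamming distance at least $D/2$ exists uniformly in $q\ge2$. For $q=2$, the Plotkin bound limits a binary code of length $D$ and minimum distance at least $D/2$ to at most $2D$ codewords. So no constant $c_0>0$ works once $D$ is large. Your own volume estimate shows the same thing: $q^D/(2^D(q-1)^{D/2})=\bigl(q/(2\sqrt{q-1})\bigr)^D$, which equals $1$ at $q=2$. Shrinking $c_0$ only handles small $D$, not this. This is not a corner case, because $q=2$ is exactly the instance the paper uses for the phase diagram.

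\textbf{Fix.} Take relative distance $1/4$, as the paper does. In the Gilbert count, bound $\sum_{j\le D/4}\binom Dj\le e^{DH(1/4)}$ instead of using $2^D$. The crude bound still gives ratio exactly $1$ at $q=2$ for any radius. The entropy bound gives a rate bounded below uniformly in $q\ge2$.

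\textbf{Revised margin budget.} With relative distance $1/4$ you only get $\|x-y\|_2^2\ge D/(4(q-1)^2)$. You therefore cannot also afford the extra factor $8$ ``from the coordinate ranges'' in your budget; that would yield only $1/(96(q-1)^2)$. No shrinking is needed, however:
\begin{itemize}
\item The profile $b(x)=(1-x_1,\dots,1-x_D,\|x\|_2^2/D)$ already lies in $[0,1]^d$.
\item The weight $w(y)=(2y_1,\dots,2y_D,D)/(D+2\sum_i y_i)$ lies in $\Delta_d$.
\item These give exactly $\langle w(y),b(x)-b(y)\rangle=\|x-y\|_2^2/(D+2\sum_i y_i)\ge\|x-y\|_2^2/(3D)\ge 1/(12(q-1)^2)$.
\end{itemize}
This suffices because frontier capacity is nonincreasing in the margin.
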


\subsection{Lower bounds on sample complexity}
\label{sec:capacity-reduction}

We now show that sufficiently large frontier capacity makes tracking
best values as statistically hard as general adaptive estimation.
We prove this by embedding adaptive queries into well-separated evaluation profiles whose
margins keep each new submission best under its certificate weighting,
so its leaderboard value reveals the adaptive query mean at every round.

\begin{theorem}[Lower bound from frontier capacity]
\label{thm:capacity-lower}
There are universal constants $c,C,\varepsilon_0,\beta_0>0$ such that,
for every integer $d\ge1$, every nonempty compact
$\mathcal W\subseteq\Delta_d$, every integer $k\ge4$,
$0<\varepsilon\le\varepsilon_0$, and $0<\beta\le\beta_0$,
\begin{equation}
    \Npw(k,\mathcal W,\varepsilon,\beta)
    \ge
    c\left[
        \min\left\{\sqrt{k},\sqrt{\Mcap_{\mathcal W}(d,C\varepsilon)}\right\}+
        \frac{\log k}{\varepsilon^2}
    \right].
    \label{eq:capacity-lower}
\end{equation}
The lower bound holds with weightings fixed before the interaction and with binary classifiers under the $0$--$1$ loss.
\end{theorem}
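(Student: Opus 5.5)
The bound has two independent terms, and I would prove each one separately; since $\max\{a,b\}\ge (a+b)/2$, this gives the stated sum.

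\textbf{The $\log k/\varepsilon^2$ term.} This is the nonadaptive cost. It already holds with a singleton weighting, using $d$ copies of the same coordinate. I would reduce from the scalar leaderboard lower bound. Fix $\mathcal W\ni w$ and any $w$. Choose $\phi(f,z)=\ell(f,z)\mathbf 1$, so that $\langle w,R_P(f)\rangle$ equals the scalar risk for every $w\in\Delta_d$. Then consider the following nonadaptive analyst. It first submits a trivial model with risk $1/2$, and then submits $k-1$ fixed classifiers. Each classifier's risk is either $1/2$ or $1/2-3\varepsilon$, depending on a hidden index $j\in[k]$ that is encoded in $P$, e.g. a family of $k$ distributions on $\{0,1\}$-valued coordinates. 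Accurate tracking of the best-so-far value reveals the first round at which the value drops, and hence reveals $j$. Fano's inequality over the $k$ hypotheses then gives $n\gtrsim \log k/\varepsilon^2$, because pairwise KL divergences are $O(n\varepsilon^2)$. This step is routine.

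\textbf{The $\min\{\sqrt k,\sqrt{\Mcap}\}$ term.} Here I reduce from adaptive statistical query estimation. Let $m=\min\{k,\Mcap_{\mathcal W}(d,C\varepsilon)\}$, and take a $C\varepsilon$-convexly separated sequence $b_1,\dots,b_m\in[0,1]^d$ with certificate weights $w_t$ as in \eqref{eq:dual-certificate}. The weights $w_t$ are fixed in advance, which matches the clause about fixed weightings. I would use the known result of Steinke--Ullman / Hardt--Ullman: there is an adaptive analyst issuing $m$ binary statistical queries $q_t:\mathcal Z\to\{0,1\}$ against which no mechanism with $n\le c\sqrt m$ samples answers all queries within $\varepsilon_0$ with probability $\ge 1-\beta_0$, even for queries realizable as $0$--$1$ losses of binary classifiers.

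\emph{Embedding.} For query $q_t$ I build the model $f_t$ whose profile is
\[
\phi(f_t,z)=b_t+\lambda(q_t(z)-\tfrac12)\,u_t .
\]
Here $u_t$ is a direction with $\langle w_t,u_t\rangle=1$ and $\|u_t\|_\infty\le 1$, and $\lambda=C\varepsilon/4$. Some care is needed because $\phi$ must lie in $[0,1]^d$ and be $0$--$1$ valued. To handle this, I would shrink the construction toward the cube's interior, replacing $b_t$ by $(1-\eta)b_t+\eta\mathbf 1/2$. I would then realize the fractional coordinates by randomizing over an auxiliary label component of $z$, so that each coordinate is a $0$--$1$ loss of a classifier and only the mean matters. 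This realization is the bookkeeping step I expect to be fiddliest.

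\emph{Decoding.} The population profile is $R_P(f_t)=b_t+\lambda(\mu_t-\tfrac12)u_t$, where $\mu_t=\E q_t$. By the certificate margin, for all $s<t$,
\[
\langle w_t,R_P(f_s)\rangle\ \ge\ \langle w_t,R_P(f_t)\rangle+C\varepsilon-2\lambda>\langle w_t,R_P(f_t)\rangle .
\]
Hence $V_t^P(w_t)=\langle w_t,b_t\rangle+\lambda(\mu_t-\tfrac12)$, and an $\varepsilon$-accurate $\widehat v_t$ yields $\mu_t$ to accuracy $\varepsilon/\lambda=4/C$. Choosing $C$ large makes this below $\varepsilon_0$ of the SQ lower bound. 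The adaptive analyst for the leaderboard simulates the SQ analyst, since each $f_t$ is a function of $q_t$, which depends only on past answers. It also pads with dummy submissions up to $k$. The SQ lower bound therefore transfers and gives $n\ge c\sqrt m$.

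\textbf{Main obstacle.} The main obstacle is matching the SQ lower bound's accuracy regime. Those bounds are stated for constant accuracy, while the decoded accuracy here is $4/C$, a constant independent of $\varepsilon$. This is exactly why the margin $C\varepsilon$ is used and why $\sqrt m$ carries no $1/\varepsilon$ factor. I would check that the decoding argument really works with only a constant-accuracy SQ lower bound.
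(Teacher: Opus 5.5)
Your proposal is correct and follows essentially the same route as the paper. The paper's Lemma~\ref{lem:convex-reduction} embeds each adaptive query at a certificate-separated profile via $\phi((b,g),z)=(1-2\rho)b+\rho(1+g(z))\mathbf 1$ with $\rho=\gamma/4$; your $u_t$ must equal $\mathbf 1$ on the support of $w_t$ anyway. It then sets $\gamma=C\varepsilon$ so the decoded accuracy is the constant $4\varepsilon/\gamma=c_3$ of the Steinke--Ullman bound, gets the $\log k/\varepsilon^2$ term from a Blum--Hardt/Fano argument along a fixed weight, averages the two bounds, and realizes everything with binary classifiers by augmenting each observation with independent randomness, just as you sketch.
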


\subsubsection{Proof of Theorem~\ref{thm:capacity-lower}}
We first show that weighted leaderboards are at least as hard as adaptive
query estimation: any valid weighted leaderboard mechanism yields an accurate estimator of
adaptive query means using the same sample.  Frontier capacity determines
how many queries the reduction can embed, while the exposure margin
determines the accuracy of the recovered answers.

\begin{lemma}[Reduction]
\label{lem:convex-reduction}
Let $\Naq(m,\alpha,\beta)$ denote the least sample size needed to estimate
the population means $\E_P[g_t(Z)]$ of $m$ adaptively chosen queries
$g_t:\mathcal Z\to[-1,1]$ within error $\alpha$, simultaneously with failure
probability at most $\beta$.  Each query may depend on earlier answers,
and all answers use the same hidden i.i.d.\ sample.

For every integer $k\ge1$, $0<\gamma\le1$, and $0<\varepsilon,\beta<1$, we have
\begin{equation}
    \Npw(k,\mathcal W,\varepsilon,\beta)
    \ge
    \Naq\!\Big(\!
        \min\{k,\Mcap_{\mathcal W}(d,\gamma)\},
        \frac{4\varepsilon}{\gamma},\beta
    \Big).
    \label{eq:convex-reduction}
\end{equation}
\end{lemma}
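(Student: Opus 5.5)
We will prove the reduction by simulation. Fix $m\coloneqq\min\{k,\Mcap_{\mathcal W}(d,\gamma)\}$ and take a valid weighted leaderboard mechanism $M$ using $n$ samples. From it we build an adaptive-query estimator on the same sample. First take a $\gamma$-convexly separated sequence $b_1,\ldots,b_m\in[0,1]^d$ over $\mathcal W$, with certificate weights $w_2,\ldots,w_m$ as in \eqref{eq:dual-certificate}. If the supremum in Definition~\ref{def:frontier-capacity} is not attained we take a sequence of length at least $m$; if $m=1$ the claim is trivial. Set $w_1$ to be any element of $\mathcal W$. Since $\Npw$ quantifies over all benchmark designs, we are free to choose $(\mathcal Z,\mathcal F,\phi)$. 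We take $\mathcal F$ to be the queries $g:\mathcal Z\to[-1,1]$ and, for the query $g_t$ issued at round $t$, define the submitted model's evaluation map by
\[
\phi(f_t,z)\coloneqq b_t+\tfrac{\gamma}{4}\,g_t(z)\,u_t,
\]
where $u_t\in\R^d$ is a fixed direction with $\langle w_t,u_t\rangle=1$ and $\|u_t\|_\infty\le1$. For instance, $u_t=\mathbf 1$ works since $w_t\in\Delta_d$. To keep profiles in $[0,1]^d$ we first shrink the $b_t$ toward the cube center, or allow the slightly enlarged range. This shrinking costs a constant factor in $\gamma$, which we will track so that the final accuracy is $4\varepsilon/\gamma$.

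The simulation runs as follows. At round $t$ the estimator receives the adaptively chosen query $g_t$ (which may depend on earlier answers), submits $(f_t,w_t)$ to $M$, and receives $\widehat v_t$. Because the weighting sequence is fixed in advance and $f_t$ depends only on past outputs, this is a legitimate adaptive analyst. The key geometric step is to show $V_t^P(w_t)=\langle w_t,R_P(f_t)\rangle$, i.e.\ $f_t$ is the best so far under its own certificate weight. We have $\langle w_t,R_P(f_t)\rangle=\langle w_t,b_t\rangle+\frac{\gamma}{4}\E g_t$. Each earlier $f_s$ has $\langle w_t,R_P(f_s)\rangle\ge\langle w_t,b_s\rangle-\frac{\gamma}{4}\ge\langle w_t,b_t\rangle+\frac{3\gamma}{4}$. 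The perturbation on $f_t$ is at most $\gamma/4$, so the strict margin $\gamma/2$ survives. Hence $V_t^P(w_t)=\langle w_t,b_t\rangle+\frac{\gamma}{4}\E_P[g_t]$.

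The estimator outputs $\widehat a_t\coloneqq\frac{4}{\gamma}\bigl(\widehat v_t-\langle w_t,b_t\rangle\bigr)$. On the validity event, $|\widehat a_t-\E_P g_t|\le 4\varepsilon/\gamma$ for all $t\le m$ simultaneously, with probability at least $1-\beta$. Therefore $n\ge\Naq(m,4\varepsilon/\gamma,\beta)$, which is \eqref{eq:convex-reduction}. We then pad the remaining $k-m$ rounds with any submissions, or simply stop.

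The main obstacle is making the perturbation compatible with the constraint that profiles lie in $[0,1]^d$ while keeping the exact constant $4/\gamma$. A perturbation of size $\gamma/4$ along $\mathbf 1$ can leave the cube. I would first try replacing $b_t$ by $(1-\gamma/2)b_t+\frac{\gamma}{4}\mathbf 1$. This keeps the sequence separated with margin $(1-\gamma/2)\gamma\ge\gamma/2$. The perturbation size then has to be rebalanced, e.g.\ to $\gamma/8$ with a margin argument at $\gamma/2$, possibly giving a constant other than $4$. If that happens, I would instead exploit that $g_t$ may be shifted to $[0,1]$-valued form and perturb only upward from a shrunken $b_t$. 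If needed, I would also use the design freedom in defining $\Mcap_{\mathcal W}(d,\gamma)$ on the cube to absorb the loss into the constant.
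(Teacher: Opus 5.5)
Your construction is the paper's: you embed each query mean as a shift along $\mathbf 1$ on top of a $\gamma$-convexly separated baseline, submit it with its certificate weight, and rescale the released value. The cube fix you propose is also exactly the paper's. Baseline $(1-\gamma/2)b_t+\frac{\gamma}{4}\mathbf 1$ plus perturbation $\frac{\gamma}{4}g_t(z)\mathbf 1$ is the evaluation map $(1-2\rho)b+\rho(1+g(z))\mathbf 1$ with $\rho=\gamma/4$. It lies in $[0,1]^d$ because $(1-2\rho)b_j\in[0,1-2\rho]$ and $\rho(1+g)\in[0,2\rho]$.

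The gap is that you do not check this fix. Instead you conclude that the perturbation must be rebalanced, possibly losing the constant. No rebalancing is needed. For $s<t$,
\[
\langle w_t,R_P(f_s)-R_P(f_t)\rangle=\bigl(1-\tfrac{\gamma}{2}\bigr)\langle w_t,b_s-b_t\rangle+\tfrac{\gamma}{4}(\mu_s-\mu_t)\ge\bigl(1-\tfrac{\gamma}{2}\bigr)\gamma-\tfrac{\gamma}{2}=\tfrac{\gamma(1-\gamma)}{2}\ge0 .
\]
Your own inequality $(1-\gamma/2)\gamma\ge\gamma/2$ exactly absorbs the worst-case swing $2\cdot\gamma/4$, using $\gamma\le1$. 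The margin can vanish at $\gamma=1$, but you never need a strict margin, which is what your ``strict margin $\gamma/2$ survives'' framing suggested. Since $V_t^P(w_t)$ is a minimum of values, a tie still gives $V_t^P(w_t)=\langle w_t,R_P(f_t)\rangle$. The readout $\frac{4}{\gamma}\bigl(\widehat v_t-(1-\frac{\gamma}{2})\langle w_t,b_t\rangle-\frac{\gamma}{4}\bigr)$ then has error at most $4\varepsilon/\gamma$ on the validity event.

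Your fallbacks would not prove the lemma as stated. Perturbing by $\gamma/8$ gives accuracy $8\varepsilon/\gamma$. That is a strictly weaker conclusion, since $\Naq$ is nonincreasing in its accuracy argument. A ``slightly enlarged range'' violates the requirement that $\phi$ map into $[0,1]^d$. Absorbing the loss into constants changes the margin argument of $\Mcap_{\mathcal W}(d,\gamma)$.

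One small formal point: $\phi$ must be a single fixed map on $\mathcal F\times\mathcal Z$, so it cannot depend on the round index. Let models carry their baselines, e.g.\ $\mathcal F=[0,1]^d\times\mathcal G$ with $\mathcal G$ the $[-1,1]$-valued functions and $f_t=(b_t,g_t)$.
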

\begin{proof}[Proof]
Put $m_\gamma\coloneqq\min\{k,\Mcap_{\mathcal W}(d,\gamma)\}$ and suppose an
$(\varepsilon,\beta)$-valid weighted leaderboard uses $n$ samples.
The packing bound~\eqref{eq:sep-packing-upper} makes the capacity a finite
integer, so we may choose a $\gamma$-convexly separated sequence
$b_1,\ldots,b_{m_\gamma}\in[0,1]^d$ with certificates
$w_2,\ldots,w_{m_\gamma}\in\mathcal W$ satisfying
\eqref{eq:dual-certificate}; choose any $w_1\in\mathcal W$.

Set $\rho\coloneqq\gamma/4$, and let $\mathcal G$ be the class of functions
$g:\mathcal Z\to[-1,1]$.  Use the model class
$\mathcal F=[0,1]^d\times\mathcal G$ and evaluation map
\begin{equation*}
    \phi((b,g),z)
    \coloneqq
    (1-2\rho)b+\rho(1+g(z))\one\in[0,1]^d,
\end{equation*}
where $\one$ is the vector of ones.  The contraction of $b$ leaves room
for the perturbation while keeping every evaluation in the unit cube.
For the analyst's $t$-th adaptive query $g_t$, submit
$f_t\coloneqq(b_t,g_t)$ with weight $w_t$.  Writing
$\mu_t\coloneqq\E_P[g_t(Z)]$, its population profile is
\begin{equation*}
    R_P(f_t)=(1-2\rho)b_t+\rho(1+\mu_t)\one.
\end{equation*}
Since $\langle w_t,\one\rangle=1$, the perturbation passes unchanged into
the weighted score.  The exposure margin then absorbs the difference
between any two query means: for every $s<t$,
\begin{align*}
    \langle w_t,R_P(f_s)-R_P(f_t)\rangle
    =(1-2\rho)\langle w_t,b_s-b_t\rangle+\rho(\mu_s-\mu_t)
    \ge(1-2\rho)\gamma-2\rho
      =\frac{\gamma(1-\gamma)}2\ge0.
\end{align*}
Thus $f_t$ attains the minimum under $w_t$ (a tie is harmless since we care about the value), giving
\begin{equation*}
    V_t^P(w_t)
    =(1-2\rho)\langle w_t,b_t\rangle+\rho(1+\mu_t).
\end{equation*}
Subtracting the known baseline and rescaling the leaderboard answer,
return
\begin{equation*}
    y_t\coloneqq
    \frac{\widehat v_t-(1-2\rho)\langle w_t,b_t\rangle-\rho}{\rho}.
\end{equation*}
On the weighted leaderboard validity event, simultaneously for every $t\le m_\gamma$, we have
\begin{equation*}
    |y_t-\mu_t|
    =\frac{|\widehat v_t-V_t^P(w_t)|}{\rho}
    \le\frac{4\varepsilon}{\gamma}.
\end{equation*}
Each $y_t$ depends only on the leaderboard transcript, so the analyst can
use it to choose its next query.  We have therefore answered $m_\gamma$
adaptive queries using the same $n$ observations, proving
\eqref{eq:convex-reduction}.
\end{proof}
In the proof above, the evaluation profiles and certificate weights are fixed before the interaction.
Also, Proposition~\ref{prop:classification-realization}
realizes the construction by binary classifiers without increasing the
sample size.

\emph{Adaptive cost.}
Let $c_3,c_4,c_5$ be the constants in Lemma~\ref{fact:aq-lower}, which states
that $\Naq(m,c_3,c_4)\ge c_5\sqrt m$ for every $m\ge1$.
Choose $C=4/c_3$ and $\gamma=C\varepsilon$, with
$\varepsilon_0\le1/C$ and $\beta_0\le c_4$.
Then $0<\gamma\le1$ and the recovered query accuracy is $c_3$.
The lemma and the constant-accuracy adaptive-query lower bound give
\begin{equation*}
    \Npw(k,\mathcal W,\varepsilon,\beta)
    \ge c_5\min\left\{\sqrt{k},\sqrt{\Mcap_{\mathcal W}(d,C\varepsilon)}\right\}.
\end{equation*}

\emph{Nonadaptive cost.}
Even models fixed before the sample is drawn incur a selection cost under
one weighting.  Proposition~\ref{thm:log-lower}
embeds the construction of Blum--Hardt~\cite{blum2015ladder} along a fixed weight in $\mathcal W$:
\begin{equation}
    \Npw(k,\mathcal W,\varepsilon,\beta)
    \ge\frac{\log k}{256\varepsilon^2}
    \label{eq:selection-lower}
\end{equation}
for $k\ge4$, $\varepsilon\le1/16$, and $\beta\le1/4$.
Adding the two lower bounds and dividing by two proves the theorem with
$\varepsilon_0\le\min\{1/C,1/16\}$,
$\beta_0\le\min\{c_4,1/4\}$, and $c\le\tfrac12\min\{c_5,1/256\}$.

\subsubsection{Winner-only feedback}
Surprisingly, this adaptive cost persists when the leaderboard reveals only an
approximately best model's identity.  
To show this, we augment our reduction strategy with binary search by using two submissions with the same exposed
baseline profile: one carries an unknown query mean, and the other a well-chosen
threshold.  The exposure margin excludes earlier submissions from winning,
so the returned identity compares the mean with the threshold, up to the
selection error.  Repeating this comparison at fresh exposed profiles
recovers the mean by binary search.
We formalize this result below.

Recall $V_t^P(w)=\min_{s\le t}\langle w,R_P(f_s)\rangle$.
At round $t$, the mechanism returns an index $\widehat s_t\in[t]$ satisfying
\begin{equation*}
    \langle w_t,R_P(f_{\widehat s_t})\rangle
    \le V_t^P(w_t)+\varepsilon
\end{equation*}
simultaneously over all rounds with probability at least $1-\beta$.
Let $\Nsel(k,\mathcal W,\varepsilon,\beta)$ denote the corresponding
sample complexity.
The proof of the lower bound below is in Appendix~\ref{app:winner-only}. 
\begin{restatable}[Winner-only lower bound]{theorem}{selectionlower}
\label{thm:selection-capacity-lower}
There are universal constants $c,C,\varepsilon_0,\beta_0>0$ such that,
for every integer $d\ge1$, every nonempty compact
$\mathcal W\subseteq\Delta_d$, every integer $k\ge2$,
$0<\varepsilon\le\varepsilon_0$, and $0<\beta\le\beta_0$,
\begin{equation}
    \Nsel(k,\mathcal W,\varepsilon,\beta)
    \ge c\min\left\{\sqrt{k},\sqrt{\Mcap_{\mathcal W}(d,C\varepsilon)}\right\}.
    \label{eq:selection-capacity-lower}
\end{equation}
The lower bound holds with weightings fixed before the interaction and with
binary classifiers under the $0$--$1$ loss.
\end{restatable}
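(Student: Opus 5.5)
We reduce from adaptive query estimation, as in Lemma~\ref{lem:convex-reduction}, but now we only see winner identities. The plan is to recover each adaptive query mean to constant accuracy by binary search. Each step of the search consumes two submissions placed at a fresh exposed profile of a $\gamma$-convexly separated sequence. Fix $\gamma=C\varepsilon$ and set $m_\gamma=\min\{k,\Mcap_{\mathcal W}(d,\gamma)\}$. Take a $\gamma$-separated sequence $b_1,\ldots,b_{m_\gamma}$ with certificates $w_j$ from \eqref{eq:dual-certificate}. Set $\rho=\gamma/8$, which gives slack for both a query perturbation and a threshold perturbation. Use the model class $\mathcal F=[0,1]^d\times\mathcal G$ with the same map $\phi((b,g),z)=(1-2\rho)b+\rho(1+g(z))\one$.

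Threshold models use constant functions $g\equiv\tau$, whose population value is known exactly. To compare a query mean $\mu=\E_P[g(Z)]$ with a threshold $\tau$ at exposed index $j$, submit $(b_j,g)$ and then $(b_j,\tau)$, both under the certificate weight $w_j$. Both receive $w_j$-score $(1-2\rho)\langle w_j,b_j\rangle+\rho(1+\cdot)$. As in the reduction, every earlier submission at some $b_s$, $s<j$, is worse by at least $(1-2\rho)\gamma-2\rho\ge\gamma/2$. Hence, once $\varepsilon<\gamma/2$, the returned $\varepsilon$-approximate winner after the second submission is one of these two. If the mechanism names the query model, then $\rho(1+\mu)\le\rho(1+\tau)+\varepsilon$, so $\mu\le\tau+\varepsilon/\rho$. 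If it names the threshold model, then $\mu\ge\tau-\varepsilon/\rho$. With $C$ large, $\varepsilon/\rho=8/C$ is a small constant $\eta$, so each comparison is a noisy comparison with slack $\eta$.

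Binary search over $[-1,1]$ with $L=\lceil\log_2(1/\eta)\rceil+1$ comparisons then localizes $\mu$ within $O(\eta)$. Each comparison uses a fresh exposed index and reuses the same $g$. Because the slack is additive and deterministic, the usual interval-halving argument tolerates it, and the search terminates with error at most $2\eta$. A constant number $L$ of comparisons per query uses $2L$ submissions and $L$ profiles. This yields $\lfloor m_\gamma/(2L)\rfloor$ adaptive queries answered to accuracy $2\eta$ on the validity event, with probability at least $1-\beta$. Every choice depends only on returned indices, so adaptivity is preserved.

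The one point to verify is that each index $b_j$ hosts two submissions while separation is needed only against strictly earlier profiles. Within a pair, the two models share $b_j$ and differ only in the $\one$-perturbation, and later pairs still beat both by $\gamma/2$. Choose $C$ so that $2\eta\le c_3$ and $\beta_0\le c_4$, using Lemma~\ref{fact:aq-lower}. Then $\Nsel\ge c_5\sqrt{\lfloor m_\gamma/(2L)\rfloor}$, and absorbing $L$ into $c$ gives the claim; when $m_\gamma<2L$, the statement follows trivially from $\Nsel\ge1$ by shrinking $c$. The binary-classifier realization follows from Proposition~\ref{prop:classification-realization}, as before. The main obstacle is the bookkeeping of constants, namely ensuring $\varepsilon<\gamma/2$ and keeping $\eta$ small, which the choice of $C$ handles.
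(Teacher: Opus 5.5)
Your proposal is correct and is essentially the paper's proof: the same binary-search reduction, in which a query model and a constant-threshold model share a fresh exposed profile $b_j$ under its certificate $w_j$, with $\rho=\gamma/8$, comparison slack $\varepsilon/\rho$, a constant number of comparisons per query, and Proposition~\ref{prop:classification-realization} for the classifier realization. The only bookkeeping you leave implicit is that the bound $(1-2\rho)\gamma-2\rho=\gamma(3-\gamma)/4\ge\gamma/2$ (and $\rho\le 1/2$) needs $\gamma=C\varepsilon\le 1$, which the paper secures by taking $\varepsilon_0\le 1/C$.
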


\subsection{The Frontier Ladder mechanism}
\label{sec:frontier-ladder}

Our adaptive lower bound's strength rests upon frontier capacity.  For the upper bound, Frontier Ladder
uses the same capacity to control its number of updates, limiting how
often it releases new information.

Recall the Ladder principle~\cite{blum2015ladder}: a new score is released only when the
submitted model improves on the best value seen so far by a margin.
With changing weightings, however, a model that makes little progress under the current
weighting may become best under a later one.  A multi-criteria analogue therefore
cannot decide whether to discard a profile using only the weighting requested when it arrives.

\begin{example}[Current-weight updates miss winners]
\label{ex:current-weight-failure}
Suppose $b_1=(0,1)$ is retained when $b_2=(1,0)$ arrives.
Under weighting $(1,0)$, $b_2$ offers no improvement and fails
any positive update threshold.  Under weighting $(0,1)$, however,
$b_2$ is better by $1$.  Thus discarding profiles based only on
the current weighting can miss a later improvement.
\end{example}

We thus test whether a new profile improves
the retained lower envelope under \emph{any} weighting in $\mathcal W$.

\paragraph{The mechanism.}
Frontier Ladder represents its retained lower envelope by a finite set $A_t$.
For a finite $A\subset\mathbb R^d$, write
\begin{equation*}
    F_A(w)\coloneqq\min_{a\in A}\langle w,a\rangle.
\end{equation*}
A new profile is retained only if it improves this envelope by a
margin under some $w\in\mathcal W$.  To limit the hidden-sample leakage from these decisions, we use the
\emph{sparse-vector} technique~\cite{dwork2009complexity,dwork2014foundations}, which adds noise to threshold comparisons and
whose privacy cost depends mainly on the number of positive reports,
which frontier capacity bounds.  For sensitivity-$1/n$ queries
$q_t$ on the hidden sample, threshold $\tau$, and a cap $L$ on positive reports,
the primitive in Lemma~\ref{fact:sv} guarantees, with tolerance $u$, that
\begin{equation*}
    \text{negative report}\Longrightarrow q_t(S)\le\tau+u,
    \qquad
    \text{positive report}\Longrightarrow q_t(S)\ge\tau-u.
\end{equation*}
Algorithm~\ref{alg:frontier-ladder} applies it to the largest improvement over
the retained envelope, and adds Gaussian noise to each retained profile.
This is extends the principle used by Shaky Ladder to high dimension~\cite{hardt2017climbing}.

\begin{algorithm}[t]
\caption{Frontier Ladder}
\label{alg:frontier-ladder}
\small
\setlength{\abovedisplayskip}{3pt}
\setlength{\belowdisplayskip}{3pt}
\setlength{\abovedisplayshortskip}{0pt}
\setlength{\belowdisplayshortskip}{3pt}
\begin{algorithmic}[1]
\Require Hidden sample $S=(Z_1,\ldots,Z_n)$, weight family $\mathcal W$,
         horizon $k$, target $(\varepsilon,\beta)$
\State Set $u\gets\frac{\varepsilon}{80},
    \,
    \tau\gets4u,
    \,
    L\gets
    1+\min\left\{
        k,\,
        \Mcap_{\mathcal W}\!\left(d,\frac{\varepsilon}{40}\right)
    \right\}$.
\State Set $\eta_0\gets\tfrac{\varepsilon}{256},
    \,
    \delta_0\gets\tfrac{\varepsilon\beta}{256},
    \,
    \zeta_0\gets\tfrac{\varepsilon\beta}{128}$,
and $\sigma\gets
    \tfrac{256c_1}{n\varepsilon}
      \sqrt{dL\log(256c_1/(\varepsilon\beta))}$,
with $c_1$ from Lemma~\ref{lem:gaussian}.
\State Initialize $A_0\gets\{\one\}$ and sparse vector with threshold $\tau$,
cap $L$, and parameters $(\eta_0,\delta_0,\zeta_0)$ (Lemma~\ref{fact:sv}).
\For{$t=1,\ldots,k$}
    \State Receive $(f_t,w_t)$ and compute $\widehat R_t(S)
        \gets
        \frac1n\sum_{i=1}^n\phi(f_t,Z_i)$.
    \State Compute and submit $S\mapsto\Gamma_t(S)$ to sparse vector:
    \begin{equation}
        \Gamma_t(S)
        \gets
        \max_{w\in\mathcal W}
        \left\{
            F_{A_{t-1}}(w)
            -
            \langle w,\widehat R_t(S)\rangle
        \right\}.
        \label{eq:frontier-novelty}
    \end{equation}
    \If{the report is negative} $A_t\gets A_{t-1}$.
    \Else
        \State Draw $\xi_t\sim\mathcal N(0,\sigma^2I_d)$  and set $\widetilde R_t
            \gets
            \operatorname{clip}_{[0,1]^d}
            \bigl(\widehat R_t(S)+\xi_t\bigr),
            \,
            A_t\gets A_{t-1}\cup\{\widetilde R_t\}$.
    \EndIf
    \State Output $\widehat v_t\gets F_{A_t}(w_t)$.
\EndFor
\end{algorithmic}
\end{algorithm}

Write $\mathsf L_{\mathcal W}(k,\varepsilon)\coloneqq
1+\min\{k,\Mcap_{\mathcal W}(d,\varepsilon/40)\}$ for the cap on positive reports
in Algorithm~\ref{alg:frontier-ladder}.
Conditional on the preceding transcript, the query $\Gamma_t$ defined in
\eqref{eq:frontier-novelty} is $1/n$-sensitive,
so it is a valid sparse-vector query.  If sparse vector reaches its cap, the
mechanism returns a fixed value on subsequent rounds; the proof shows that this
does not occur on the utility event.
Aside, note that $\Gamma_t$ can be computed efficiently via linear programming when operating over the full simplex $\mathcal{W}=\Delta_d$.

\begin{restatable}[Frontier Ladder validity]
{theorem}{geometricupper}
\label{thm:geometric-upper}
Let $L\coloneqq\mathsf{L}_{\mathcal W}(k,\varepsilon)$. There is a universal constant
$C\ge 1$ such that, for every nonempty compact
$\mathcal W\subseteq\Delta_d$, every $k\ge 2$, and
$0<\varepsilon,\beta\le 1/10$, Frontier Ladder is valid whenever
\begin{equation}
n
\ge
\frac{C}{\varepsilon^2}
\left[
\sqrt{
    L
    \log\bigl(C/(\varepsilon\beta)\bigr)
    \log\bigl(CkdL/(\varepsilon\beta)\bigr)
    \left(
        d+\log\bigl(CkdL/(\varepsilon\beta)\bigr)
    \right)
}
+
\log(k)
\right].
\label{eq:geom-upper}
\end{equation}
Consequently, the right-hand side upper-bounds
$\Npw(k,\mathcal W,\varepsilon,\beta)$.
\end{restatable}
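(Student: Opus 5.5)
The plan is the standard Shaky Ladder template. We show the released transcript is differentially private, transfer privacy to generalization, and then argue on a single good event. The transcript consists of the sparse-vector reports and the noisy retained profiles $\widetilde R_t$. For the reports, Lemma~\ref{fact:sv} gives $(\eta_0,\delta_0)$-privacy with at most $L$ positives, because each $\Gamma_t$ is $1/n$-sensitive conditional on the past. For the profiles, each positive round releases $\widehat R_t(S)+\xi_t$, and $\widehat R_t$ has $\ell_2$-sensitivity $\sqrt d/n$. The Gaussian mechanism with $\sigma$ as in Algorithm~\ref{alg:frontier-ladder}, combined with advanced composition over at most $L$ releases (Lemma~\ref{lem:gaussian}), gives $(\eta_0,\delta_0)$-privacy overall. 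This is where the factor $\sqrt{dL\log(1/(\varepsilon\beta))}/(n\varepsilon)$ enters. Each output $\widehat v_t=F_{A_t}(w_t)$ is postprocessing, and the analyst's choices of $f_t$ depend only on the transcript.

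Next we invoke the transfer theorem: since $(\eta_0,\delta_0)$ is small relative to $\varepsilon,\beta$, every statistic chosen by the private interaction generalizes. Concretely, with probability $1-\beta/4$, $\sup_{w\in\mathcal W}|\langle w,\widehat R_t(S)-R_P(f_t)\rangle|\le \varepsilon/16$ for all $t$, and equivalently coordinatewise for all $d$ coordinates and all $k$ rounds. The $\log k$ term in~\eqref{eq:geom-upper} handles the nonadaptive part, where a union bound over $kd$ coordinate means is needed with Hoeffding. I would make this clean by requiring $n\gtrsim \log(kd/\beta)/\varepsilon^2$ plus the privacy requirement $\eta_0\lesssim\varepsilon$. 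The Gaussian noise must also be small. On positive rounds, $\|\xi_t\|_\infty\le\varepsilon/16$ for all $t$ with probability $1-\beta/4$ once $\sigma\sqrt{\log(kd/\beta)}\lesssim\varepsilon$. Solving these two conditions for $n$ yields the square-root term, with its product of logs and the factor $(d+\log(\cdot))$. Clipping only decreases distances to points of the cube. The sparse-vector accuracy event (tolerance $u$) holds with probability $1-\beta/4$.

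On the intersection of these events we argue as follows. Every retained point $\widetilde R_s$ lies within $\varepsilon/8$ of $R_P(f_s)$ under every $w$, so $F_{A_t}(w)\ge V_t^P(w)-\varepsilon/8$ up to the initial point $\one$, which is harmless. Conversely, if a round $t$ is rejected, then $\Gamma_t\le\tau+u=5u$. Hence for all $w$ we have $\langle w,\widehat R_t\rangle\ge F_{A_{t-1}}(w)-5u$, and therefore $\langle w,R_P(f_t)\rangle\ge F_{A_t}(w)-\varepsilon/8-\varepsilon/16$. Taking the minimum over $s\le t$ inductively gives $F_{A_t}(w)\le V_t^P(w)+\varepsilon/2$, uniformly in $w$. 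This proves frontier validity, which implies the weighted statement via Fact~\ref{fact:frontier-no-easier}'s postprocessing.

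The main obstacle is showing that the cap $L$ is never reached. On accepted rounds $\Gamma_t\ge\tau-u=3u$. The retained profiles $\widetilde R_t$, perturbed by at most the noise plus the accepted-gap slack, should form an $(\varepsilon/40)$-convexly separated sequence in $[0,1]^d$. This uses the certificate characterization~\eqref{eq:dual-certificate}, since $F_{A}$ is the minimum over the convex hull of $A$. Then $|A_k|-1\le \Mcap_{\mathcal W}(d,\varepsilon/40)$, and trivially it is at most $k$, so the number of positives stays below $L$. The delicate point is that the gap is certified against $\widehat R_t$ but the stored point is $\widetilde R_t=\widehat R_t+\xi_t$. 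Keeping the margin requires $\|\xi_t\|_\infty\le u/2$ rather than $\varepsilon/16$. I would therefore tighten the noise event to $\|\xi_t\|_\infty\le \varepsilon/320$, which only changes constants in $C$. The argument then gives $\Gamma$-gap $\ge 3u-2\cdot u/2=2u=\varepsilon/40$ exactly as needed, and $C$ absorbs all remaining constants.
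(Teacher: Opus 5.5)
Your proof is correct, and its privacy accounting and cap argument are essentially the paper's (Lemma~\ref{lem:packing}). The difference is how you pass from sample to population.

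The paper stays at the sample level. Lemma~\ref{lem:sample-accuracy} gives $|\widehat v_t-Q_t(S)|\le 5u$ for the empirical best-so-far value $Q_t$. The accuracy-transfer Lemma~\ref{fact:transfer} is then applied to the $k$ released scalars $\widehat v_t$, and Lemma~\ref{lem:bias} bridges $\E_{X\sim P^n}Q_t(X)$ and $V_t^P(w_t)$. This bias of an empirical minimum is what the standalone $\log(k)/\varepsilon^2$ term pays for. The uniform-in-$w$ statement needs a separate argument (Proposition~\ref{prop:frontier-release-upper}).

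You instead make the $kd$ linear statistics $\widehat R_{t,j}(S)$ generalize and run the envelope induction directly against $R_P(f_t)$. This needs no bias lemma and yields frontier-value validity in one pass.

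The price is the form of the transfer theorem. You need the \emph{maximum} over $kd$ privately selected statistics to generalize, and that is not Lemma~\ref{fact:transfer} as stated: the worst $(t,j)$ is located using $S$, not released answers. The monitor must therefore select it privately (e.g., via an exponential mechanism), which costs $n\ge c\log(kd/\beta)/\varepsilon^2$. That, rather than a Hoeffding union bound over adaptively chosen models, is what your ``nonadaptive'' requirement really is, and it is dominated by the square-root term of~\eqref{eq:geom-upper}. A plain union bound over $kd$ single-statistic applications would instead need $\delta_0$ of order $\varepsilon\beta/(kd)$, which Algorithm~\ref{alg:frontier-ladder} does not use. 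The paper's route avoids any dependence on the number of transferred queries because its monitor selects by postprocessing the released answers.

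Two smaller points:
\begin{enumerate}
\item Since $\eta_0,\delta_0$ are fixed multiples of $\varepsilon$ and $\varepsilon\beta$, your generalization error is a fixed constant fraction of $\varepsilon$. It is not something you can tune to $\varepsilon/16$ by enlarging $n$, though the slack in the final budget absorbs it.
\item In the cap argument the noise enters only once: $\langle w,\widetilde R_s-\widetilde R_t\rangle\ge 3u-\|\widetilde R_t-\widehat R_t\|_\infty\ge 2u$. The paper's noise level $u$ already suffices, so your tightening to $u/2$ is harmless but unnecessary.
\end{enumerate}
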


The packing bound in~\eqref{eq:sep-packing-upper} gives
$\mathsf{L}_{\mathcal W}(k,\varepsilon)
\le1+\min\{k,(1+\lceil40/\varepsilon\rceil)^d\}$.
For restricted weight families, frontier capacity may be substantially smaller, as we show in Proposition~\ref{prop:frontier-properties} for sparse weightings.

Noise makes the threshold decisions and released profiles stable with respect
to the hidden sample, allowing their accuracy to transfer to the population.
A deterministic variant with rounded retained profiles also admits a transcript-counting
argument, as in the original Ladder~\cite{blum2015ladder}; we prefer randomization here because it yields
the square-root dependence on the number of updates in \eqref{eq:geom-upper}.

The retained set in Frontier Ladder also contains more information than the weighted answer alone:
Proposition~\ref{prop:frontier-release-upper} in Appendix~\ref{app:upper-proofs}
shows that publishing $A_t$ satisfies frontier-value validity
(Definition~\ref{def:frontier-validity}) at the same order of sample complexity.

\paragraph{A dimension-free fallback.}
When frontier capacity is large, it is better to ignore the geometry and release the
best empirical value directly using a standard adaptive-query mechanism.  Indeed, conditional
on the preceding transcript, the empirical best-so-far value at round $t$ is
\begin{equation*}
    Q_t(S)
    =
    \min_{s\le t}
    \frac1n\sum_{i=1}^n
    \langle w_t,\phi(f_s,Z_i)\rangle
\end{equation*}
and changes by at most $1/n$ if a single observation in the hidden sample is
replaced.  We can therefore apply a private adaptive-query mechanism and
transfer its sample-accuracy to the population.

\begin{restatable}[Dimension-free fallback validity]{theorem}{genericupper}
\label{thm:generic-upper}
Fix $0<\varepsilon_0,\beta_0\le1/10$.
There is a constant $C\ge1$ such that,
for every nonempty compact $\mathcal W\subseteq\Delta_d$ and every $k\ge2$,
\begin{equation}
    \Npw(k,\mathcal W,\varepsilon_0,\beta_0)
    \le
    C\sqrt{k}.
    \label{eq:generic-upper}
\end{equation}
\end{restatable}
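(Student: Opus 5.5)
The plan is to reduce Theorem~\ref{thm:generic-upper} to a standard private adaptive-query mechanism applied to the empirical best-so-far values $Q_t(S)$, and then transfer accuracy from sample to population by the transfer theorem for differential privacy.

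\textbf{Step 1: the queries are low-sensitivity.} Condition on the public transcript up to round $t$. Then $f_1,\ldots,f_t$ and $w_t$ are fixed, and
\[
Q_t(S)=\min_{s\le t}\frac1n\sum_{i=1}^n\langle w_t,\phi(f_s,Z_i)\rangle .
\]
Each inner average lies in $[0,1]$ because $w_t\in\Delta_d$ and $\phi\in[0,1]^d$. Replacing one $Z_i$ changes each average by at most $1/n$, and a minimum of $1/n$-sensitive functions is again $1/n$-sensitive. So $Q_t$ is a valid adaptive low-sensitivity query, although it is not a linear statistical query.

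\textbf{Step 2: answer privately.} Answer each $Q_t(S)$ with the Gaussian mechanism, $\widehat v_t=Q_t(S)+\mathcal N(0,\sigma^2)$, choosing $\sigma\asymp \sqrt{k\log(1/\delta)}/(n\varepsilon_0')$. Advanced composition over $k$ rounds then makes the whole transcript $(\varepsilon_0',\delta)$-differentially private in $S$. A union bound over the $k$ rounds gives sample accuracy $\max_t|\widehat v_t-Q_t(S)|\lesssim\sigma\sqrt{\log(k/\beta_0)}$. At fixed $\varepsilon_0,\beta_0$ this is $O(\sqrt{k\log k}/n)$; the $\log$ factors will be handled in Step~4.

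\textbf{Step 3: transfer from sample to population.} This is the main obstacle. The standard transfer theorem (Bassily et al.) controls $|q(S)-q(P)|$ where $q(P)=\E_{S'}[q(S')]$. But our target is $V_t^P(w_t)=\min_s\langle w_t,R_P(f_s)\rangle$, which differs from $\E_{S'}[Q_t(S')]$ because the minimum does not commute with expectation.

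I would avoid this by applying transfer to the linear components, which are genuine statistical queries. Differential privacy is closed under postprocessing, so every pair $(f_s,w_t)$ with $s\le t$ chosen from the private transcript satisfies
\[
\Bigl|\tfrac1n\textstyle\sum_i\langle w_t,\phi(f_s,Z_i)\rangle-\langle w_t,R_P(f_s)\rangle\Bigr|\le\varepsilon_0/2
\]
with high probability. The monitor argument of Bassily et al.\ handles a single adversarially chosen query out of the set of size at most $k^2$, with only a logarithmic loss. The minimum is $1$-Lipschitz in $\ell_\infty$, so this componentwise bound gives $|Q_t(S)-V_t^P(w_t)|\le\varepsilon_0/2$ for all $t$.

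Transfer needs $\varepsilon_0'\asymp\varepsilon_0$ and $\delta\lesssim\varepsilon_0\beta_0/k^2$, together with $n\gtrsim\log(k/\beta_0)/\varepsilon_0^2$. Combining with Step~2 yields $n\gtrsim\sqrt{k}\,\mathrm{polylog}(k)$.

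\textbf{Step 4: remove the logarithms.} To obtain exactly $C\sqrt{k}$, I would replace the Gaussian mechanism with the optimal constant-accuracy adaptive statistical query mechanism of Steinke--Ullman, which uses $O(\sqrt{k})$ samples at fixed accuracy and confidence. I would apply it to the linear queries $\langle w_t,\phi(f_s,\cdot)\rangle$ and then take minima by postprocessing.

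The obstacle is that each round introduces up to $t$ new pairs, for $k^2$ queries in total. One way around this is to note that only the pairs $(f_t,w)$ for the new model and $(f_s,w_t)$ for the new weight are new, and to answer $Q_t$ itself as a low-sensitivity query using mechanisms known to be tight at $\Theta(\sqrt k)$ for sensitivity-$1/n$ queries. Those mechanisms, such as Gaussian noise combined with the Bassily et al.\ transfer for low-sensitivity queries, carry the same $\sqrt k$ rate. For them, the comparison between $\E_{S'}Q_t(S')$ and the population minimum follows because $Q_t(S')$ is an empirical minimum at fixed queries, and so is within $O(\sqrt{\log k/n})$ of the population minimum. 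Fixed $\varepsilon_0,\beta_0$ then absorb the remaining constants into $C$.
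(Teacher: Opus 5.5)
\textbf{There is a genuine gap.} By your own account, Steps~1--3 give only $\sqrt{k}\,\mathrm{polylog}(k)$, so everything rests on Step~4. There the key claim is false: Gaussian noise combined with the low-sensitivity transfer theorem does not give $C\sqrt{k}$.

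Lemma~\ref{fact:transfer} requires sample accuracy simultaneously over all $k$ rounds. Even with constant privacy parameters, the Gaussian scale is $\sigma\asymp\sqrt{k}/n$, and $\max_{t\le k}|\xi_t|$ is of order $\sigma\sqrt{\log k}$. This route therefore needs $n$ of order $\sqrt{k\log k}$, which is exactly the extra $\sqrt{\log k}$ the theorem is meant to remove (compare \eqref{eq:generic-upper-gaussian}). Nor does Steinke--Ullman supply a log-free mechanism: in this paper that reference is the $\Omega(\sqrt{k})$ lower bound of Lemma~\ref{fact:aq-lower}, not an algorithm.

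The missing ingredient is a mechanism for adaptively chosen $\Delta$-sensitive (not only linear) queries whose error is bounded \emph{with probability one} by $C\Delta\sqrt{k\log(1/\delta)}/\eta$. This is Dagan--Kur's bounded-noise mechanism, Lemma~\ref{fact:bounded-noise}. The paper proceeds in three steps:
\begin{itemize}
\item it releases $Q_t(S)$ through this mechanism at the constant parameters $(\eta,\delta)=(\varepsilon_0/128,\varepsilon_0\beta_0/128)$;
\item it applies Lemma~\ref{fact:transfer} with $a=\varepsilon_0/2$ and $b=\beta_0/2$;
\item it finishes with the empirical-minimum bias bound $\sqrt{\log(2k)/(2n)}$ of Lemma~\ref{lem:bias}, which is the comparison in your last sentence.
\end{itemize}
This mechanism also needs $\delta\ge\delta_*(k)$, so it applies only for $k\ge k_0(\varepsilon_0,\beta_0)$. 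The remaining finitely many values of $k$ are absorbed into $C$ using the Gaussian bound.

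\textbf{Step~3 as parameterized also blocks a log-free rate.} Requiring $\delta\lesssim\varepsilon_0\beta_0/k^2$ puts $\sqrt{\log(1/\delta)}\asymp\sqrt{\log k}$ into the error of any $(\eta,\delta)$-private mechanism, bounded-noise or not. The argument must keep $\delta$ constant in $k$, paying for the many queries only through conditions like $n\gtrsim\log k/\varepsilon_0^2$, which $\sqrt{k}$ dominates. The detour through linear components is unnecessary in any case. The min-versus-expectation discrepancy you flag as the main obstacle costs only $O(\sqrt{\log k/n})$, which is negligible once $n\asymp\sqrt{k}$.
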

We obtain the $\mathcal{O}(\sqrt{k})$ rate by combining the transfer theorem of
Bassily et al.~\cite{bassily2016stability} with Dagan--Kur's bounded-noise
mechanism (Lemma~\ref{fact:bounded-noise})~\cite{dagan2022bounded}, which uses a specially chosen noise
density with bounded support.
This removes the extra $\sqrt{\log k}$ factor incurred by controlling the largest
of $k$ Gaussian perturbations under the same transfer argument.
Appendix~\ref{app:upper-proofs} gives the proof and, for completeness, the Gaussian bound for
general accuracy and confidence targets.

\subsection{Phase diagram}

We now specialize the statistical bounds to the full simplex
$\mathcal W=\Delta_d$, i.e., users may combine evaluation criteria arbitrarily.  We fix 
accuracy and failure probability to make the dependence on $k$ and $d$ clearer.

\begin{corollary}
\label{thm:phase-diagram}
There are universal constants
$\varepsilon_0,\beta_0,c,C>0$ such that, for every $k\ge4$ and
$d\ge2$, we have
\begin{empheq}[left=\empheqlbrace]{align}
    \Npw(k,d,\varepsilon_0,\beta_0)
    &\ge
    c\left[
      \log k+
      \min\{\sqrt{k},\exp(cd)\}
    \right],\\
    \Npw(k,d,\varepsilon_0,\beta_0)
    &\le
    C\min\left\{
      \exp(Cd)(\log k+d),\,
      \sqrt{k}
    \right\}.
\end{empheq}
\end{corollary}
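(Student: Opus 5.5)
The corollary follows by specializing the general bounds to $\mathcal W=\Delta_d$ at fixed accuracy and confidence. The lower bound comes from Theorem~\ref{thm:capacity-lower} combined with Lemma~\ref{lem:qary-code}. The upper bound comes from Theorem~\ref{thm:geometric-upper} combined with the packing estimate~\eqref{eq:sep-packing-upper}, and from Theorem~\ref{thm:generic-upper}. Throughout, the fixed accuracy and confidence targets are chosen once and for all, small enough that every cited result applies.

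\emph{Lower bound.} Let $c,C,\varepsilon_0',\beta_0'$ be the constants of Theorem~\ref{thm:capacity-lower}.
\begin{itemize}
\item Choose the integer $q\ge2$ as the smallest value with $\frac{1}{48(q-1)^2}\le\dots$; more cleanly, fix $q=2$.
\item Set $\varepsilon_0\le\min\{\varepsilon_0',\,1/(48C)\}$, so that $C\varepsilon_0\le\frac{1}{48}=\frac{1}{48(q-1)^2}$.
\item Frontier capacity is monotone nonincreasing in the margin: a $\gamma$-separated sequence is also $\gamma'$-separated for every $\gamma'\le\gamma$. Hence Lemma~\ref{lem:qary-code} gives $\Mcap_{\Delta_d}(d,C\varepsilon_0)\ge\lfloor 2^{c_0(d-1)}\rfloor$.
\item For $d\ge2$ this is at least $\exp(c'd)$ for a small absolute constant $c'$. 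The only subtlety is the floor when $c_0(d-1)<1$; there the bound $\ge1$ together with the trivial $\min\{\sqrt k,\cdot\}\ge1$ and the $\log k$ term absorbs the constants.
\item Substituting into~\eqref{eq:capacity-lower}, and treating $\log k/\varepsilon_0^2\ge\log k$ as constant-scaled, gives $c[\log k+\min\{\sqrt k,\exp(cd)\}]$ after shrinking $c$.
\end{itemize}

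\emph{Upper bound, first branch.} Apply Theorem~\ref{thm:geometric-upper} with $\varepsilon_0,\beta_0\le1/10$ fixed. By~\eqref{eq:sep-packing-upper} with $\ell_\infty$ packing of the cube, $L\le 1+(1+\lceil 40/\varepsilon_0\rceil)^d\le e^{C'd}$. The logarithmic factors are then controlled as follows:
\begin{itemize}
\item $\log(C/(\varepsilon_0\beta_0))$ is a constant.
\item $\log(CkdL/(\varepsilon_0\beta_0))\lesssim \log k+d$, using $\log d\le d$ and $\log L\lesssim d$.
\end{itemize}
The right-hand side of~\eqref{eq:geom-upper} is therefore at most a constant times
\[
\sqrt{e^{C'd}(\log k+d)(d+\log k+d)}+\log k\lesssim e^{C'd/2}(\log k+d).
\]
This is bounded by $\exp(Cd)(\log k+d)$ after enlarging $C$.

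\emph{Upper bound, second branch, and combination.} Theorem~\ref{thm:generic-upper}, applied with the same fixed $(\varepsilon_0,\beta_0)$, gives $\Npw\le C\sqrt k$. Since both upper bounds hold, $\Npw$ is at most their minimum. Finally, take the constants as the min/max over the choices above.

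The main obstacle is bookkeeping rather than any real difficulty. The accuracy targets must be compatible across all the cited results: the upper-bound theorems require $\varepsilon,\beta\le1/10$, and the lower bound requires $\varepsilon_0\le\varepsilon_0'$ together with the margin condition $C\varepsilon_0\le1/48$. One pair $(\varepsilon_0,\beta_0)$ must satisfy all of these at once. The floor in Lemma~\ref{lem:qary-code} for small $d$ also has to be handled, as described in the lower-bound step.
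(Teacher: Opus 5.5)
Your proposal is correct and follows the paper's proof. For the lower bound you feed Lemma~\ref{lem:qary-code} with $q=2$ into Theorem~\ref{thm:capacity-lower}; for the upper bound you feed the cube packing bound on $\mathsf L_{\Delta_d}(k,\varepsilon_0)$ into Theorem~\ref{thm:geometric-upper} and combine it with Theorem~\ref{thm:generic-upper}, while your explicit choice $C\varepsilon_0\le 1/48$, margin monotonicity, and small-$d$ floor handling just spell out what the paper leaves to ``adjusting universal constants.''
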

\begin{proof}
For the lower bound, Lemma~\ref{lem:qary-code} with $q=2$ gives
$\Mcap_{\Delta_d}(d,C\varepsilon_0)\ge\lfloor2^{c_0(d-1)}\rfloor$.
Plugging into Theorem~\ref{thm:capacity-lower} proves the lower bound
after adjusting universal constants.
For the upper bound, \eqref{eq:generic-upper} gives the second term.
For Frontier Ladder, the packing bound in Proposition~\ref{prop:frontier-properties} gives
$\mathsf{L}_{\Delta_d}(k,\varepsilon_0)\le 1+C^d$ for an absolute $C>1$;
plugging this into Theorem~\ref{thm:geometric-upper} and absorbing
polynomial factors in $d$ into the exponential gives the first term.
Taking the better upper bound proves the result.
\end{proof}

For $d$ at least a sufficiently large constant times $\log k$, these bounds
give $\Theta(\sqrt{k})$, matching general adaptive
estimation at fixed accuracy~\cite{steinke2015interactive}.
Every fixed dimension instead gives $\Theta_d(\log k)$ dependence on the
number of submissions.  Between these regimes, the adaptive cost is exponential
in $d$, up to constants in the exponent and logarithmic factors, until it reaches
the $\sqrt{k}$ scale.  Appendix~\ref{app:variable-accuracy} discusses the
variable-accuracy bounds.
Combining Theorem~\ref{thm:selection-capacity-lower} with
Lemma~\ref{lem:qary-code} likewise gives the high-dimensional
$\sqrt{k}$ lower bound for winner-only feedback at sufficiently small
fixed accuracy and failure probability.

For general weight families, the bounds reveal a common square-root
dependence on frontier capacity.  At the fixed accuracy and failure probability
above, combining
Theorems~\ref{thm:capacity-lower}, \ref{thm:geometric-upper}, and
\ref{thm:generic-upper} gives, for $k\ge4$ and any nonempty compact
$\mathcal W\subseteq\Delta_d$, that
\begin{empheq}[left=\empheqlbrace]{align}
    \Npw(k,\mathcal W,\varepsilon_0,\beta_0)
    &\ge c\left[\log k+
        \sqrt{\min\{k,\Mcap_{\mathcal W}(d,C_0\varepsilon_0)\}}\right],
        \label{eq:capacity-comparison}\\
    \Npw(k,\mathcal W,\varepsilon_0,\beta_0)
    &\le C\min\left\{
        \log k+\sqrt{\Mcap_{\mathcal W}(d,\varepsilon_0/40)
            \log(kd)\bigl(d+\log(kd)\bigr)},\,
            \sqrt{k}
            \right\},
        \label{eq:capacity-comparison-upper}
\end{empheq}
where $c,C,C_0>0$ are universal constants.
The remaining gaps are the different capacity margin scales and the factor
$\sqrt{\log(kd)\bigl(d+\log(kd)\bigr)}$ in the upper bound;
its dimension dependence comes from privatizing full evaluation profiles.
Whether weighted answers alone can avoid this cost remains open.

Restricting the supported weightings can make both capacities substantially
smaller than for the full simplex.
For the weight family $\mathcal W=\{e_1,\ldots,e_d\}$, corresponding
to comparisons on one criterion per round,
$\Mcap_{\mathcal W}(d,\gamma)=1+d\lfloor1/\gamma\rfloor$ for
$0<\gamma\le1$: each coordinate permits at most $\lfloor1/\gamma\rfloor$
record decreases of at least $\gamma$, and lowering coordinates one at a
time attains the bound.
Thus, at fixed accuracy, the adaptive lower bound for these weightings
reaches the $\sqrt{k}$ scale only when $d$ is of order $k$, compared with
order $\log k$ when arbitrary convex combinations are allowed.

\section{Experiments on Large Language Model Multi-Task Benchmarks}
\label{sec:experiments}
\label{sec:real-llm-experiment}

We compare scalar and multi-criteria feedback in controlled attacks on four LLM benchmark snapshots, measuring reused-to-held-out score gaps and false-winner rates. A false winner is a final model that ranks first on the reused split but not on the held-out split. Inspired by the boosting attack~\cite{blum2015ladder}, we randomly construct many prompt routers between two well-chosen LLMs, then combine the released feedback into one final router via majority voting. This gives a cheap way to vary evaluation profiles, since every router can be scored from archived item scores without model inference.
Appendix~\ref{app:controlled-supplements} complements these experiments with weighted leaderboard simulations, where validity error grows as $d$ permits more separated profiles until the submission budget becomes the limiting factor.\footnote{Code and benchmark data: \url{https://github.com/ysfalh/multi-criteria-benchmarks}.}

\begin{figure}[!t]
    \centering
    \includegraphics[width=\linewidth]{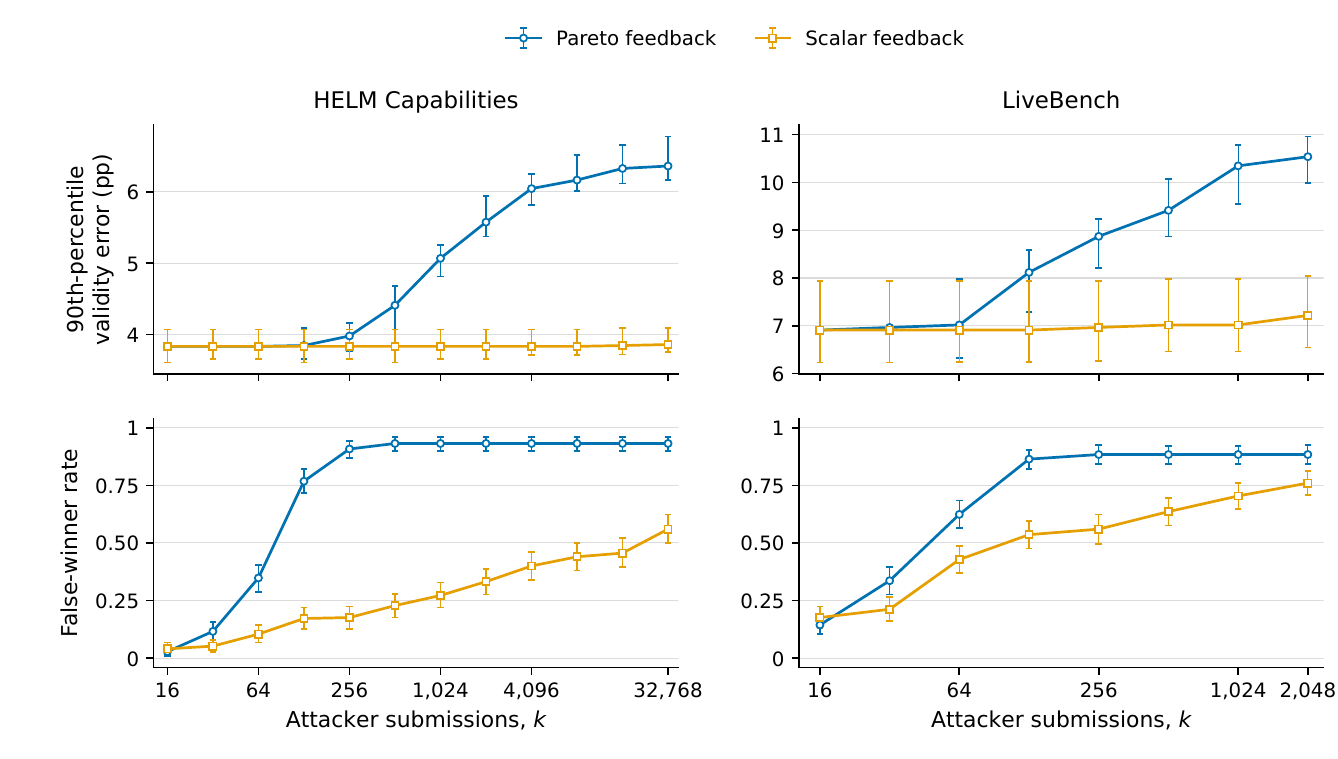}
    \caption{\textbf{Score gaps and false-winner rates across attack budgets.}
    The attacker observes only new nondominated task profiles (\emph{Pareto} feedback)
    or new aggregate records (\emph{Scalar} feedback).
    \emph{Top:} $90$th-percentile leaderboard gaps, comparing
    the published aggregate loss on reused $S$ and the best-so-far aggregate loss on held-out $T$.
    The final router's  loss gaps are in
    Figure~\ref{fig:final-router-gap}.
    \emph{Bottom:} the fraction of trials in which the final router ranks first
    against genuine models on $S$ but not on $T$; ties share first place.
    Bars show  $95\%$ bootstrap intervals over $250$ paired trials.}
    \label{fig:real-budget-results}
\end{figure}

\paragraph{Attack construction.}
Choose two endpoint models $A$ and $B$. For each candidate $t$, a random
binary rule $g_t(x)\in\{-1,+1\}$, fixed before the reused benchmark is
observed, determines which endpoint answers prompt $x$. The resulting router
$R_t$ has loss
\begin{equation}
    \ell_{R_t}(x)
    =
    \frac{\ell_A(x)+\ell_B(x)}{2}
    +
    \frac{g_t(x)}{2}
    \bigl(\ell_A(x)-\ell_B(x)\bigr).
    \label{eq:llm-router}
\end{equation}
Thus routing changes the loss only where the endpoints receive different scores, and different random rules produce different criterion-level profiles.

We submit the same routers in the same order under two restricted feedback interfaces, chosen to align with the premise that leaderboards only need to track the ``best'' model.
\emph{Scalar feedback} releases a new aggregate value only when a router improves
on the best earlier aggregate loss. \emph{Pareto feedback} releases a router's
criterion-level profile unless an earlier submission has no larger loss on every
criterion and strictly smaller loss on at least one. The Pareto analyst retains
released rules whose aggregate loss beats the midpoint of $A$ and $B$, provided
both endpoint profiles have been released; the scalar analyst retains aggregate
record setters. After all candidates, the final router selects, on each prompt,
the endpoint chosen by a majority of the retained rules. This is the only
adaptive submission.

\paragraph{Benchmark setup.}
We use four snapshots with public per-item evaluation records:
(i)~HELM Capabilities (v1.15.0, 2025)~\cite{liang2023holistic}, for HELM's
standardized transparent evaluation on five challenging scenarios;
(ii)~LiveBench (October 2024)~\cite{livebench}, designed to limit contamination
through regularly refreshed questions;
(iii)~HELM Lite (v1.13.0, 2025)~\cite{liang2023holistic}, which applies the
HELM framework to ten scenarios spanning question answering,
reasoning, and translation; and
(iv)~Hugging Face's Open LLM Leaderboard (v1, 2024)~\cite{open-llm-leaderboard},
focusing on open-weight models.

Within each task, we partition examples into a calibration split $C$ used
only to choose $A,B$, a reused split $S$, and a held-out split $T$ that is only
consulted after the interaction. We choose Pareto-nondominated endpoints with
complementary errors on $C$, so routing can produce meaningful criterion
tradeoffs. Table~\ref{tab:calibration-pairs} gives the pairs and selection
diagnostics. Appendix~\ref{sec:llm-supplement} specifies the model pools and
split sizes.

\paragraph{Evaluation.}
For feedback type $c\in\{\mathrm{Pareto},\mathrm{scalar}\}$, let
$F_T^c(t)$ be the best aggregate loss among the first $t$ submissions on $T$,
with equal weight on the benchmark's categories. Let $\widehat F_S^c(t)$ be
the best aggregate loss on $S$ which has been
released by round $t$. We report the published-value error
\begin{equation*}
    E_k^c
    \coloneqq
    \max_{t\le m+k}
    \left|\widehat F_S^c(t)-F_T^c(t)\right|,
\end{equation*}
where $m$ genuine models precede the $k$ attack submissions.
We approximate
the population in Def.~\ref{def:weighted-validity} with the held-out
split and evaluate only the equal-category weighting; the worst error
over all weightings can only be larger.
Table~\ref{tab:llm-summary} reports the $90$th percentile of $E_k^c$ and the
\emph{false-winner rate}. The latter is defined as, for the final router $G_i^c$ at budget $k$ in
trial $i$ and with rankings taken against original models:
\begin{equation*}
    p_k^c
    \coloneqq \frac{1}{N_{\text{trials}}}\sum_{i=1}^{N_{\text{trials}}}
    \mathbf{1}\{\operatorname{rank}_{S_i}(G_i^c)=1,\;
                 \operatorname{rank}_{T_i}(G_i^c)>1\},
    \qquad N_{\text{trials}}=250.
\end{equation*}

\begin{figure}[!t]
    \centering
    \includegraphics[width=\linewidth]{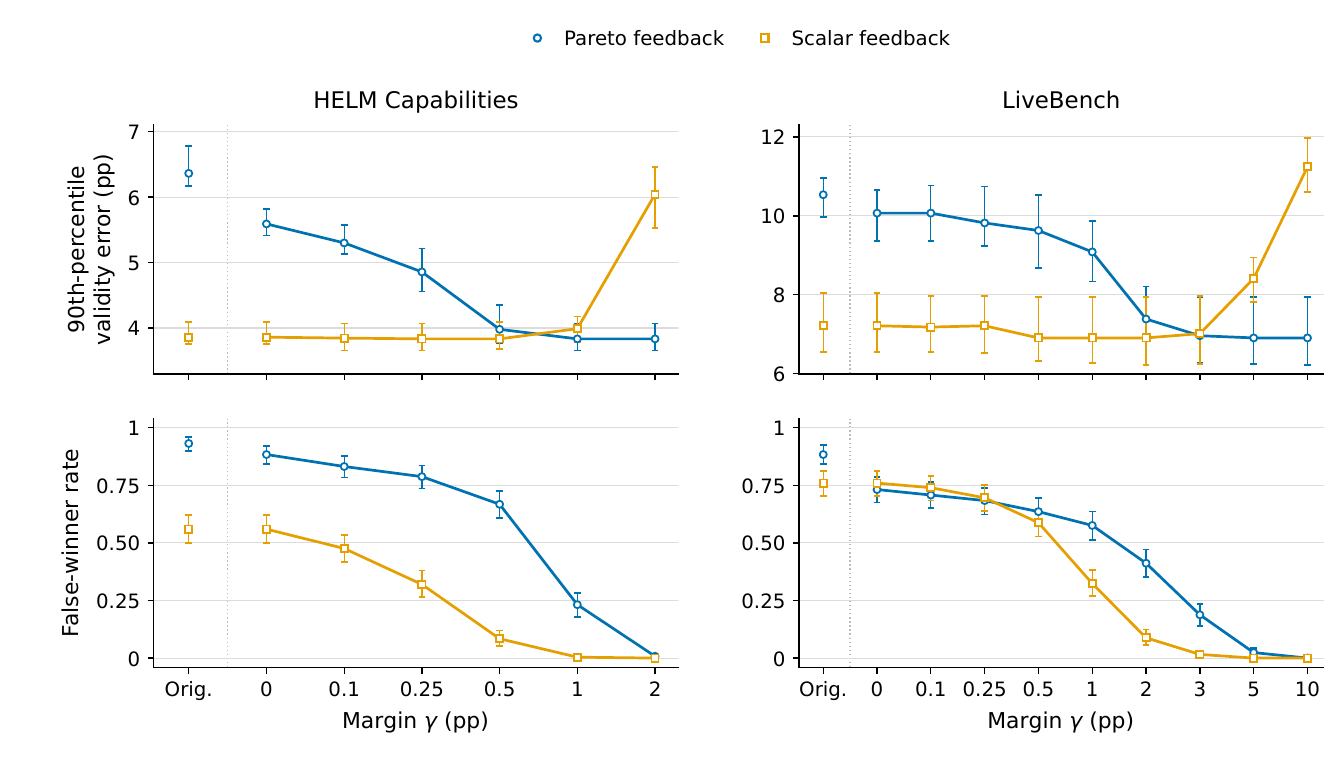}
    \caption{\textbf{Restricting feedback through the geometry of frontier capacity.}
    We further restrict the interfaces in Figure~\ref{fig:real-budget-results}
    by requiring improvement greater than $\gamma$ over every earlier
    submission under some criterion weighting (Pareto feedback) or
    the fixed equal-category weighting (scalar feedback).
    \emph{Top:} $90$th-percentile published-value error $E_k^c$.
    \emph{Bottom:} false-winner rates: first against genuine models on $S$
    but not on $T$, counting ties as first place.
    Bars show $95\%$ bootstrap intervals over $250$ paired trials.
    Attack budgets are $32{,}768$ for HELM Capabilities and $2{,}048$ for
    LiveBench. ``Orig.'' denotes the interfaces without an added margin.}
    \label{fig:llm-feedback}
\end{figure}

\noindent
\textbf{Results.}
In Table~\ref{tab:llm-summary}, Pareto feedback yields a larger
$90$th-percentile leaderboard score gap than scalar feedback on each of the four
benchmarks. False-winner rates range from $88.4\%$ to $100\%$ under Pareto
feedback, compared with $0.8\%$ to $76\%$ under scalar feedback. Averaged
equally across benchmarks, the rates are $95.2\%$ and $48.5\%$, respectively.
No final router ranks first on $T$ in any of the trials.

Figure~\ref{fig:real-budget-results} shows how these effects vary with the
number of submissions. Under Pareto feedback, false-winner rates rise rapidly
and then level off, while the leaderboard score gap continues to grow. The scalar
interface produces fewer false winners at the headline budgets.
Appendix~\ref{sec:llm-supplement} reports
confidence intervals, full rank results, and feedback counts.

\paragraph{Intervening on frontier geometry.}
The original interfaces already restrict feedback to new nondominated
profiles or aggregate records. Inspired by the geometry underlying frontier capacity, Figure~\ref{fig:llm-feedback} further requires
the empirical exposure margin from Eq.~\eqref{eq:exposure-margin} to exceed $\gamma$.
Even with this restriction, Pareto feedback can still permit more spurious progress than scalar feedback:
on HELM Capabilities, at $\gamma=0.5$ percentage points, the false-winner
rate is $66.8\%$ under Pareto feedback and $8.4\%$ under scalar feedback.
Increasing $\gamma$ reduces false-winner rates over the range in
Figure~\ref{fig:llm-feedback}. One limitation of this intervention is that larger margins can also withhold genuine
improvements and increase validity error. Figure~\ref{fig:margin-gates-extended}
extends the range and shows this upturn for Pareto feedback as well.

\section{Discussion \& Open Problems}
\label{sec:discussion}

How reusable, then, are benchmarks with rich feedback? Our work suggests that a benchmark may become harder to reuse as it becomes useful for more decisions. This tension arises because a single aggregate gives developers one score to improve, while users may care about different  criteria. Accommodating these differences introduces additional rankings whose leaders can  guide  development.

Our theory shows that this broader role can substantially increase the data needed for reliable reuse, even when feedback reveals only the identity of an approximately best model and the criteria weightings are fixed in advance. Although ImageNet relative improvements largely persisted on newly collected test data~\cite{recht2019imagenet}, this evidence does not establish reliable reuse when development draws on many task-specific comparisons. Restricting the supported comparisons may reduce the data required, but at the cost of leaving some users unable to compare models on the tasks that matter to them.

How much of this worst-case difficulty arises in ordinary model development? Answering this seems nontrivial and requires tracing which comparisons guide later models and checking whether the resulting gains persist on fresh samples from the  population. If practice remains reliable despite rich feedback, existing (single-criterion) explanations based on model similarity~\cite{mania2019similarity}, structured use of feedback~\cite{zrnic2019natural}, and problem structure~\cite{feldman2019advantages} offer starting points. Can these effects be measured across the relevant task weightings and incorporated into a theory that explains the gap from the worst case? Where rich feedback does produce spurious progress, can mechanisms simpler and more computationally efficient than Frontier Ladder preserve useful comparisons at realistic sample sizes?

Finally, several compelling technical questions remain open. First, what is the scale of frontier capacity $\Mcap_{\mathcal W}(d,\gamma)$ over natural weight families as the margin $\gamma$ shrinks? Even for the simplex $\mathcal W=\Delta_d$, the current upper and lower bounds have different dependence on $\gamma$. Tightening these could lead to tighter variable-accuracy rates. Second, does frontier capacity solely characterize the adaptive sample complexity? Frontier Ladder pays an additional cost for stabilizing full $d$-dimensional profiles, and it is open whether weighted-leaderboard validity can instead be achieved at about $\Theta({\min\{\sqrt k,\sqrt{\Mcap_{\mathcal W}(d,\gamma)}\}})$ adaptive cost.
Third, can such guarantees adapt to the frontier capacity of the feasible or realized profile family rather than its worst-case value over the entire cube, without knowing this complexity in advance?

\paragraph*{Acknowledgements.}
Thanks to Sanmi Koyejo for stimulating discussions.

\printbibliography

\clearpage

\appendix
\counterwithin{figure}{section}
\counterwithin{table}{section}

\phantomsection
\addcontentsline{toc}{section}{Appendices}
\etocsetlocaltop.toc{part}
\etocdepthtag.toc{appendixstart}

\begingroup

\etocsettagdepth{appendixstart}{subsection}
\etocsetnexttocdepth{subsection}
\YAAppendixContents

\endgroup

\section{Adaptive data analysis and differential privacy tools}
\label{app:infrastructure}

A query $q:\mathcal Z^n\to\R$ is $1/n$-sensitive if
$|q(x)-q(x')|\le1/n$ whenever $x,x'$ differ in one entry, and its population value is
$q(P)\mathrel{\coloneqq}\E_{X\sim P^n}[q(X)]$.
For background on differential privacy, we refer to~\cite{dwork2014foundations}.

We first need a classical result that transfers accuracy on the hidden sample to
accuracy for population values when the interaction is differentially private.

\begin{lemma}[Low-sensitivity transfer~{\cite[Theorem~3.3]{bassily2016stability}}]
\label{fact:transfer}
Let $0<a,b<1/10$.  Suppose a mechanism answering $m\ge1$ adaptively chosen $1/n$-sensitive queries $q_t$ with answers $y_t$ is
$(a/64,ab/32)$-differentially private and, for every fixed sample $z\in\mathcal Z^n$, satisfies
\begin{equation*}
    \Prb\!\left[\max_{t\le m}|y_t-q_t(z)|\le a/8\right]\ge1-ab/16.
\end{equation*}
Then, for every $P$ and $S\sim P^n$, we have
\begin{equation*}
    \Prb\!\left[\max_{t\le m}|y_t-q_t(P)|\le a\right]\ge1-b.
\end{equation*}
The first probability is over the interaction's randomness; the second also includes the sample.
\end{lemma}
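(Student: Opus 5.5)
We would prove the lemma by the monitor argument of Bassily et al.~\cite{bassily2016stability}. It combines two ingredients: a generalization statement in expectation for differentially private selection of a low-sensitivity query, and an amplification step that turns a constant-probability failure into a large expected deviation.

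\emph{Step 1 (expectation bound).} We first show the following. Let $\mathcal M$ take $T$ independent samples $S^{(1)},\ldots,S^{(T)}\sim P^n$ and output a pair $(j,q)$, where $q$ is $1/n$-sensitive and the map is $(\epsilon,\delta)$-differentially private with respect to the concatenated dataset. Then
\begin{equation*}
    \bigl|\E[q(S^{(j)})-q(P)]\bigr|\le e^{\epsilon}-1+T\delta.
\end{equation*}
The proof is the standard ghost-sample swap. Write $q(S^{(j)})-q(P)$ as a telescoping sum over the $n$ coordinates of $S^{(j)}$, replacing each $Z_i^{(j)}$ by an independent copy. Privacy bounds the change in the law of the output under each single-entry swap, and sensitivity $1/n$ bounds each increment. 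Summing the $n$ increments gives the bound, with the factor $T$ on $\delta$ coming from a union over the index $j$.

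\emph{Step 2 (monitor).} Suppose, for contradiction, that with probability greater than $b$ some $t\le m$ has $|y_t-q_t(P)|>a$. Run $T\approx 1/b$ independent copies of the interaction, the $j$-th copy on its own sample $S^{(j)}$, with the analyst fixed in each copy. Collect the resulting queries together with their negations, so that every candidate has the form $\pm q_t^{(j)}$. The monitor then selects, via the exponential mechanism with privacy $a/64$, the candidate that approximately maximizes $q(P)-y$. It uses score $\pm(q_t^{(j)}(P)-y_t^{(j)})$, which is data-independent given the transcript; equivalently, one may score by $\pm(q_t^{(j)}(S^{(j)})-y_t^{(j)})$, which has sensitivity $1/n$.

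The monitor is differentially private by two observations. Each copy touches only its own sample, so parallel composition keeps the $T$ copies $(a/64,ab/32)$-private as a whole. Postprocessing composed with the exponential mechanism then costs only a constant-factor loss, which is absorbed by the constants $64$ and $32$ once the exponential mechanism's budget is set appropriately.

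\emph{Step 3 (contradiction).} Since $T\approx1/b$, with constant probability at least one copy fails at the population level. Each copy, by hypothesis, satisfies $|y_t-q_t(S)|\le a/8$ for all $t$ except with probability $ab/16$, so a union bound over the $T$ copies costs only $O(a)$. The exponential mechanism loses additive $O(\log(Tm)/(an))$; this term is dominated in the intended regime, and controlling it is the delicate point, since either $n$ must be large or the scoring must be arranged so this term is at most $a/8$.

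Put together, the selected pair satisfies
\begin{equation*}
    \E\bigl[q(P)-q(S^{(j)})\bigr]\gtrsim a-\tfrac a8-\tfrac a8-O(a)\cdot\text{small},
\end{equation*}
which exceeds the Step~1 bound $e^{a/64}-1+T\cdot ab/32\approx a/64+a/32$. This contradiction proves the lemma.

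The main obstacle is the constant bookkeeping in Step~3. The probabilities of the bad events (no copy fails, some copy is inaccurate on its sample, the exponential mechanism underperforms) must all be bounded by constants times $a$ in expectation, small enough to beat the $a/64+a/32$ generalization slack. We would handle this by choosing $T=\lfloor1/b\rfloor$, using $(1-b)^T\le e^{-1}$, and bounding every deviation by $1$ on the bad events.
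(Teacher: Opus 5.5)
\textbf{Gap in the selection step.} The paper does not prove this lemma. It imports it from Bassily et al. Your overall structure is their argument: the multi-sample expectation bound of Step~1, parallel composition over $T\approx1/b$ independent copies, a monitor, and a contradiction. The gap is in how the monitor selects a candidate.

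You route the selection through the exponential mechanism and leave its additive error of order $\log(Tm)/(an)$ uncontrolled, conceding that this needs ``$n$ large''. The lemma has no hypothesis on $n$, so an argument requiring $n\gtrsim\log(Tm)/a^2$ does not prove it. A data-dependent exponential mechanism would also spend a second $a/64$ of privacy that the stated constants do not budget for. Your fallback score $\pm(q_t^{(j)}(S^{(j)})-y_t^{(j)})$ is not equivalent to the population score. It searches for large \emph{sample} error, which is at most $a/8$ by hypothesis, so it cannot locate the population failure.

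\textbf{The fix.} The monitor is only an analysis device, so it may know $P$. It computes $|q_t^{(j)}(P)-y_t^{(j)}|$ from the transcripts alone and outputs the exact maximizer, signed so that $q(P)-y\ge0$. This is postprocessing of transcripts that are jointly $(a/64,ab/32)$-private with respect to the concatenated sample. It therefore costs no privacy and introduces no $n$-dependent error.

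The bookkeeping then closes with room to spare:
\begin{itemize}
\item By $1/n$-sensitivity, $|q(P)-q(S^{(j)})|\le1$ always.
\item With $T=\lfloor1/b\rfloor$, some copy fails at the population level with probability at least $1-e^{-bT}\ge1-e^{-0.9}>0.59$. Your claim $(1-b)^T\le e^{-1}$ can fail, for example for $b$ just above $1/11$.
\item Some copy is sample-inaccurate with probability at most $a/16$.
\end{itemize}
Hence
\[
    \E\bigl[q(P)-q(S^{(j)})\bigr]
    \ge\tfrac{7a}{8}\bigl(0.59-\tfrac{a}{16}\bigr)-\tfrac{a}{8}\cdot0.41-\tfrac{a}{16}
    >\tfrac{a}{3},
\]
which contradicts the Step~1 bound $e^{a/64}-1+Tab/32<a/16$.
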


For the dimension-free fallback, we use prior work's bounded noise technique to control the largest
error over all adaptive releases without an additional logarithmic factor in
the number of queries.

\begin{lemma}[Bounded-noise adaptive releases~{\cite[Theorem~1]{dagan2022bounded}}]
\label{fact:bounded-noise}
There is an absolute constant $C\ge1$ such that the following holds.
Let $k\ge3$, $n\ge1$ be integers, $\Delta>0$, $0<\eta\le1$, and
\begin{equation*}
    \delta_*(k)\coloneqq
    \exp\!\left(-\frac{k}{(\log k)^2(\log\log k)^4}\right)
    \le\delta\le\frac12.
\end{equation*}
There is an $(\eta,\delta)$-differentially private mechanism answering $k$
adaptively chosen queries $q_t:\mathcal Z^n\to\R$ satisfying
$|q_t(x)-q_t(x')|\le\Delta$ whenever $x,x'$ differ in one entry.
For every fixed sample $z\in\mathcal Z^n$ and every adaptive analyst,
its answers $y_t$ satisfy, with probability one,
\begin{equation*}
    \max_{t\le k}|y_t-q_t(z)|
    \le\frac{C\Delta\sqrt{k\log(1/\delta)}}{\eta}.
\end{equation*}
\end{lemma}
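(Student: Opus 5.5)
The statement is \cite[Theorem~1]{dagan2022bounded}; I would reprove it with a \emph{bounded} noise distribution whose log-density is smooth in the interior, and control privacy through the composed privacy-loss random variable rather than through per-query pure-DP guarantees. Concretely, take the base density $p(x)\propto\exp\!\bigl(-\tfrac{1}{1-x^2}\bigr)$ on $(-1,1)$, fix a scale $R\coloneqq C\Delta\sqrt{k\log(1/\delta)}/\eta$, and answer each query by $y_t\coloneqq q_t(z)+R\,\xi_t$ with $\xi_t\sim p$ drawn independently. The accuracy claim is then immediate and holds with probability one, since $|\xi_t|<1$ gives $|y_t-q_t(z)|\le R$ for every $t$ and every analyst. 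All the work goes into proving $(\eta,\delta)$-differential privacy under adaptive composition.

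For privacy, fix neighbouring samples $x,x'$ and write $h_t\coloneqq(q_t(x)-q_t(x'))/R$, so that $|h_t|\le\Delta/R$; here $h_t$ may depend on earlier answers. The privacy loss of round $t$ is $\ell_t\coloneqq\log p(\xi_t)-\log p(\xi_t+h_t)$. I would show that on a \emph{good} region $G_s\coloneqq\{|x|\le1-s\}$ the function $\psi=\log p$ has derivatives $|\psi'|\lesssim s^{-2}$ and $|\psi''|\lesssim s^{-3}$. A second-order Taylor expansion then gives three facts. First, $\E[\ell_t\mid\mathcal F_{t-1}]\lesssim h_t^2\,\E[\psi'(\xi)^2]$, using that $\E[\psi'(\xi)]=0$ by integration by parts since $p$ vanishes at the boundary. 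Second, $|\ell_t|\lesssim |h_t|s^{-2}$ on $G_s$. Third, the Fisher information $\E[\psi'(\xi)^2]$ is a finite absolute constant. Summing over rounds, the losses form a supermartingale-type sum with drift $O(k\Delta^2/R^2)$ and bounded increments. Azuma--Freedman then bounds the total loss by $O\bigl(\sqrt{k\log(1/\delta)}\,\Delta/R+k\Delta^2/R^2\bigr)\le\eta$ outside an event of probability $\delta/2$, for the chosen $R$ and a large enough $C$. This is the standard route from bounded privacy loss to $(\eta,\delta)$-DP, as in advanced composition, applied adaptively.

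The main obstacle is the boundary. Near $\pm1$, $\psi'$ blows up, and for $|\xi_t|>1-|h_t|$ the shifted density can even be zero, which gives infinite loss. I would handle this by letting every round where $\xi_t\notin G_s$ contribute to $\delta$. The mass of $p$ outside $G_s$ is about $\exp(-c/s)$, so a union bound over $k$ rounds requires $k\,e^{-c/s}\le\delta/2$, that is $s\approx 1/\log(k/\delta)$. The increment bound $|h_t|s^{-2}\lesssim(\Delta/R)\log^2(k/\delta)$ must then still be small enough for Freedman's inequality to give the $\sqrt{k\log(1/\delta)}$ rate without extra logarithmic factors. This works only when $\log^2(k/\delta)$ and its iterated-log corrections are dominated by $\sqrt{k/\log(1/\delta)}$. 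That is exactly the requirement $\delta\ge\delta_*(k)=\exp\!\bigl(-k/((\log k)^2(\log\log k)^4)\bigr)$.

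I expect the delicate part to be tuning the density's tail, possibly with a slightly sharper profile than $e^{-1/(1-x^2)}$, together with the variance-sensitive (Bernstein/Freedman) concentration so that the polylogarithmic factors come out exactly as in $\delta_*$. If the plain profile loses a $\log\log$ factor, I would first try replacing it by $\exp(-(1-x^2)^{-a})$ and optimising over $a$.
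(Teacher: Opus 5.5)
The paper does not prove this lemma; it imports it as Theorem~1 of \cite{dagan2022bounded}. Your reproof has a genuine gap: it does not reach the stated range $\delta\ge\delta_*(k)$, and your claim that it does rests on an algebra slip. Your own Freedman condition requires increments $|h_t|s^{-2}\lesssim\eta/\log(1/\delta)$, with $|h_t|\le\eta/(C\sqrt{k\log(1/\delta)})$ and $s^{-1}\asymp\log(k/\delta)$. This reads $\log(1/\delta)\log^4(k/\delta)\lesssim C^2k$, which permits $\log(1/\delta)$ only up to about $k^{1/5}$. By contrast, $\delta_*(k)$ permits it up to $k/((\log k)^2(\log\log k)^4)$.

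More fundamentally, no concentration inequality can rescue the density $e^{-1/(1-x^2)}$ at the scale $R$ that the accuracy claim forces. Take one query with $h=\Delta/R$ pointing toward $+1$, and the event $E=\{\xi_1>1-u_0\}$ with $u_0=\sqrt{h/(4\eta)}$. Since $-\log p(1-u)=1/(2u)+O(1)$, you get $P_{x'}(E)\approx e^{-2\eta}P_x(E)$ and $P_x(E)\approx u_0^2e^{-\sqrt{\eta/h}}$, where $\eta/h=C\sqrt{k\log(1/\delta)}$. Privacy then forces $\delta\ge P_x(E)-e^{\eta}P_{x'}(E)\approx(1-e^{-\eta})\,u_0^2\exp\bigl(-\sqrt C\,(k\log(1/\delta))^{1/4}\bigr)$. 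At constant $\eta$ this means $\log(1/\delta)\lesssim C^{2/3}k^{1/3}$, up to logarithmic terms. Your fallback does not fix this. For fixed $a$, the profile $\exp(-(1-x^2)^{-a})$ fails the same test once $\log(1/\delta)\gg k^{a/(a+2)}$. Letting $a$ grow concentrates the mass on $|x|\lesssim a^{-1/2}$, so the Fisher information becomes of order $a$ and the error picks up an unbounded factor $\sqrt a$.

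What is needed is a boundary profile with much thinner tails, with $-\log p$ growing at least exponentially in $1/(1-|x|)$. Then the score $\psi'(\xi)$ has tails like $\exp(-T/\mathrm{polylog}\,T)$ while the Fisher information stays bounded. You also need a sharper bound on the privacy loss. Bounding every increment by $|h_t|\sup_{G_s}|\psi'|$ is too crude for essentially any profile. The region $G_s$ must retain all but $\delta/k$ of the mass, which essentially forces $\sup_{G_s}|\psi'|\gtrsim\log(1/\delta)$. Freedman with increments of order $h\log(1/\delta)$ then stalls near $\log(1/\delta)\approx k^{1/3}$. You have to exploit that large increments are rare, for instance through a Chernoff bound on the moment generating function of the truncated loss at $\lambda\asymp\log(1/\delta)/\eta$.

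As written, your argument does establish the lemma when $\log(1/\delta)$ is at most a small power of $k$, in particular for fixed $\delta$ and large $k$. That is the only regime the paper uses: Theorem~\ref{thm:generic-upper} takes $\delta=\varepsilon_0\beta_0/128$ and $k\ge k_0$. Even there, state the drift for the truncated loss $\ell_t\mathbf 1\{\xi_t\in G_s\}$. The untruncated conditional mean is $+\infty$, because $p$ and its shift have different supports. On $G_s$ the linear term vanishes by symmetry of $p$. The quadratic term is $O(h_t^2)$, because $-\psi''\ge0$ and it integrates against $p$ to the Fisher information.
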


To release retained evaluation profiles, we use Gaussian noise.  The next
lemma gives standard controls of the privacy cost and the largest coordinate error over
all releases.

\begin{lemma}[Adaptive Gaussian releases]
\label{lem:gaussian}
There is an absolute constant $c_1\ge1$ such that the following holds.
Suppose an interactive mechanism releases at most $L$ adaptively chosen
vectors $q_t(S)\in\R^p$, each having Euclidean sensitivity at most
$\Delta_2$.  For $0<\eta,\delta,\zeta<1/2$, set
\begin{equation}
    \sigma
    \coloneqq
    \frac{
        c_1\Delta_2\sqrt{L\log(c_1/\delta)}
    }{\eta}.
    \label{eq:gaussian-scale}
\end{equation}
If each released vector is $\widetilde q_t\coloneqq q_t(S)+\xi_t$, where
$\xi_t\sim\mathcal N(0,\sigma^2 I_p)$ independently across releases, then the joint transcript is
$(\eta,\delta)$-differentially private.  Moreover, with probability at
least $1-\zeta$,
\begin{equation}
    \max_{t\le L}
    \|\widetilde q_t-q_t(S)\|_\infty
    \le
    \sigma\sqrt{2\log\!\left(\frac{2pL}{\zeta}\right)}.
    \label{eq:gaussian-max-noise}
\end{equation}
\end{lemma}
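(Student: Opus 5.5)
The plan has two halves: privacy of the transcript, and the maximum coordinate error. For privacy, I would view the interaction as an adaptive composition of at most $L$ Gaussian mechanisms. Each is applied to a query $q_t$ whose Euclidean sensitivity is at most $\Delta_2$, conditional on the preceding transcript. A single Gaussian mechanism with noise scale $\sigma_0=\Delta_2\sqrt{2\log(1.25/\delta_0)}/\eta_0$ is $(\eta_0,\delta_0)$-differentially private. The cleanest route avoids tuning $(\eta_0,\delta_0)$ via advanced composition and instead uses zero-concentrated differential privacy (zCDP).

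\textbf{Privacy via zCDP.} Adding $\mathcal N(0,\sigma^2 I_p)$ to a query of $\ell_2$-sensitivity $\Delta_2$ is $\rho_0$-zCDP with $\rho_0=\Delta_2^2/(2\sigma^2)$. zCDP composes additively under adaptive composition: the analyst may choose $q_{t}$ as a function of $\widetilde q_1,\ldots,\widetilde q_{t-1}$, and the standard adaptive composition theorem for Rényi/zCDP applies. Hence the joint transcript is $\rho=L\Delta_2^2/(2\sigma^2)$-zCDP. I would then convert with the standard fact that $\rho$-zCDP implies $(\rho+2\sqrt{\rho\log(1/\delta)},\delta)$-DP. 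Substituting $\sigma$ from \eqref{eq:gaussian-scale} gives
\begin{equation*}
\rho=\frac{\eta^2}{2c_1^2\log(c_1/\delta)}.
\end{equation*}
The privacy parameter is then at most
\begin{equation*}
\frac{\eta^2}{2c_1^2\log(c_1/\delta)}+\frac{\sqrt2\,\eta}{c_1}\sqrt{\frac{\log(1/\delta)}{\log(c_1/\delta)}}\le \frac{\eta}{2c_1^2}+\frac{\sqrt2\,\eta}{c_1}\le\eta
\end{equation*}
once $c_1\ge2$. This uses $\eta<1/2$, $\log(c_1/\delta)\ge1$, and $\log(1/\delta)\le\log(c_1/\delta)$. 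If fewer than $L$ releases occur, the argument is unchanged, since there are simply fewer terms.

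\textbf{Accuracy.} This part is a union bound. The noise vectors $\xi_t$ are drawn fresh and independently of the past, so conditional on the transcript each coordinate $\xi_{t,j}$ is $\mathcal N(0,\sigma^2)$. It therefore satisfies $\Prb[|\xi_{t,j}|>x]\le2\exp(-x^2/(2\sigma^2))$, and adaptivity of $q_t$ does not affect the noise law. Taking $x=\sigma\sqrt{2\log(2pL/\zeta)}$ and union bounding over the at most $pL$ pairs $(t,j)$ gives a total failure probability of at most $pL\cdot 2\cdot\zeta/(2pL)=\zeta$. Since $\widetilde q_t-q_t(S)=\xi_t$, this yields \eqref{eq:gaussian-max-noise}.

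The main obstacle is only the bookkeeping in the privacy step: justifying adaptive composition, where the query depends on previous noisy outputs, and picking $c_1$ so the zCDP-to-DP conversion lands at $\eta$ rather than a constant multiple of it. Both are standard (Bun--Steinke), and I would cite them rather than reprove them.
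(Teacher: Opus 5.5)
Your proposal is correct and follows the paper's proof essentially step for step. Both use per-release $\rho_0=\Delta_2^2/(2\sigma^2)$ zCDP, additive adaptive composition to $\rho=L\rho_0$, the conversion from $\rho$-zCDP to $(\rho+2\sqrt{\rho\log(1/\delta)},\delta)$-DP, and a Gaussian-tail union bound over the at most $pL$ noise coordinates. Your explicit computation showing that $c_1\ge 2$ already suffices is a small bonus, since the paper only says to take $c_1$ sufficiently large.
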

\begin{proof}
We use the zero-concentrated variant of differential privacy for convenience (zCDP)~\cite{bun2016concentrated}.
A release with covariance $\sigma^2I_p$ and Euclidean sensitivity
$\Delta_2$ is $\rho_0$-zCDP with
$\rho_0\mathrel{\coloneqq}\frac{\Delta_2^2}{2\sigma^2}$.
Adaptive composition of at most $L$ releases is therefore
$\rho$-zCDP with $\rho\mathrel{\coloneqq}L\rho_0$.  A $\rho$-zCDP mechanism is
$\bigl(\rho+2\sqrt{\rho\log(1/\delta)},\delta\bigr)$-differentially
private.  With $\sigma$ as in
\eqref{eq:gaussian-scale}, choosing the absolute constant $c_1$
sufficiently large makes this at most $(\eta,\delta)$.

There are at most $pL$ Gaussian coordinates in the transcript.  A
union bound and the Gaussian tail inequality give
\begin{equation*}
    \Prb\!\left[
        \max_{t\le L}\|\xi_t\|_\infty
        >
        \sigma\sqrt{2\log\!\left(\frac{2pL}{\zeta}\right)}
    \right]
    \le\zeta,
\end{equation*}
which proves \eqref{eq:gaussian-max-noise}.
\end{proof}

The update decisions require only threshold comparisons.  Sparse vector
is a classical technique to make these comparisons privately, with a cost controlled by the cap on
positive reports.

\begin{lemma}[Sparse vector~{\cite[Theorem~3.25 and proof of Theorem~3.24]{dwork2014foundations}}]
\label{fact:sv}
Consider at most $k$ adaptively chosen real-valued queries of sensitivity $1/n$, a fixed threshold $\tau$, and a cap $L\ge1$ on positive reports.  For every $0<\eta,\delta,\zeta<1/2$, there is an $(\eta,\delta)$-differentially private mechanism that halts only upon its $L$-th positive report and such that, with probability at least $1-\zeta$, every processed report satisfies
\begin{equation}
    \text{negative}\ \Longrightarrow\ q_t(S)\le\tau+u,
    \qquad
    \text{positive}\ \Longrightarrow\ q_t(S)\ge\tau-u,
    \label{eq:sv-reports}
\end{equation}
provided
\begin{equation}
    u\ge
    \frac{c_2\sqrt{L\log(c_2/\delta)}}{n\eta}
    \log\!\left(\frac{c_2kL}{\zeta}\right)
    \label{eq:sv-condition}
\end{equation}
for some absolute constant $c_2\ge1$.
\end{lemma}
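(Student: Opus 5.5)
The plan is to follow the proof sketch cited for Dwork--Roth's Theorem~3.25 (AboveThreshold restarted up to $L$ times, with advanced composition). This yields both the privacy guarantee and the accuracy implication \eqref{eq:sv-reports}.

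\emph{The mechanism.} We run $L$ sequential copies of AboveThreshold. Each copy draws a noisy threshold $\widehat\tau=\tau+\mathrm{Lap}(2/(n\eta_1))$. On each incoming query it draws $\nu_t\sim\mathrm{Lap}(4/(n\eta_1))$ and reports positive iff $q_t(S)+\nu_t\ge\widehat\tau$. After a positive report, the copy halts and a fresh copy starts with a new threshold noise. The whole mechanism stops at the $L$-th positive report.

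\emph{Privacy.} The standard AboveThreshold analysis shows that each copy is $(\eta_1,0)$-differentially private for sensitivity-$1/n$ queries. This holds regardless of how many negative reports it issues, since the privacy cost is paid only on the threshold noise and the single positive query. The queries may be chosen adaptively from the transcript, so the $L$ copies compose adaptively. By advanced composition with slack $\delta$, the total is $(\eta,\delta)$-DP once
\[
\eta_1 = \frac{\eta}{c\sqrt{L\log(1/\delta)}}
\]
for an absolute $c$, assuming $\eta<1/2$ so the linear term $L\eta_1(e^{\eta_1}-1)$ is dominated.

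\emph{Accuracy.} At most $k$ queries are processed and at most $L$ thresholds are drawn. A union bound over these at most $k+L\le 2kL$ Laplace variables gives, with probability at least $1-\zeta$,
\[
|\nu_t|\le \frac{4}{n\eta_1}\log(2kL/\zeta)
\quad\text{and}\quad
|\widehat\tau-\tau|\le \frac{2}{n\eta_1}\log(2kL/\zeta).
\]
On this event the two implications in \eqref{eq:sv-reports} follow.
\begin{itemize}
\item A negative report means $q_t(S)<\widehat\tau-\nu_t\le \tau+\frac{6}{n\eta_1}\log(2kL/\zeta)$.
\item A positive report means $q_t(S)\ge\widehat\tau-\nu_t\ge\tau-\frac{6}{n\eta_1}\log(2kL/\zeta)$.
\end{itemize}
Substituting $\eta_1$ shows that any $u\ge 6c\sqrt{L\log(1/\delta)}\log(2kL/\zeta)/(n\eta)$ suffices. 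This is \eqref{eq:sv-condition} after taking $c_2$ large enough to absorb $6c$, $2$, and the switch from $\log(1/\delta)$ to $\log(c_2/\delta)$.

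\emph{Main obstacle.} The step needing the most care is the per-copy privacy claim under adaptively chosen queries with unbounded negative reports. It requires the usual coupling argument: shift the threshold noise by $1/n$ and the positive query's noise by $2/n$. I would cite it directly from Dwork--Roth Theorem~3.23 rather than reprove it. The rest is bookkeeping of constants.
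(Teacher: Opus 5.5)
Your proposal is correct and follows the same route as the paper. Both arguments restart AboveThreshold after each positive report and compose the at most $L$ $(\eta_1,0)$-private copies by advanced composition, which gives the $\sqrt{L\log(1/\delta)}$ factor; the outer logarithm then comes from Laplace tails and a union bound over the at most $k+L$ noise draws. Your direct derivation of both implications from that union-bound event also makes explicit the paper's remark that accuracy of processed reports does not need Dwork--Roth's sparsity assumption.
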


Restarting AboveThreshold after each positive report and composing at most $L$ restarts gives Lemma~\ref{fact:sv}~\cite[Theorems~3.24--3.26]{dwork2014foundations}.  The proof of Theorem~3.24 in~\cite{dwork2014foundations} gives accuracy of processed reports without the sparsity assumption used to prevent early halting.  Composition contributes $\sqrt{L\log(1/\delta)}$; the Laplace tail bound and a union bound over threshold tests contribute the final logarithm in \eqref{eq:sv-condition}, outside the square root.

For our lower bounds, we need the converse statistical limitation:
estimating $m$ adaptive query means at sufficiently small constant error and
failure probability requires at least order $\sqrt m$ samples.

\begin{lemma}[Adaptive-query lower bound~{\cite[Theorem~35]{steinke2015interactive}}]
\label{fact:aq-lower}
Recall that $\Naq(m,\alpha,\beta)$ is the minimax sample size for estimating
the population means of $m$ adaptively chosen $[-1,1]$-valued queries from one
i.i.d.\ sample, with simultaneous error at most $\alpha$ and failure probability
at most $\beta$.

There are absolute constants $c_3,c_4,c_5\in(0,1)$ such that, for every $m\ge1$,
\begin{equation}
    \Naq(m,c_3,c_4)\ge c_5\sqrt m.
    \label{eq:aq-lower}
\end{equation}
\end{lemma}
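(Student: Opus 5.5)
The plan is to obtain \eqref{eq:aq-lower} directly from the interactive fingerprinting lower bound of Steinke and Ullman~\cite{steinke2015interactive}. The work is mostly translation: match their query range, accuracy notion, and range of $m$ to the definition of $\Naq$ used here. No new construction is needed. Throughout I use that $\Naq(m,\alpha,\beta)$ is a minimax quantity over all populations $P$ and all data spaces $\mathcal Z$. A lower bound exhibited for one particular data universe and one adversarial analyst therefore lower-bounds $\Naq$.

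\emph{Step 1 (invoke the cited theorem).} Their Theorem~35 supplies absolute constants with the following property. For every number of rounds $m$ beyond some $m_0$, there is a finite data universe, a distribution on it, and an adaptive analyst issuing $m$ statistical queries $h_t:\mathcal Z\to\{0,1\}$ (or $[0,1]$). No mechanism with fewer than order $\sqrt m$ samples can answer all $m$ queries within a small constant $\alpha'$ of $\E_P[h_t]$ with probability at least $1-\beta'$. The underlying mechanism is an interactive fingerprinting code: with few samples, the analyst eventually identifies which sample points were used, contradicting population accuracy. I would take $\alpha',\beta'$ from that theorem verbatim.

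\emph{Step 2 (change of range).} Given $[0,1]$-valued $h_t$, set $g_t\coloneqq 2h_t-1\in[-1,1]$, so that $\E_P[g_t]=2\E_P[h_t]-1$. Now suppose a mechanism answers the $g_t$ within $\alpha$ with probability at least $1-\beta$. Then $(y_t+1)/2$ answers the $h_t$ within $\alpha/2$, and the analyst can compute this transform from the transcript, so adaptivity is preserved. Hence $\Naq(m,2\alpha',\beta')$ is at least the Steinke--Ullman sample size, and I set $c_3\coloneqq\min\{2\alpha',1/2\}$ and $c_4\coloneqq\min\{\beta',1/2\}$. Monotonicity of $\Naq$ in the accuracy and failure parameters lets me shrink these constants freely. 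If their accuracy guarantee is stated in expectation, or as a per-query rather than simultaneous bound, the simultaneous high-probability requirement here is stronger. A mechanism meeting our definition therefore also meets theirs, after adjusting constants via Markov's inequality if needed.

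\emph{Step 3 (all $m\ge1$).} For $m<m_0$, I would use the trivial bound $\Naq\ge1$, which holds because sample sizes are positive integers. Choosing $c_5\le\min\{1/\sqrt{m_0},c_5'\}$, where $c_5'$ is their constant, makes $c_5\sqrt m\le1$ in this range, and the large-$m$ range follows from Step~1. I expect the main obstacle to be Step~1's bookkeeping: verifying that the cited theorem is genuinely a population-accuracy (statistical) lower bound and not only an empirical-accuracy one. It must also hold against arbitrary, possibly randomized, mechanisms with simultaneous guarantees. If their statement is for sample accuracy only, I would instead cite their statistical-query corollary, which embeds the fingerprinting attack with a uniform distribution over a universe of size order $n$. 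Sample accuracy then fails to transfer, yielding the same $\Omega(\sqrt m)$ bound.
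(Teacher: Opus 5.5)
Your proposal is correct and takes the same approach as the paper, which gives no separate argument and simply imports Steinke--Ullman's Theorem~35; the rescaling $g_t=2h_t-1$, the monotonicity of $\Naq$ in $(\alpha,\beta)$, and the trivial bound $\Naq\ge1$ for $m<m_0$ are exactly the bookkeeping that this citation leaves implicit. One small correction: if the cited guarantee were stated in expectation, the step from your high-probability requirement to it would use clipping the answers to $[-1,1]$ and bounding the error on the failure event, not Markov's inequality, but this only changes constants.
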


Finally, the population value of an empirical-minimum query is the
expectation of a minimum, whereas leaderboard validity concerns the minimum
of population means.  The following lemma bounds this difference and records
the query's sensitivity.
For fixed functions $h_1,\ldots,h_t:\mathcal Z\to[0,1]$, define
$
    Q(S)\mathrel{\coloneqq}\min_{s\le t}\frac1n\sum_{i=1}^nh_s(Z_i),
$
and
$
    v(P)\mathrel{\coloneqq}\min_{s\le t}\E_Ph_s(Z).
$
\begin{lemma}[Empirical-minimum bias]
\label{lem:bias}
For every $t\ge1$, we have
\begin{equation}
    0\le v(P)-\E_{S\sim P^n}Q(S)
    \le\sqrt{\frac{\log(2t)}{2n}}.
    \label{eq:bias}
\end{equation}
Moreover, $Q$ is $1/n$-sensitive.
\end{lemma}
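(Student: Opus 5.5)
Write $\mu_s\coloneqq\E_P h_s(Z)$ and $\widehat\mu_s(S)\coloneqq\frac1n\sum_{i=1}^n h_s(Z_i)$. The plan treats the three claims separately, and each is elementary; only the upper bound needs care with constants.

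\emph{Sensitivity.} Replacing one $Z_i$ by $Z_i'$ changes each empirical mean $\widehat\mu_s$ by at most $1/n$, because $h_s$ takes values in $[0,1]$. The map $(x_1,\ldots,x_t)\mapsto\min_s x_s$ is $1$-Lipschitz in $\ell_\infty$. Hence $|Q(S)-Q(S')|\le 1/n$.

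\emph{Lower inequality $v(P)\ge\E Q(S)$.} For each fixed $s$ we have $Q(S)\le\widehat\mu_s(S)$. Taking expectations gives $\E Q(S)\le\mu_s$. Minimizing over $s$ then gives $\E Q(S)\le v(P)$. This is Jensen's inequality for the concave function $\min$.

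\emph{Upper inequality.} Let $s^\star$ attain $v(P)=\mu_{s^\star}$. Then $\mu_s\ge v(P)$ for every $s$, so
\begin{equation*}
v(P)-Q(S)=\max_{s\le t}\bigl(v(P)-\widehat\mu_s(S)\bigr)\le\max_{s\le t}\bigl(\mu_s-\widehat\mu_s(S)\bigr).
\end{equation*}
It therefore suffices to bound $\E\max_{s\le t}X_s$, where $X_s\coloneqq\mu_s-\widehat\mu_s(S)$. Each $X_s$ is an average of $n$ independent centered variables, each supported in an interval of length $1$. By Hoeffding's lemma, $X_s$ is sub-Gaussian with variance proxy $1/(4n)$.

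I will use the standard maximal inequality for $t$ variables with $\E e^{\lambda X_s}\le e^{\lambda^2/(8n)}$:
\begin{equation*}
\E\max_s X_s\le\frac1\lambda\log\sum_s\E e^{\lambda X_s}\le\frac{\log t}{\lambda}+\frac{\lambda}{8n}.
\end{equation*}
Optimizing over $\lambda$ gives $\sqrt{\log(t)/(2n)}$, which is at most $\sqrt{\log(2t)/(2n)}$. For $t=1$ this yields $0$, which is consistent. The slack between $\log t$ and $\log(2t)$ means the constants can be handled crudely; for instance, one could instead bound $\max_s|X_s|$ over $2t$ signed variables if symmetric control is preferred.

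The main step is this maximal inequality, and it requires the sub-Gaussian variance proxy $1/(4n)$ rather than the cruder $1/n$. Hoeffding's lemma gives exactly that proxy for $[0,1]$-valued summands, so the constant $\tfrac12$ inside the square root comes out correctly.
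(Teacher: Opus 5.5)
Your proof is correct and follows the paper's argument: concavity of the minimum for the lower inequality, the $1/n$-Lipschitz sensitivity argument, and Hoeffding's moment-generating-function bound $e^{\lambda^2/(8n)}$ fed into the log-sum-exp maximal inequality. The only difference is that the paper bounds the two-sided deviation $\max_s|\widehat\mu_s-\mu_s|$ over $2t$ signed variables, which gives $\log(2t)$ directly. Your one-sided reduction via $\mu_s\ge v(P)$ needs only $t$ variables and gives the slightly sharper $\sqrt{\log(t)/(2n)}$.
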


\begin{proof}
Let $\mu_s\mathrel{\coloneqq}\E_Ph_s(Z)$ and
$\overline h_s\mathrel{\coloneqq}\frac{1}{n}\sum_i h_s(Z_i)$.  Concavity of the minimum gives
\begin{equation*}
    \E_{S\sim P^n}Q(S)=
    \E\min_s\overline h_s
    \le\min_s\E\overline h_s
    =\min_s\mu_s
    =v(P).
\end{equation*}
Also,
$
    \min_s\mu_s-\min_s\overline h_s
    \le\max_{s\le t}|\overline h_s-\mu_s|.
$
Hoeffding's inequality gives
$\E\exp(\lambda(\overline h_s-\mu_s))\le
\exp(\lambda^2/(8n))$.  Hence, for every $\lambda>0$,
\begin{align*}
 \E\max_{s\le t}|\overline h_s-\mu_s|
 \le\frac1\lambda\log\!\left(
   \sum_{s\le t}\sum_{\sigma\in\{-1,1\}}
   \E e^{\lambda\sigma(\overline h_s-\mu_s)}
 \right)
 \le\frac{\log(2t)}\lambda+\frac{\lambda}{8n}.
\end{align*}
Optimizing at $\lambda=\sqrt{8n\log(2t)}$ proves \eqref{eq:bias}.  Replacing one sample changes every empirical mean by at most $1/n$, and
$|\min_s a_s-\min_s b_s|\le\max_s|a_s-b_s|$, so $Q$ is $1/n$-sensitive.
\end{proof}

\section{Additional geometry of frontier capacity}
\label{app:geometry}

\subsection{Properties of frontier capacity}
\label{app:frontier-properties}

Recall that $\mathsf P_\infty(K,\gamma)$ is the largest cardinality of a subset
of $K$ with pairwise $\ell_\infty$ distances at least $\gamma$.
The $\ell_1$ covering number of $\mathcal W$ at radius $r>0$ is
\begin{equation*}
    \mathcal N_1(\mathcal W,r)
    \coloneqq\min\left\{|V|:V\subseteq\mathcal W,\,
    \sup_{w\in\mathcal W}\inf_{v\in V}\|w-v\|_1\le r\right\}.
\end{equation*}

\begin{proposition}[Geometric upper bounds]
\label{prop:frontier-properties}
Let $K\subseteq[0,1]^d$ be nonempty, let $\mathcal W\subseteq\Delta_d$ be
nonempty and compact, and let $\gamma>0$.
We have
\begin{equation*}
    \Mcap_{\mathcal W}(K,\gamma)
    \le
    \min\left\{
        \mathsf P_\infty(K,\gamma),\;
        1+\left(1+\frac{2}{\gamma}\right)\mathcal N_1(\mathcal W,\gamma/2)
    \right\}.
\end{equation*}
Moreover,
$\mathsf P_\infty(K,\gamma)\le(1+\lceil1/\gamma\rceil)^d$.
\end{proposition}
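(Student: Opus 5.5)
We prove the three bounds separately, each directly from the certificate characterization \eqref{eq:dual-certificate}. Throughout, fix a $\gamma$-convexly separated sequence $b_1,\ldots,b_m\in K$, and take certificate weights $w_2,\ldots,w_m\in\mathcal W$ with $\langle w_t,b_s-b_t\rangle\ge\gamma$ for all $s<t$.

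\emph{Packing bound.} Take any $s<t$. Since $w_t\in\Delta_d$, Hölder's inequality gives
\[
\gamma\le\langle w_t,b_s-b_t\rangle\le\|w_t\|_1\|b_s-b_t\|_\infty=\|b_s-b_t\|_\infty .
\]
So the points are pairwise $\ell_\infty$-separated by at least $\gamma$. They are also distinct, and together they form a $\gamma$-packing of $K$. Hence $m\le\mathsf P_\infty(K,\gamma)$.

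For the estimate $\mathsf P_\infty(K,\gamma)\le(1+\lceil1/\gamma\rceil)^d$, partition $[0,1]^d$ into a grid. In each coordinate, use the half-open cells $[j\gamma,(j+1)\gamma)$ for $j=0,\ldots,\lceil1/\gamma\rceil$; these $1+\lceil1/\gamma\rceil$ intervals cover $[0,1]$. Two points in the same product cell are at $\ell_\infty$-distance strictly less than $\gamma$. So a $\gamma$-packing has at most one point per cell, and pigeonhole gives the bound.

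\emph{Covering bound.} Fix a $\gamma/2$-cover $V\subseteq\mathcal W$ in $\ell_1$ with $|V|=\mathcal N_1(\mathcal W,\gamma/2)$. Assign each $t\ge2$ a nearest cover point $v(t)\in V$, so $\|w_t-v(t)\|_1\le\gamma/2$. Since profiles lie in $[0,1]^d$, replacing $w_t$ by $v(t)$ changes $\langle\cdot,b\rangle$ by at most $\gamma/2$. Therefore, for $s<t$,
\[
\langle v(t),b_s\rangle-\langle v(t),b_t\rangle\ge\gamma-\gamma=0,
\]
which only shows that the cover vector weakly prefers later points. This is too weak, and it is the main obstacle: the cover radius eats the full margin.

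The fix is to compare within one class and use the full certificate for the later index. Take $s<t$ in the same class $v$. We have $\langle v,b_t\rangle\le\langle w_t,b_t\rangle+\gamma/2$ and $\langle v,b_s\rangle\ge\langle w_t,b_s\rangle-\gamma/2$, which again gives only $\ge0$.

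So refine the classes. Bucket the scalar $\langle v,b_t\rangle\in[0,1]$ into $1+2/\gamma$ intervals of length $\gamma/2$. Within a bucket, consecutive members would need a strict decrease to give a contradiction, so the plan is to get the drop from the recentred certificate. The key observation is that the error from swapping $w_t$ for $v$ should be charged once, not twice. For this, bound $|\langle w_t-v,b_s-b_t\rangle|\le\|w_t-v\|_1\|b_s-b_t\|_\infty$ and exploit a cancellation after shifting both profiles by a common vector: since $\langle w_t-v,\one\rangle=0$, one may replace $b_s-b_t$ by $b_s-b_t-c\one$ for any scalar $c$. The $\ell_\infty$ norm of $b_s-b_t$ modulo constants is at most its half-oscillation, which is at most $1/2$. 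Hence
\[
\langle v,b_s-b_t\rangle\ge\gamma-\tfrac{\gamma}{2}\cdot\tfrac12=\tfrac{3\gamma}{4}.
\]

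Consequently, within one class $v$, the values $\langle v,b_t\rangle\in[0,1]$ strictly decrease by at least $3\gamma/4\ge\gamma/2$ along the sequence. Each class therefore holds at most $1+2/\gamma$ indices. Summing over the classes of $V$, and adding $1$ for $b_1$, gives $m\le1+(1+2/\gamma)\mathcal N_1(\mathcal W,\gamma/2)$.

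Taking the minimum of the packing and covering bounds, and then the supremum over all separated sequences, proves the proposition.
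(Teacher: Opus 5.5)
Your argument is the paper's: Hölder with $\|w_t\|_1=1$ for the packing bound, a grid pigeonhole for the cube estimate, and assigning each certificate $w_t$ to a point $v$ of a $\gamma/2$-net so that profiles in one class drop by at least $\gamma/2$ under $v$.

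One intermediate claim is false, however. You assert that $b_s-b_t$ has half-oscillation at most $1/2$. But $b_s-b_t\in[-1,1]^d$ can have oscillation $2$, so its half-oscillation is only bounded by $1$. The bound $\langle v,b_s-b_t\rangle\ge 3\gamma/4$ therefore fails. For example, with $d=2$ and $\gamma\le1$, take $b_s=(1,0)$, $b_t=(0,1)$, $w_t=((1+\gamma)/2,(1-\gamma)/2)$ and $v=(1/2+\gamma/4,\,1/2-\gamma/4)$. Then $\langle w_t,b_s-b_t\rangle=\gamma$ and $\|w_t-v\|_1=\gamma/2$, yet $\langle v,b_s-b_t\rangle=\gamma/2$.

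The fix is exactly the paper's one-line step, with no shift by constants:
$\langle v,b_s-b_t\rangle\ge\langle w_t,b_s-b_t\rangle-\|w_t-v\|_1\|b_s-b_t\|_\infty\ge\gamma-\gamma/2=\gamma/2$.
This $\gamma/2$ drop is all you use afterwards, so the count of at most $1+2/\gamma$ indices per class and the final bound stand. You should also delete the initial attempt that charges the cover error twice and the unused bucketing paragraph.
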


\begin{proof}
Let $b_1,\ldots,b_m\in K$ be
$\gamma$-convexly separated over $\mathcal W$.
For the packing bound, \eqref{eq:dual-certificate} gives
$\gamma\le\langle w_t,b_s-b_t\rangle\le\|b_s-b_t\|_\infty$ for $s<t$,
so $m\le\mathsf P_\infty(K,\gamma)$.
Partitioning each coordinate interval $[0,1]$ into $1+\lceil1/\gamma\rceil$
intervals of length strictly smaller than $\gamma$ gives the stated cube bound.

For the covering bound, assign each certificate $w_t$, $t\ge2$, to a point $v$
of a smallest $\gamma/2$-net of $\mathcal W$ in $\ell_1$.
If $s<t$ are assigned to the same $v$, then
$\langle v,b_s-b_t\rangle\ge\gamma-\|w_t-v\|_1\ge\gamma/2$.
Since the scalar scores lie in $[0,1]$, each net point accounts for at most
$1+2/\gamma$ profiles; the extra $1$ accounts for $b_1$.
Taking the supremum over such sequences proves the capacity bound.
\end{proof}

\begin{remark}[Exponential looseness in~\eqref{eq:sep-packing-upper}]
\label{rem:simultaneous-looseness}
Fix $0<\gamma<1$ and $m\ge2$. Let $d=2m$ and set
\begin{equation*}
    \mathcal W=\Delta_m\times\{0_m\},
    \qquad
    K=\{(a\mathbf 1_m,z):a\in[0,1],\ z\in\{0,1\}^m\}.
\end{equation*}
Every $w\in\mathcal W$ assigns score $a$ to $(a\mathbf 1_m,z)$, so each
successive improvement lowers $a$ by at least $\gamma$ and
$\Mcap_{\mathcal W}(K,\gamma)\le1+\lfloor1/\gamma\rfloor$.
Fixing $a=0$ gives $\mathsf P_\infty(K,\gamma)\ge2^m$.
For the covering number, project $\Delta_m$ onto its first $m-1$ coordinates:
the resulting simplex has volume $1/(m-1)!$, while each projected
$\ell_1$ ball of radius $\gamma/2$ has volume at most
$\gamma^{m-1}/(m-1)!$.
Thus $\mathcal N_1(\mathcal W,\gamma/2)\ge\gamma^{-(m-1)}$.
At fixed $\gamma$, both terms in \eqref{eq:sep-packing-upper} therefore grow
exponentially in $m=d/2$, while frontier capacity stays constant in $d$.
\end{remark}

\begin{remark}[Relation to convexified packing]
\label{rem:convexified-packing}
Frontier capacity is closely related to the notion of convexified
packing~\cite{artstein2004convexified}.  Let $B\subseteq\R^d$ be an
origin-symmetric convex body.  A sequence $b_1,\ldots,b_m\in K$ is a
convexified $\gamma B$-packing if, for every $t\ge2$,
\begin{equation}
    (b_t+\gamma\,\operatorname{int}B)
    \cap \operatorname{conv}\{b_s:s<t\}
    =\varnothing.
\end{equation}
By the separating hyperplane theorem, this is equivalent to requiring, for
each $t$, a vector $w_t$ such that
$
    \langle w_t,b_s-b_t\rangle\ge\gamma, s<t,
$
after rescaling $w_t$ so that
$\sup_{u\in B}\langle w_t,u\rangle\le1$.
The set of all such normalized vectors is the polar body
$
    B^\circ
    :=\{w\in\R^d:\sup_{u\in B}\langle w,u\rangle\le1\}.
$
Thus convexified packing allows any certificate $w_t\in B^\circ$, whereas
frontier capacity uses the same sequential certificate condition but requires
$w_t\in\mathcal W\subseteq\Delta_d$.  In our setting, every new frontier point
must therefore be exposed by an allowed benchmark comparison.
\end{remark}

\subsection{The simplex construction}
\label{app:simplex-construction}
To prove the exponential lower bound on frontier capacity over the full simplex below, the challenge is to construct many profiles such that each can be exposed by a weight in $\Delta_d$ with a dimension-independent margin. An affine encoding does not suffice uniformly over $q$: already for $d=2$, its $q$ images lie on a line, so a linear score can strictly expose only the two endpoints. For $q=2$ though, a constant-weight binary code gives a simpler affine construction. 

We therefore map well-separated $q$-ary codewords to a convex quadratic graph. The tangent plane at $x$ supports the graph from below with a gap proportional to $\|x-y\|_2^2$; choosing the graph with nonpositive slopes makes its upward normal nonnegative and hence normalizable into $\Delta_d$. A standard $q$-ary Gilbert packing then supplies exponentially many codewords whose Hamming separation yields the required exposure margin.

\qarycode*

\begin{proof}
We first construct a large $q$-ary code by the standard Gilbert greedy-packing argument~\cite{gilbert1952comparison}. Put $p\coloneqq d-1$.  Greedily select a vector from
$\{0,\ldots,q-1\}^p$ and remove all remaining vectors within Hamming distance
$\lfloor p/4\rfloor$.  Each step removes at most
\begin{equation*}
    B_{p,q}\coloneqq\sum_{j\le p/4}\binom pj(q-1)^j
    \le\exp\!\left(pH(1/4)+\frac p4\log(q-1)\right),
\end{equation*}
where $H(u)\coloneqq-u\log u-(1-u)\log(1-u)$.  The inequality uses
$(q-1)^j\le(q-1)^{p/4}$ and the entropy bound
$\sum_{j\le p/4}\binom pj\le e^{pH(1/4)}$, obtained by expanding
$(1/4+3/4)^p$ and bounding each factor
$(1/4)^j(3/4)^{p-j}$ below by $e^{-pH(1/4)}$ for $j\le p/4$.
The removed balls cover all $q^p$ vectors, so the selected set $\mathcal C$ has
at least $q^p/B_{p,q}\ge q^{a(q)p}$ elements, where
\begin{equation*}
    a(q)\coloneqq
    \frac{\log q-H(1/4)-\frac14\log(q-1)}{\log q}.
\end{equation*}
$a(2)>0$, and for $q\ge3$,
$a(q)\ge3/4-H(1/4)/\log3>0$.  Thus one absolute constant $c_0>0$
gives $|\mathcal C|\ge\lfloor q^{c_0p}\rfloor$ for every $p\ge1$ and $q\ge2$.
Distinct selected vectors have Hamming distance greater than $\lfloor p/4\rfloor$, hence at least $p/4$.

Map $a\in\mathcal C$ to $x(a)\mathrel{\coloneqq}a/(q-1)\in[0,1]^p$.  For $x\in x(\mathcal C)$, define
\begin{equation*}
    b_i(x)\mathrel{\coloneqq}\frac14+\frac{x_i}{2}\quad(i\le p),
    \qquad
    b_d(x)\mathrel{\coloneqq}\frac12+\frac1{8p}\sum_{i=1}^p(1-x_i)^2,
\end{equation*}
and
\begin{equation*}
    D(x)\mathrel{\coloneqq}1+\frac1{2p}\sum_{i=1}^p(1-x_i),
    \qquad
    w_d(x)\mathrel{\coloneqq}D(x)^{-1},
    \qquad
    w_i(x)\mathrel{\coloneqq}\frac{1-x_i}{2pD(x)}.
\end{equation*}
Then $b_i(x)\in[1/4,3/4]$ for $i\le p$,
$b_d(x)\in[1/2,5/8]$, all coordinates of $w(x)$ are nonnegative, and
\begin{equation*}
    \sum_{i=1}^pw_i(x)+w_d(x)
    =\frac{\frac1{2p}\sum_i(1-x_i)+1}{D(x)}=1.
\end{equation*}
Thus $b(x)\in[1/4,3/4]^d\subset[0,1]^d$ and $w(x)\in\Delt_d$.  Also $D(x)\le3/2$, so $w_d(x)\ge2/3$.  Direct expansion gives, for all $x,y\in[0,1]^p$,
\begin{align*}
 \langle w(x),b(y)-b(x)\rangle
 &=\frac1{D(x)}\left[
   \frac1{4p}\sum_{i=1}^p(1-x_i)(y_i-x_i)
   +\frac1{8p}\sum_{i=1}^p
      \bigl((1-y_i)^2-(1-x_i)^2\bigr)
 \right]\\
 &=\frac{w_d(x)}{8p}\|y-x\|_2^2.
\end{align*}
Distinct vectors in $\mathcal C$ differ in at least $p/4$ coordinates, and every differing coordinate changes by at least $1/(q-1)$.  Hence, for $y\neq x$, we have
\begin{equation*}
    \langle w(x),b(y)-b(x)\rangle
    \ge\frac{2/3}{8p}\cdot\frac{p}{4(q-1)^2}
    =\frac1{48(q-1)^2}.
\end{equation*}
Thus, in any ordering, each new $b(x)$ satisfies Eq.~\eqref{eq:dual-certificate} against every predecessor, proving the lemma.
\end{proof}

\section{Lower bound constructions and proofs}
\label{app:lower-proofs}

\subsection{Nonadaptive selection}

For completeness, we include the nonadaptive sample complexity lower bound, adapting Blum--Hardt~\cite{blum2015ladder} to our multi-criteria setting.

\begin{restatable}[Nonadaptive selection lower bound]{proposition}{loglower}
\label{thm:log-lower}
Let $k\ge4$, let $\mathcal W\subseteq\Delt_d$ be nonempty, let
$0<\varepsilon\le1/16$, and let $0<\beta\le1/4$.  Then
\begin{equation}
    \Npw(k,\mathcal W,\varepsilon,\beta)
    \ge\frac{\log k}{256\varepsilon^2}.
    \label{eq:log-lower}
\end{equation}
\end{restatable}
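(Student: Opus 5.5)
Fix any $w_\star\in\mathcal W$. The plan is to reduce to a scalar, nonadaptive selection problem along this one weighting, and then run a Fano/Le Cam style argument in the spirit of Blum--Hardt. The analyst requests $w_t\coloneqq w_\star$ at every round and submits $k$ models fixed in advance. Their evaluations are constant across coordinates, so $\phi(f_s,z)=h_s(z)\one$ with $h_s:\mathcal Z\to\{0,1\}$. Since $\langle w_\star,\one\rangle=1$, the weighted score of $f_s$ is just $\E_P h_s$, and the leaderboard must estimate $\min_{s\le t}\E_P h_s$ for each $t$. In particular, at $t=k$, a valid mechanism yields an $\varepsilon$-accurate estimate of $\min_{s\le k}\E_Ph_s$ from $n$ samples.

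For the hard family, take $\mathcal Z=\{0,1\}^{k}$ and $h_s(z)=z_s$. Under the null law $P_0$ the coordinates are i.i.d.\ $\Ber(1/2)$. Under $P_j$, $j\in[k]$, coordinate $j$ is $\Ber(1/2-4\varepsilon)$ and the others stay $\Ber(1/2)$. Then $\min_s\E_{P_0}h_s=1/2$ while $\min_s\E_{P_j}h_s=1/2-4\varepsilon$. A mechanism that is $\varepsilon$-accurate at round $k$ with probability $1-\beta\ge 3/4$ therefore gives a test between $P_0^n$ and the mixture $\bar P\coloneqq\frac1k\sum_j P_j^n$: declare ``alternative'' iff $\widehat v_k<1/2-2\varepsilon$. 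Both error probabilities are at most $1/4$, so $\|P_0^n-\bar P\|_{\mathrm{TV}}\ge 1/2$.

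The main step is to upper bound this distance through the $\chi^2$ divergence of the mixture: $1+\chi^2(\bar P\|P_0^n)=\frac1{k^2}\sum_{i,j}\E_{P_0^n}[L_iL_j]$, where $L_j$ is the likelihood ratio. For $i\ne j$ the cross terms equal $1$ by independence of coordinates. For $i=j$ they equal $(1+\chi^2(\Ber(1/2-4\varepsilon)\|\Ber(1/2)))^n=(1+64\varepsilon^2)^n\le e^{64n\varepsilon^2}$. Hence $\chi^2\le e^{64n\varepsilon^2}/k$. Combining with $\|\cdot\|_{\mathrm{TV}}\le\frac12\sqrt{\chi^2}$ forces $e^{64n\varepsilon^2}\ge k$, i.e.\ $n\ge \log k/(64\varepsilon^2)$, which is stronger than \eqref{eq:log-lower}.

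The remaining care, which I expect to be the only real obstacle, is bookkeeping the constants. I need the shifted Bernoulli parameter to stay in $[0,1]$, which is guaranteed by $\varepsilon\le1/16$. I also need the threshold test to tolerate the mechanism's internal randomness; since the randomness is independent of the sample, the TV argument applies to the joint law. Finally, $k\ge4$ ensures $\log k$ is bounded below, which absorbs any lost constants such as a factor $4$ from using only $k-1$ alternatives. Since $\Npw$ is a worst case over benchmark designs, exhibiting this one design with $d$ criteria proves the proposition.
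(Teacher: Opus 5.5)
Your proof is correct, but it takes a different route from the paper's. Both use the same hard family: $\mathcal Z=\{0,1\}^k$, the coordinate projections $z^{(t)}\one$ under one fixed weighting, and one coordinate shifted to $\Ber(1/2-4\varepsilon)$.

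The paper treats validity as an \emph{identification} problem. It reads off the biased index from the first round at which $\widehat v_t$ falls below $1/2-2\varepsilon$. It then applies Fano's inequality, bounding $I(J;\widehat i)\le I(J;S)$ by the average divergence $\frac1k\sum_i\KL(P_i^n\|P_0^n)\le 64n\varepsilon^2$. With $k\ge4$ and $\beta\le1/4$ this gives the constant $1/256$.

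You instead solve a \emph{detection} problem. You test $P_0^n$ against the mixture $\frac1k\sum_jP_j^n$ using only $\widehat v_k$, and control the mixture by an explicit second-moment computation, $\chi^2(\bar P\|P_0^n)=\bigl((1+64\varepsilon^2)^n-1\bigr)/k$. The cross terms equal $1$ because distinct coordinates are independent under $P_0$.

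What your route buys is a slightly stronger statement: accuracy at the final round alone, i.e.\ estimating a single best value rather than tracking the whole sequence, already forces $n\ge\log k/(64\varepsilon^2)$, with a better constant. What the paper's route buys is that it avoids computing the mixture's second moment, since the mutual information is bounded directly by KL divergences to a reference law. It also mirrors the Blum--Hardt argument by exploiting the full tracking transcript.

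Your closing remark about absorbing a loss from using only $k-1$ alternatives is unnecessary, since you use all $k$.
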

\begin{proof}
The argument follows Blum--Hardt's mean estimation reduction~\cite{blum2015ladder}: we hide one
slightly lower-mean coordinate among $k$ otherwise fair coordinates and
submit the coordinate projections in order.  An accurate weighted leaderboard reveals the hidden coordinate from the round at which its
value drops, then Fano's inequality limits how reliably $n$ samples
can identify that coordinate.

Let $\mathcal Z\coloneqq\{0,1\}^k$, and write
$Z=(Z^{(1)},\ldots,Z^{(k)})\in\mathcal Z$.  For $i\in[k]$, let $P_i$
be the product distribution under which
\begin{equation*}
    \E_{P_i} Z^{(i)}=\frac12-4\varepsilon,
    \qquad
    \E_{P_i} Z^{(t)}=\frac12
    \quad (t\ne i).
\end{equation*}
Fix any $\bar w\in\mathcal W$.  Use the fixed model class of maps
$f:\mathcal Z\to[0,1]^d$ and evaluation map
$\phi(f,z)\coloneqq f(z)$.  Submit the nonadaptive models
$
    f_t(z)\coloneqq z^{(t)}\one,
 t\in[k],
$
and use $w_t=\bar w$ throughout.  Since
$\langle \bar w,\one\rangle=1$, under $P_i$ the best-so-far value is
\begin{equation*}
    V_t^{P_i}(\bar w)=
    \begin{cases}
      1/2, & t<i,\\
      1/2-4\varepsilon, & t\ge i.
    \end{cases}
\end{equation*}
Thus the location of the downward jump identifies $i$.  Given the
transcript, define
\begin{equation*}
    \widehat i\coloneqq
    \begin{cases}
      \min\{t\in[k]:\widehat v_t<1/2-2\varepsilon\},
      &\text{if this set is nonempty},\\
      1, &\text{otherwise}.
    \end{cases}
\end{equation*}
On the event
$
    \max_{t\le k}
    \bigl|\widehat v_t-V_t^{P_i}(\bar w)\bigr|
    \le\varepsilon,
$
we have $\widehat i=i$.

Now let $J$ be uniform on $[k]$ and draw $S\sim P_J^n$.  By validity of the weighted leaderboard mechanism,
$\widehat i$ identifies the location $J$ of the biased coordinate with
error probability at most $\beta$.

Let $\Pi$ denote the transcript, and let $P_0$ be the product
distribution on $\mathcal Z$ under which every coordinate is
$\Ber(1/2)$.  Since
$
    J\rightarrow S\rightarrow\Pi\rightarrow\widehat i
$
is a Markov chain, data processing and comparison with $P_0^n$ give
\begin{equation*}
    I(J;\widehat i)
    \le I(J;\Pi)
    \le I(J;S)
    \le \frac1k\sum_{i=1}^k
        \KL(P_i^n\|P_0^n)
    =
    n\,\KL\!\left(
        \Ber(1/2-4\varepsilon)
        \middle\|
        \Ber(1/2)
    \right).
\end{equation*}
Here $I(\cdot;\cdot)$ denotes mutual information and $\KL$ denotes
Kullback--Leibler divergence.  Using
$\KL(Q\|R)\le\chi^2(Q\|R)$, we have
$
    \KL\!\left(
        \Ber(1/2-4\varepsilon)
        \middle\|
        \Ber(1/2)
    \right)
    \le 64\varepsilon^2.
$
Fano's inequality thus gives
\begin{equation*}
    \beta
    \ge
    1-\frac{I(J;\widehat i)+\log 2}{\log k}
    \ge
    1-\frac{64n\varepsilon^2+\log 2}{\log k}.
\end{equation*}
Since $k\ge4$ and $\beta\le1/4$,
$
    (1-\beta)\log k-\log 2
    \ge \frac14\log k,
$
and hence
$
    n\ge\frac{\log k}{256\varepsilon^2}.
$
\end{proof}

\subsection{Realization by binary classification}

The population profiles used in both reductions can be realized by  $d$-output binary classifiers with coordinatewise $0$--$1$ loss.  Below, $P_0$ is the population distribution in the adaptive-query problem; augmenting each observation with independent randomness converts it into a classification distribution.

\begin{proposition}[Fixed-classification realization]
\label{prop:classification-realization}
Let $P_0$ be any probability distribution on a space
$\mathcal Z_0$, and let $d\ge1$.  There exists a distribution
$P=P(P_0,d)$ on
$
    \mathcal X\times \{0,1\}^d,
    \mathcal X
    \mathrel{\coloneqq}
    \mathcal Z_0\times\{0,1\}^d\times[0,1]^d,
$
such that:
\begin{enumerate}\setlength{\itemsep}{0pt}
    \item For every $0<\rho\le1/2$, every $b\in[0,1]^d$, and every
    function $g:\mathcal Z_0\to[-1,1]$, there is a classifier
    $f_{\rho,b,g}:\mathcal X\to\{0,1\}^d$ satisfying
    \begin{equation*}
        R_P(f_{\rho,b,g})
        =
        (1-2\rho)b
        +
        \rho\bigl(1+\E_{P_0}[g(Z_0)]\bigr)\one .
    \end{equation*}
    \item The distribution $P$ and coordinatewise $0$--$1$ loss are
    independent of $\rho,b,g$.  Moreover, an $n$-sample from $P$ can be
    generated from an $n$-sample from $P_0$ by augmenting each observation
    with independent randomness.
\end{enumerate}
\end{proposition}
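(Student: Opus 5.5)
The plan is to build the classification distribution so that each output coordinate of a classifier is a randomized mixture, selected by independent auxiliary randomness attached to each observation. Define $P$ as the law of $(X,Y)$ with $X=(Z_0,U,V)$, where $Z_0\sim P_0$, $U\sim\mathrm{Unif}(\{0,1\}^d)$ and $V\sim\mathrm{Unif}([0,1]^d)$ are independent, and the label is $Y\coloneqq U$. This law does not depend on $(\rho,b,g)$. An $n$-sample from $P$ is obtained from an $n$-sample of $P_0$ by appending independent copies of $(U,V)$ to each observation, which gives item 2 directly. The loss on coordinate $j$ is $\one\{f_j(X)\neq Y_j\}$. Because the label is visible inside $X$, the classifier can choose to be right or wrong on purpose, and so it can realize any prescribed error probability.

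For item 1, take coordinate $j$ and split $[0,1]$ using $V_j$. Set $f_j(X)=1-U_j$, a deliberate error, when $V_j<(1-2\rho)b_j$. On the middle band $(1-2\rho)b_j\le V_j<1-2\rho$, set $f_j(X)=U_j$, which is correct. On the top band $V_j\ge 1-2\rho$, of length $2\rho$, the classifier must err with conditional probability $(1+g(Z_0))/2\in[0,1]$. To do this without extra randomness, rescale the top band: put $T\coloneqq (V_j-(1-2\rho))/(2\rho)$, which is uniform on $[0,1]$ given this band and independent of $Z_0$. Then err exactly when $T<(1+g(Z_0))/2$. The condition $\rho\le 1/2$ makes the band lengths nonnegative, and $b_j\in[0,1]$ places the first threshold inside the lower band.

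Computing the expected loss on coordinate $j$ gives $(1-2\rho)b_j+2\rho\,\E_{P_0}[(1+g(Z_0))/2]=(1-2\rho)b_j+\rho(1+\E_{P_0}g)$. This matches the target profile coordinatewise, and linearity of expectation handles the nested expectation. The map $f_{\rho,b,g}$ is a deterministic function of $X$ with outputs in $\{0,1\}^d$, so it is a legitimate classifier.

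The main point needing care is that the reductions in Lemma~\ref{lem:convex-reduction} and the winner-only proof use the same hidden sample across all adaptive rounds. So the auxiliary variables $(U,V)$ must be drawn once per observation and shared by every classifier; the $V$-dependence is then just part of the fixed data. A mechanism for $P$ with $n$ samples therefore yields one for $P_0$ with $n$ samples by simulation, without increasing the sample size.
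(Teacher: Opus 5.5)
Your construction is correct and essentially the paper's. The paper uses the same distribution, with the label bits copied into $X$ alongside independent uniforms. Its classifier is $f_j(z,r,u)=r_j\oplus\mathbf 1\{u_j\le\theta_j(z)\}$ with $\theta_j(z)=(1-2\rho)b_j+\rho(1+g(z))$, so it thresholds the uniform once at the target error rate instead of using your three bands. Your banded error set has the same conditional probability $\theta_j(z)$ given $Z_0=z$, so the risk profiles coincide, and item 2 is argued the same way.
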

\begin{proof}
Draw $Z_0\sim P_0$ and, independently, vectors $R,U$ with mutually independent coordinates
$R_j\sim\operatorname{Ber}(1/2)$ and $U_j\sim\operatorname{Unif}[0,1]$, $j\in[d]$.
Set $X\coloneqq(Z_0,R,U)$ and $Y\coloneqq R$; their joint law is $P\coloneqq P(P_0,d)$.

Fix $0<\rho\le1/2$, $b\in[0,1]^d$, and a function
$g:\mathcal Z_0\to[-1,1]$, and define
\begin{equation*}
    \theta_j(z)
    \mathrel{\coloneqq}
    (1-2\rho)b_j+\rho(1+g(z)).
\end{equation*}
Since $b_j\in[0,1]$ and $g(z)\in[-1,1]$,
$0\le\theta_j(z)\le1$.
Define the model's $j$-th prediction coordinate
\begin{equation*}
    f_{\rho,b,g,j}(z,r,u)
    \coloneqq
    r_j\oplus\mathbf 1\{u_j\le\theta_j(z)\}.
\end{equation*}
Then, conditional on $Z_0=z$, we can write the conditional risk as
\begin{align*}
    \Prb\!\left(f_{\rho,b,g,j}(X)\ne Y_j\mid Z_0=z\right)
    =
    \Prb\!\left(U_j\le\theta_j(z)\right)
    =
    \theta_j(z)
    =
    (1-2\rho)b_j+\rho(1+g(z)).
\end{align*}
Taking expectation over $Z_0$ gives
$
    R_P(f_{\rho,b,g})
    =
    (1-2\rho)b
    +
    \rho\bigl(1+\E_{P_0}[g(Z_0)]\bigr)\one.
$
Finally, $R$ and $U$ are generated independently conditional on $Z_0$,
so an $n$-sample from $P$ is obtained by augmenting each of $n$ samples
from $P_0$ with fresh independent draws of $R$ and $U$.
\end{proof}

\subsection{Winner-only feedback}
\label{app:winner-only}

At round $t$, a winner-only mechanism returns an index
$\widehat s_t\in[t]$.  It is $(\varepsilon,\beta)$-valid for selection if
\begin{equation*}
    \Prb\!\left[
      \langle w_t,R_P(f_{\widehat s_t})\rangle
      \le V_t^P(w_t)+\varepsilon
      \quad\text{for every }t\le k
    \right]\ge1-\beta.
\end{equation*}
Let $\Nsel(k,\mathcal W,\varepsilon,\beta)$ be the least sample size for such a mechanism, and write
$\Nsel(k,d,\varepsilon,\beta)\mathrel{\coloneqq}
\Nsel(k,\Delt_d,\varepsilon,\beta)$.

\selectionlower*
\begin{proof}
    We start with a binary-search-based reduction from adaptive-query  estimation, mirroring the proof of Theorem~\ref{thm:capacity-lower}.

\emph{Binary-search reduction.}
Let $0<\gamma\le1$,
$0<\varepsilon\le\gamma/16$, and $0<\beta<1$, and put
\begin{equation*}
    h\coloneqq\left\lceil\log_2\frac{\gamma}{8\varepsilon}\right\rceil,
    \qquad
    m\coloneqq\left\lfloor
        \frac{\min\{k/2,\Mcap_{\mathcal W}(d,\gamma)\}}{h}
    \right\rfloor.
\end{equation*}
If $m\ge1$, we claim that
\begin{equation}
    \Nsel(k,\mathcal W,\varepsilon,\beta)
    \ge\Naq\!\left(m,\frac{16\varepsilon}{\gamma},\beta\right).
    \label{eq:selection-reduction}
\end{equation}
We estimate each adaptive query mean by binary search.  Each comparison
submits the query and a constant threshold at a new separated evaluation
profile, so earlier submissions cannot win the comparison.

Suppose an $(\varepsilon,\beta)$-valid winner-only mechanism uses $n$ samples.
By construction, $h,m\ge1$, $2mh\le k$, and
$mh\le\Mcap_{\mathcal W}(d,\gamma)$.
Choose a sequence $b_1,\ldots,b_{mh}$ that is $\gamma$-convexly separated over $\mathcal W$, with certificate weights $w_2,\ldots,w_{mh}$ and any $w_1\in\mathcal W$.
These profiles and weights are fixed before the interaction.
Set $\rho\coloneqq\gamma/8$ and $\delta\coloneqq\varepsilon/\rho=8\varepsilon/\gamma$.
The assumption $\varepsilon\le\gamma/16$ ensures that the separation between comparisons exceeds the selection error.

At adaptive-query round $t$, receive $g_t:\mathcal Z\to[-1,1]$, write
$\mu_t\coloneqq\E_P[g_t(Z)]$, and initialize
$I_{t,0}\coloneqq[L_{t,0},U_{t,0}]\coloneqq[-1,1]$.
For each of the $h$ comparison steps $q=1,\ldots,h$, set
$j\coloneqq(t-1)h+q$ and
$\tau_{t,q}\coloneqq(L_{t,q-1}+U_{t,q-1})/2$.
In two consecutive leaderboard rounds, both with weighting $w_j$, submit the query model and threshold model with evaluation functions
\begin{equation*}
    z\longmapsto(1-2\rho)b_j+\rho(1+g_t(z))\one,
    \qquad
    z\longmapsto(1-2\rho)b_j+\rho(1+\tau_{t,q})\one,
\end{equation*}
respectively, and retain only the second returned identity.

On the event that every returned model is $\varepsilon$-optimal, this identity belongs to the current pair.
Indeed, an earlier model has population profile
$(1-2\rho)b_i+\rho(1+\nu)\one$ for some $i<j$ and $\nu\in[-1,1]$.
Writing $a\coloneqq\min\{\mu_t,\tau_{t,q}\}$, its excess loss over the better current model is at least
\begin{align*}
    (1-2\rho)\langle w_j,b_i-b_j\rangle+\rho(\nu-a)
    \ge(1-2\rho)\gamma-2\rho
    =\frac{\gamma(3-\gamma)}4
    \ge\frac\gamma2>\varepsilon.
\end{align*}
The current pair has losses $c_j+\rho\mu_t$ and $c_j+\rho\tau_{t,q}$, where
$c_j\coloneqq(1-2\rho)\langle w_j,b_j\rangle+\rho$.
If the query model, i.e., the first model in the pair, is returned,
$\varepsilon$-optimality gives $\mu_t\le\tau_{t,q}+\delta$; if the threshold model is returned, it gives $\mu_t\ge\tau_{t,q}-\delta$.
Therefore update
\begin{equation*}
    I_{t,q}=[L_{t,q},U_{t,q}]\coloneqq
    \begin{cases}
        I_{t,q-1}\cap[-1,\tau_{t,q}+\delta],&\text{if the query model is returned},\\
        I_{t,q-1}\cap[\tau_{t,q}-\delta,1],&\text{if the threshold model is returned}.
    \end{cases}
\end{equation*}
If an earlier model is returned outside the validity event, leave the interval unchanged.
On the validity event, every interval contains $\mu_t$ and
$|I_{t,q}|\le|I_{t,q-1}|/2+\delta$.
Since $|I_{t,0}|=2$, the midpoint $y_t$ of $I_{t,h}$ satisfies
\begin{equation*}
    |y_t-\mu_t|\le2^{-h}+\delta\le\frac{16\varepsilon}{\gamma}.
\end{equation*}
The adaptive-query analyst can use these answers to choose later queries because every $y_t$ is computed from the winner-only transcript.
Thus the same $n$ samples answer $m$ adaptive queries at the stated accuracy and confidence.
Finally, Proposition~\ref{prop:classification-realization} realizes both models in each pair by taking $g=g_t$ and $g\equiv\tau_{t,q}$, respectively, under one fixed binary classification distribution.

\emph{Final step.}
Choose $C=16/c_3$, $\varepsilon_0\le\min\{1/C,1/16\}$, and
$\beta_0\le\min\{c_4,1/4\}$, using the constants from
Lemma~\ref{fact:aq-lower}.  With $\gamma=C\varepsilon$, the reduction has
accuracy $16/C=c_3$ and a constant overhead
$h=\lceil\log_2(C/8)\rceil\ge1$.
Write $r\coloneqq\min\{k,\Mcap_{\mathcal W}(d,C\varepsilon)\}$.
If $r\ge4h$, then
\begin{equation*}
    m\ge\left\lfloor\frac{r}{2h}\right\rfloor\ge\frac{r}{4h},
\end{equation*}
so \eqref{eq:selection-reduction} and Lemma~\ref{fact:aq-lower} give
$\Nsel(k,\mathcal W,\varepsilon,\beta)\ge c_5\sqrt{r}/(2\sqrt h)$.
For $r<4h$, the same conclusion holds after decreasing the  constant, since sample sizes are positive integers. This proves the theorem.
\end{proof}

\section{Frontier Ladder: proofs and stronger release guarantees}
\label{app:upper-proofs}

\subsection{Dimension-free fallback}

At round $t$, after receiving $(f_t,w_t)$, define the empirical lower-envelope query
\begin{equation}
    Q_t(X)\mathrel{\coloneqq}\min_{s\le t}\frac1n\sum_{i=1}^n
      \langle w_t,\phi(f_s,X_i)\rangle.
    \label{eq:Qt}
\end{equation}

\genericupper*
\begin{proof}
For sufficiently large $k$, release $Q_t(S)$ using the mechanism in Lemma~\ref{fact:bounded-noise}.
Apply Lemma~\ref{fact:transfer} with $a=\varepsilon_0/2$, $b=\beta_0/2$.
It is enough that the transcript be
$(\eta,\delta)=\left(\frac{\varepsilon_0}{128},\frac{\varepsilon_0\beta_0}{128}\right)$
differentially private and that, for every sample, all answers differ from
$Q_t(S)$ by at most $\varepsilon_0/16$, except with probability
$\varepsilon_0\beta_0/64$.

Conditioned on a fixed preceding transcript, all submitted models and the current
weight vector in \eqref{eq:Qt} are fixed.  Lemma~\ref{lem:bias} therefore shows
that the current query is $1/n$-sensitive.  Since $0<\eta\le1$,
$0<\delta\le1/2$, and $\delta_*(k)\to0$, Lemma~\ref{fact:bounded-noise}
applies for every $k\ge k_0$, where $k_0\ge3$ depends only on
$\varepsilon_0,\beta_0$.
With sensitivity $\Delta=1/n$, it gives the required
privacy and sample accuracy, the latter with probability one, whenever
\begin{equation*}
    n\ge C\sqrt{k}
\end{equation*}
for sufficiently large $C$.
Lemma~\ref{fact:transfer} then gives, with failure probability at most $\beta_0/2$,
\begin{equation*}
    \max_{t\le k}|\widehat v_t-Q_t(P)|\le\varepsilon_0/2,
    \qquad
    Q_t(P)\mathrel{\coloneqq}\E_{X\sim P^n}Q_t(X).
\end{equation*}
Conditioned on the transcript before round $t$, the functions
$z\mapsto\langle w_t,\phi(f_s,z)\rangle$, $s\le t$, are fixed and their values lie in $[0,1]$.
Lemma~\ref{lem:bias} gives
\begin{equation*}
    |Q_t(P)-V_t^P(w_t)|\le\sqrt{\frac{\log(2k)}{2n}}.
\end{equation*}
Increasing $C$ makes this at most $\varepsilon_0/2$, since $\log(2k)=\mathcal{O}(\sqrt{k})$.
The triangle inequality proves validity for $k\ge k_0$.
For $2\le k<k_0$, the Gaussian bound \eqref{eq:generic-upper-gaussian}
below is at most a constant depending only on $\varepsilon_0,\beta_0$ times
$\sqrt{k}$, since $k$ ranges over a finite set.  Increasing $C$ in
\eqref{eq:generic-upper} absorbs these cases and integer rounding for every $k\ge2$.
\end{proof}

For general $0<\varepsilon,\beta\le1/10$, the Gaussian mechanism gives,
with a universal constant $C\ge1$, for every nonempty compact
$\mathcal W\subseteq\Delta_d$ and every $k\ge2$,
\begin{equation}
    \Npw(k,\mathcal W,\varepsilon,\beta)
    \le\frac{C}{\varepsilon^2}
    \left[
      \sqrt{k\log\bigl(C/(\varepsilon\beta)\bigr)
        \log\bigl(Ck/(\varepsilon\beta)\bigr)}
      +\log(2k)
    \right].
    \label{eq:generic-upper-gaussian}
\end{equation}
Indeed, Lemma~\ref{lem:gaussian}, with $p=1$, $L=k$, $\Delta_2=1/n$,
$\eta=\varepsilon/128$, $\delta=\varepsilon\beta/128$, and
$\zeta=\varepsilon\beta/64$, gives the required privacy and sample accuracy
from the first term.  Lemma~\ref{fact:transfer} and Lemma~\ref{lem:bias}
then give validity by the same argument, using the second term to bound the bias.

\subsection{Frontier Ladder validity}

The following two lemmas focus on the geometric utility argument for Frontier Ladder.

\begin{lemma}[Accepted rounds are convexly separated]
\label{lem:packing}
Assume every sparse-vector report satisfies \eqref{eq:sv-reports} with tolerance $u$, and every accepted profile estimate $\widetilde R_t$ satisfies
$\|\widetilde R_t-\widehat R_t(S)\|_\infty\le u$.  Then the empirical profiles at positive rounds form a
$2u$-convexly separated sequence over $\mathcal W$ in $[0,1]^d$.  In particular, Frontier Ladder makes at most
$\mathsf{L}_{\mathcal W}(k,\varepsilon)-1$ positive reports, so its cap
$\mathsf{L}_{\mathcal W}(k,\varepsilon)$ is not reached.
\end{lemma}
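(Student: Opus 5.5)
Let $t_1<t_2<\cdots<t_r$ be the positive rounds. Write $e_j\coloneqq\widehat R_{t_j}(S)$ for the empirical profiles and $\widetilde R_{t_j}$ for the released noisy versions. Then the retained set is $A_{t_j-1}=\{\one\}\cup\{\widetilde R_{t_i}:i<j\}$. The plan is to turn each positive report into a certificate weight. This certificate shows that $e_j$ beats every earlier empirical profile $e_i$, and also the initial point $\one$, by margin $2u$ under a single $w\in\mathcal W$. Since the certificate inequality is linear, it extends to convex hulls by \eqref{eq:dual-certificate}. That gives the $2u$-convex separation.

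\emph{Step 1 (certificate from a positive report).} By \eqref{eq:sv-reports}, a positive report at round $t_j$ gives $\Gamma_{t_j}(S)\ge\tau-u=3u$. By compactness of $\mathcal W$ and continuity of $F_A$ (a minimum of finitely many linear functions), the maximum in \eqref{eq:frontier-novelty} is attained at some $w_j\in\mathcal W$. Hence
\[
\min_{a\in A_{t_j-1}}\langle w_j,a\rangle-\langle w_j,e_j\rangle\ge3u .
\]
In particular $\langle w_j,\widetilde R_{t_i}-e_j\rangle\ge3u$ for all $i<j$, and also $\langle w_j,\one-e_j\rangle\ge3u$.

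\emph{Step 2 (replace noisy by empirical profiles).} Since $w_j\in\Delta_d$, we have $|\langle w_j,x\rangle|\le\|x\|_\infty$. The clipped noise satisfies $\|\widetilde R_{t_i}-e_i\|_\infty\le u$ by assumption; clipping only helps, because $e_i\in[0,1]^d$. Therefore
\[
\langle w_j,e_i-e_j\rangle\ge3u-u=2u\qquad(i<j).
\]
By \eqref{eq:dual-certificate}, the sequence $e_1,\dots,e_r\in[0,1]^d$ is $2u$-convexly separated over $\mathcal W$. To also absorb $\one$ without loss, prepend it: $\langle w_j,\one-e_j\rangle\ge3u\ge2u$, so $(\one,e_1,\dots,e_r)$ is also $2u$-convexly separated. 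This gives $r+1\le\Mcap_{\mathcal W}(d,2u)$.

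\emph{Step 3 (counting).} With $u=\varepsilon/80$ we have $2u=\varepsilon/40$, so $r\le\Mcap_{\mathcal W}(d,\varepsilon/40)-1$. Trivially $r\le k$. Hence
\[
r\le\min\{k,\Mcap_{\mathcal W}(d,\varepsilon/40)\}=\mathsf L_{\mathcal W}(k,\varepsilon)-1<\mathsf L_{\mathcal W}(k,\varepsilon),
\]
so sparse vector never reaches its cap and never halts. One subtlety must be handled for this reasoning to be non-circular. The hypothesis \eqref{eq:sv-reports} applies to processed reports, and halting happens only at the $L$-th positive report. Suppose $L$ positives did occur. Then the first $L$ of them are all processed reports satisfying \eqref{eq:sv-reports}, and the argument above applied to them yields $L\le\mathsf L-1$, a contradiction.

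The main obstacle is the bookkeeping in Steps 1 and 2. The certificate is stated against the \emph{noisy} retained points and the initial point $\one$, while the conclusion is about the empirical profiles. It matters that the $\ell_\infty$ bound transfers through simplex weights with no dimension factor. It also matters that $\one$, which is not itself an empirical profile, only strengthens the sequence rather than breaking the definition's requirement that the sequence lie in $[0,1]^d$; it does lie there, so prepending is legitimate.
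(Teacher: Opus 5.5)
Your proof is correct and follows the paper's argument: you take a maximizing certificate $w\in\mathcal W$ from $\Gamma_t(S)\ge\tau-u=3u$, lose $u$ through the $\ell_\infty$ bound on the retained noisy profiles to get margin $2u$ against earlier empirical profiles, and count positive rounds via frontier capacity at margin $\varepsilon/40$. Prepending $\one$ (which gives the slightly sharper $r\le\Mcap_{\mathcal W}(d,\varepsilon/40)-1$) and your explicit argument that the halting round cannot occur are harmless refinements of points the paper's proof leaves implicit.
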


\begin{proof}
Suppose round $t$ is positive.  Then
$\Gamma_t(S)\ge \tau-u=3u$.  The objective in \eqref{eq:frontier-novelty} is continuous and $\mathcal W$ is compact, so there exists $w\in\mathcal W$ such that, for every previously accepted round $s$,
\begin{equation*}
    \langle w,\widetilde R_s-\widehat R_t(S)\rangle\ge3u.
\end{equation*}
Since $\|\widetilde R_s-\widehat R_s(S)\|_\infty\le u$ and $w\in\Delt_d$,
\begin{equation*}
    \langle w,\widehat R_s(S)-\widehat R_t(S)\rangle\ge2u.
\end{equation*}
Equation~\eqref{eq:dual-certificate} proves the separation assertion.  The positive-round empirical profiles lie in $[0,1]^d$; since
$2u=\varepsilon/40$, their number is at most
\begin{equation*}
\min{\left\{k, \Mcap_{\mathcal W}\!\left(d,\frac{\varepsilon}{40}\right)\right\}}=\mathsf{L}_{\mathcal W}(k,\varepsilon)-1.
\end{equation*}
\end{proof}

\begin{lemma}[Sample accuracy]
\label{lem:sample-accuracy}
Recall that $F_A(w)\coloneqq\min_{a\in A}\langle w,a\rangle$ is the
lower envelope of the retained profiles.
Under the event in Lemma~\ref{lem:packing}, for every $t\le k$ and every $w\in\mathcal W$,
\begin{equation}
    -u\le F_{A_t}(w)-
      \min_{s\le t}\langle w,\widehat R_s(S)\rangle
    \le5u.
    \label{eq:sample-envelope}
\end{equation}
Consequently, we have
\begin{equation*}
    \max_{t\le k}|\widehat v_t-Q_t(S)|
    \le5u=\varepsilon/16.
\end{equation*}
\end{lemma}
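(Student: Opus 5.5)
We prove \eqref{eq:sample-envelope} by induction on $t$, comparing $F_{A_t}(w)$ with the empirical envelope $E_t(w)\coloneqq\min_{s\le t}\langle w,\widehat R_s(S)\rangle$ for all $w\in\mathcal W$ at once. We work on the event of Lemma~\ref{lem:packing}: every report satisfies \eqref{eq:sv-reports} with tolerance $u$, every accepted $\widetilde R_s$ is within $u$ of $\widehat R_s(S)$ in $\ell_\infty$, and the cap is never reached, so every round is processed. One caveat concerns clipping. Because $\widehat R_s(S)\in[0,1]^d$, clipping to the cube can only move a noisy point closer to $\widehat R_s(S)$ coordinatewise. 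The $\ell_\infty$ hypothesis therefore holds for the clipped vector, and with $w\in\Delta_d$ it gives $|\langle w,\widetilde R_s-\widehat R_s(S)\rangle|\le u$.

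For the lower bound $F_{A_t}(w)\ge E_t(w)-u$, look at the retained elements. Every accepted $\widetilde R_s$ with $s\le t$ satisfies $\langle w,\widetilde R_s\rangle\ge\langle w,\widehat R_s(S)\rangle-u\ge E_t(w)-u$. The remaining element is the initial point $\one$, and $\langle w,\one\rangle=1\ge E_t(w)$ because all profiles lie in $[0,1]^d$. Taking the minimum over $A_t$ gives the bound. This direction needs no induction.

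For the upper bound $F_{A_t}(w)\le E_t(w)+5u$, it suffices to show $F_{A_t}(w)\le\langle w,\widehat R_s(S)\rangle+5u$ for each $s\le t$. Since $A_s\subseteq A_t$, the envelope $F_{A_t}$ is pointwise no larger than $F_{A_s}$, so it is enough to bound $F_{A_s}(w)$. There are two cases for round $s$.
\begin{itemize}
\item \emph{Round $s$ positive.} Then $\widetilde R_s\in A_s$, so $F_{A_s}(w)\le\langle w,\widehat R_s(S)\rangle+u$.
\item \emph{Round $s$ negative.} Then $\Gamma_s(S)\le\tau+u=5u$. By definition \eqref{eq:frontier-novelty}, $F_{A_{s-1}}(w)-\langle w,\widehat R_s(S)\rangle\le\Gamma_s(S)$ for every $w\in\mathcal W$, and $A_s=A_{s-1}$. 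Hence $F_{A_s}(w)\le\langle w,\widehat R_s(S)\rangle+5u$.
\end{itemize}
Minimizing over $s\le t$ gives the upper bound.

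For the consequence, fix $t$ and condition on the preceding transcript, so that $Q_t(S)=\min_{s\le t}\langle w_t,\widehat R_s(S)\rangle=E_t(w_t)$ by linearity of the empirical mean. Then $\widehat v_t=F_{A_t}(w_t)$, and applying \eqref{eq:sample-envelope} at $w=w_t$ gives $|\widehat v_t-Q_t(S)|\le5u=\varepsilon/16$. No step is a real obstacle. The two points that need care are using the negative-report guarantee uniformly in $w$, which holds because $\Gamma_s$ is a maximum over $\mathcal W$, and the clipping remark above.
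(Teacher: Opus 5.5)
Your proof is correct and matches the paper's argument: the lower bound uses the same $\ell_\infty$ closeness of retained profiles together with $\langle w,\one\rangle=1$, and the upper bound uses the same positive/negative case split, with $\Gamma_s(S)\le\tau+u=5u$ holding uniformly over $w\in\mathcal W$. The only differences are cosmetic: you replace the paper's induction (base case $F_{A_0}\equiv1=H_0$) by a direct argument via $A_s\subseteq A_t$, so the ``by induction'' in your opening is not needed, and your clipping remark is already covered by the hypothesis of Lemma~\ref{lem:packing}.
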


\begin{proof}
Set
$H_t(w)\mathrel{\coloneqq}\min_{s\le t}\langle w,\widehat R_s(S)\rangle$, $H_0(w)\mathrel{\coloneqq}1$.
Every element of $A_t$ other than the initial profile $\one$ is within
$u$ in $\ell_\infty$ of an empirical profile from an accepted round, while
$\langle w,\one\rangle=1\ge H_t(w)$.  Hence every element of $A_t$ has score at least $H_t(w)-u$, so
$F_{A_t}(w)\ge H_t(w)-u$.

For the upper bound, use induction on $t$.  At $t=0$, the initialization
$A_0=\{\one\}$ gives $F_{A_0}(w)=1=H_0(w)$ for every $w\in\mathcal W$,
establishing the base case.  Suppose the bound holds at round $t-1$.
If round $t$ is positive, then
\begin{align*}
 F_{A_t}(w)
 =\min\{F_{A_{t-1}}(w),\langle w,\widetilde R_t\rangle\}
 \le\min\{H_{t-1}(w)+5u,
             \langle w,\widehat R_t(S)\rangle+u\}
 \le H_t(w)+5u.
\end{align*}
If round $t$ is negative, Lemma~\ref{fact:sv} gives
$\Gamma_t(S)\le \tau+u=5u$.  Thus, for every $w\in\mathcal W$,
\begin{equation*}
    F_{A_{t-1}}(w)
    \le\langle w,\widehat R_t(S)\rangle+5u.
\end{equation*}
Combining this with the induction hypothesis gives
\begin{align*}
 F_{A_t}(w)=F_{A_{t-1}}(w)
 \le\min\{H_{t-1}(w)+5u,
             \langle w,\widehat R_t(S)\rangle+5u\}
 =H_t(w)+5u.
\end{align*}
This proves \eqref{eq:sample-envelope}; evaluating at $w_t$ proves the final assertion.
\end{proof}

\geometricupper*
\begin{proof}
    Algorithm~\ref{alg:frontier-ladder} uses
$\eta_0=\frac{\varepsilon}{256}$, $\delta_0=\frac{\varepsilon\beta}{256}$,
$\zeta_0=\frac{\varepsilon\beta}{128}$.
Sparse vector is thus $(\eta_0,\delta_0)$-differentially private.
By Lemma~\ref{fact:sv}, its reports have tolerance
$u=\varepsilon/80$ except with probability $\zeta_0$, if for large enough $C$
\begin{equation*}
    n \ge C
    \frac{
      \sqrt{L\log(C/(\varepsilon\beta))}
      \log(CkL/(\varepsilon\beta))
    }{\varepsilon^2}.
\end{equation*}

There are at most $L$ accepted profile estimates before the sparse-vector
cap.  Each empirical profile has Euclidean sensitivity at most
$\sqrt d/n$.  With the value of $\sigma$ in
Algorithm~\ref{alg:frontier-ladder}, Lemma~\ref{lem:gaussian} makes the
joint Gaussian releases $(\eta_0,\delta_0)$-differentially private and
bounds their $\ell_\infty$ noise by $u$, simultaneously except with
probability $\zeta_0$, provided
\begin{equation*}
    n
    \ge
    C'
    \frac{
      \sqrt{
        dL\log(C'/(\varepsilon\beta))
        \log(C'dL/(\varepsilon\beta))
      }
    }{\varepsilon^2}
\end{equation*}
for a universal constant $C'$.  Coordinatewise clipping is
postprocessing and, since $\widehat R_t(S)\in[0,1]^d$, cannot increase
this distance.  Hence every accepted estimate satisfies
$\|\widetilde R_t-\widehat R_t(S)\|_\infty\le u$.
The square-root term in \eqref{eq:geom-upper} dominates both requirements
after choosing the universal constant $C$ sufficiently large.

By composition, the joint transcript is
$(\frac{\varepsilon}{128},\frac{\varepsilon\beta}{128})$-differentially private, and the two utility events fail with 
probability at most $\varepsilon\beta/64$.  These are the
assumptions of Lemma~\ref{fact:transfer} with
$a=\varepsilon/2$ and $b=\beta/2$.

On this joint utility event, Lemma~\ref{lem:packing} shows that the cap is
not reached, and Lemma~\ref{lem:sample-accuracy} gives
\begin{equation*}
    \max_{t\le k}|\widehat v_t-Q_t(S)|
    \le\frac{\varepsilon}{16}.
\end{equation*}
Lemma~\ref{fact:transfer} therefore yields
\begin{equation*}
    \max_{t\le k}|\widehat v_t-Q_t(P)|
    \le\frac{\varepsilon}{2}
\end{equation*}
with failure probability at most $\beta/2$.  Finally,
Lemma~\ref{lem:bias} and the $\log(2k)$ term in
\eqref{eq:geom-upper} give
\begin{equation*}
    |Q_t(P)-V_t^P(w_t)|\le\frac{\varepsilon}{2}
    \qquad(t\le k).
\end{equation*}
The triangle inequality proves the theorem.
\end{proof}

\subsection{Frontier-value release extension}

Frontier-value validity (Definition~\ref{def:frontier-validity}) requires simultaneous accuracy for every weighting in $\mathcal W$.  Frontier Ladder attains this stronger guarantee by publishing its retained profiles.

\begin{proposition}[Frontier-value release]
\label{prop:frontier-release-upper}
Under the assumptions of Theorem~\ref{thm:geometric-upper}, if
$n$ satisfies \eqref{eq:geom-upper}, then the variant of Frontier Ladder
that receives only $f_t$, performs the same update of $A_t$, and
publishes $A_t$ after every round satisfies, with probability at least
$1-\beta$,
\begin{equation*}
    \max_{t\le k}\sup_{w\in\mathcal W}
    |F_{A_t}(w)-V_t^P(w)|
    \le\varepsilon.
\end{equation*}
Consequently, the right-hand side of \eqref{eq:geom-upper} also
upper-bounds $\Nfront(k,\mathcal W,\varepsilon,\beta)$.
\end{proposition}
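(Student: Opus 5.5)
The plan is to reuse the proof of Theorem~\ref{thm:geometric-upper} almost verbatim, replacing the single evaluation at $w_t$ with a uniform-in-$w$ statement. There is one wrinkle. The transfer theorem (Lemma~\ref{fact:transfer}) is stated for scalar $1/n$-sensitive queries, and $\sup_w$ is not a single query. So I will not transfer $F_{A_t}(w)$ for all $w$. Instead I will transfer the accuracy of the retained profiles themselves, together with the sparse-vector decisions, which are the only sample-dependent objects in the published transcript.

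First, the variant does not depend on any requested weights. Its transcript consists of the sparse-vector reports and the noisy profiles $\widetilde R_t$. Conditional on the preceding transcript, $\Gamma_t$ is still $1/n$-sensitive, since $A_{t-1}$ is fixed and $\widehat R_t$ moves by at most $1/n$ in each coordinate. Hence the privacy accounting and the utility events of the proof of Theorem~\ref{thm:geometric-upper} carry over unchanged: joint $(\varepsilon/128,\varepsilon\beta/128)$-differential privacy, sparse-vector tolerance $u$, and Gaussian noise at most $u$ in $\ell_\infty$. On this event, Lemma~\ref{lem:packing} shows the cap is not reached, and Lemma~\ref{lem:sample-accuracy}, which already holds for every $w\in\mathcal W$, gives $\sup_w|F_{A_t}(w)-H_t(w)|\le5u$ with $H_t(w)=\min_{s\le t}\langle w,\widehat R_s(S)\rangle$.

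It remains to pass from $H_t$ to $V_t^P$ uniformly in $w$. I would use the elementary bound $\sup_w|H_t(w)-V_t^P(w)|\le\max_{s\le t}\|\widehat R_s(S)-R_P(f_s)\|_\infty$. This holds because $|\min_s a_s-\min_s b_s|\le\max_s|a_s-b_s|$ and $w\in\Delta_d$. So it suffices to show that all $kd$ coordinate empirical means $\widehat R_{s,j}(S)$ are within $\varepsilon/2$ of $R_{P,j}(f_s)$ simultaneously, even though the $f_s$ are chosen adaptively.

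This step is the main obstacle. Each coordinate $q_{s,j}(S)=\widehat R_{s,j}(S)$ is a $1/n$-sensitive query whose population value $q_{s,j}(P)$ is exactly $R_{P,j}(f_s)$, so there is no bias term. The difficulty is that the mechanism never answers these queries directly. I would handle this with a monitor-style argument. Consider the analyst, together with the mechanism, as a differentially private interaction, and append to it $kd$ additional queries $q_{s,j}$ chosen as postprocessing of the transcript. Answer each of them with $F$-consistent proxies; on accepted rounds the natural proxy is $\widetilde R_{s,j}$, which is within $u$ of $q_{s,j}(S)$. On rejected rounds there is no released profile, so those $f_s$ cannot be controlled this way.

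This forces a change of strategy for rejected rounds: a direct coordinate transfer is false for them. Instead I bound $V_t^P$ only through what matters. For the lower bound, note that $F_{A_t}(w)\ge V^P_t(w)-\varepsilon$ requires population accuracy only on accepted profiles, which the monitor gives. For the upper bound on negative rounds, I will use the recursive argument of Lemma~\ref{lem:sample-accuracy} with the scalar query $\Gamma_t$ transferred in place of $H_t$. Since $\Gamma_t$ is a $1/n$-sensitive scalar query answered by sparse vector within $u$, Lemma~\ref{fact:transfer} applies to it directly once it is augmented with the population analogue $\max_w\{F_{A_{t-1}}(w)-\langle w,R_P(f_t)\rangle\}$. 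The bias of $\Gamma_t$ from this analogue is controlled by a net over $\mathcal W$ combined with Lemma~\ref{lem:bias}-type concentration, with the $d$ and $\log k$ factors absorbed by the $d+\log(CkdL/(\varepsilon\beta))$ term in \eqref{eq:geom-upper}.

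Running the induction of Lemma~\ref{lem:sample-accuracy} with population quantities and tolerances $\varepsilon/2$ then gives the claim. The final bound on $\Nfront$ follows as in Fact~\ref{fact:frontier-no-easier}.
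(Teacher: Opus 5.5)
Your central premise is false: the supremum over $w$ \emph{is} a single scalar query, and the paper's proof rests on exactly that. Fix a transcript determining $A_t$ and $f_1,\ldots,f_t$. Set $H_t^X(w)\coloneqq\min_{s\le t}\frac1n\sum_i\langle w,\phi(f_s,X_i)\rangle$ and $D_t(X)\coloneqq\sup_{w\in\mathcal W}|F_{A_t}(w)-H_t^X(w)|$. Replacing one observation moves every $H_t^X(w)$ by at most $1/n$, so $D_t$ is $1/n$-sensitive.

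The paper then argues as follows:
\begin{itemize}
\item It answers each $D_t$ by the constant $0$, which is postprocessing, so privacy is unchanged.
\item Lemma~\ref{lem:sample-accuracy} already gives $D_t(S)\le5u=\varepsilon/16$ on the utility event, so the constant answers are sample-accurate.
\item Lemma~\ref{fact:transfer} with $a=\varepsilon/2$, $b=\beta/2$ then gives $\E_{X\sim P^n}D_t(X)\le\varepsilon/2$.
\item The bound for all $w$ simultaneously follows from $|F_{A_t}(w)-V_t^P(w)|\le\E_XD_t(X)+|\E_XH_t^X(w)-V_t^P(w)|$ and Lemma~\ref{lem:bias}.
\end{itemize}
The key is comparing $F_{A_t}$ with $\E_XH_t^X$ for a fresh sample, not with $H_t^S$. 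That is what avoids the dead end you hit on rejected rounds.

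Your accepted/rejected split can be completed, but the negative-round step fails as written. Sparse vector does not answer $\Gamma_t$ within $u$. A negative report only certifies $\Gamma_t(S)\le\tau+u=5u$, and $\Gamma_t(S)$ can lie far below that, down to $-1$. So the two-sided sample-accuracy hypothesis of Lemma~\ref{fact:transfer} is not met. The fix is the same constant-answer device: on negative rounds, transfer the $1/n$-sensitive query $\max\{\Gamma_t,0\}$, answered by $0$.

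The net over $\mathcal W$ is unnecessary, and done generically it is too costly. For $\mathcal W=\Delta_d$ it needs roughly $d\log(1/\varepsilon)/\varepsilon^2$ samples, whereas $d$ enters \eqref{eq:geom-upper} only under a square root. So this cost is not covered when $L$ is small relative to $d$. Instead, the map $r\mapsto\max_{w\in\mathcal W}\{F_{A_{t-1}}(w)-\langle w,r\rangle\}$ is convex, so Jensen gives $\max_w\{F_{A_{t-1}}(w)-\langle w,R_P(f_t)\rangle\}\le\E_X\Gamma_t(X)$ at no cost. This is exactly the direction you need.

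With these repairs, plus your transfer of the released coordinates $\widetilde R_{s,j}$ on accepted rounds (which is sound), the population-level induction closes at tolerance $\varepsilon/2$. Your route then yields the extra fact that each published profile is population-accurate; the paper's single query $D_t$ is simply shorter.

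Finally, the upper bound on $\Nfront$ follows directly from the definition of $\Nfront$ once $A_t$ is shown valid. Fact~\ref{fact:frontier-no-easier} is the reverse inequality and does not give it.
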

\begin{proof}
We apply the transfer theorem to a query measuring the largest error over all weightings.
Publishing $A_t$ preserves the privacy guarantee from Theorem~\ref{thm:geometric-upper}: the set is a deterministic function of the sparse-vector reports and noisy profiles, whose joint transcript that proof controls.

For any data set $X=(X_1,\ldots,X_n)\in\mathcal Z^n$, define
\begin{equation*}
    H_t^X(w)\coloneqq\min_{s\le t}\frac1n\sum_{i=1}^n
    \langle w,\phi(f_s,X_i)\rangle,
    \qquad
    D_t(X)\coloneqq\sup_{w\in\mathcal W}|F_{A_t}(w)-H_t^X(w)|.
\end{equation*}
The joint utility event in Theorem~\ref{thm:geometric-upper} and Lemma~\ref{lem:sample-accuracy} give
\begin{equation}
    D_t(S)\le5u=\frac{\varepsilon}{16}\qquad(t\le k).
    \label{eq:uniform-query-sample-accuracy}
\end{equation}
For a fixed transcript determining $A_t$ and $f_1,\ldots,f_t$, replacing one observation changes each $H_t^X(w)$ by at most $1/n$, so
\begin{equation*}
    |D_t(X)-D_t(X')|
    \le\sup_{w\in\mathcal W}|H_t^X(w)-H_t^{X'}(w)|\le\frac1n.
\end{equation*}
Thus the adaptively chosen queries $D_t$ are $1/n$-sensitive.
Answering each by $0$ preserves privacy and, by \eqref{eq:uniform-query-sample-accuracy}, gives simultaneous sample error at most $\varepsilon/16$ on the same utility event.
Lemma~\ref{fact:transfer}, with $a=\varepsilon/2$ and $b=\beta/2$, therefore yields
\begin{equation}
    \E_{X\sim P^n}D_t(X)\le\frac{\varepsilon}{2}\qquad(t\le k)
    \label{eq:uniform-query-population}
\end{equation}
with probability at least $1-\beta/2$.

On this event, for every $t\le k$ and $w\in\mathcal W$, the triangle inequality and Lemma~\ref{lem:bias} give
\begin{align*}
    |F_{A_t}(w)-V_t^P(w)|
    &\le\E_{X\sim P^n}|F_{A_t}(w)-H_t^X(w)|
      +|\E_{X\sim P^n}H_t^X(w)-V_t^P(w)|\\
    &\le\E_{X\sim P^n}D_t(X)+\sqrt{\frac{\log(2t)}{2n}}
    \le\varepsilon,
\end{align*}
where the last term of \eqref{eq:geom-upper} bounds the square root by $\varepsilon/2$ after adjusting the absolute constant.
\end{proof}

For completeness, there is a nonprivate geometric analogy with streaming
$\varepsilon$-hull methods~\cite{blum2018approximate}.  Those methods maintain a small subset whose convex hull approximates a stream.  Here, retained profiles must be released stably, and the approximation is restricted to the allowed weightings.

\section{Variable accuracy and connections to adaptive estimation}
\label{app:variable-accuracy}

Choosing $q$ of order $\varepsilon^{-1/2}$ in
Lemma~\ref{lem:qary-code} gives a large family of profiles at a margin
proportional to $\varepsilon$.  Combining this capacity bound with
Theorem~\ref{thm:capacity-lower} gives the following dependence on accuracy.

\begin{corollary}[Variable-accuracy simplex lower bound]
\label{cor:variable-accuracy-lower}
There are universal constants $c,\varepsilon_0,\beta_0>0$, with $\varepsilon_0\le c$, such that, for integers $d\ge2$ and $k\ge1$, $0<\varepsilon\le\varepsilon_0$, and $0<\beta\le\beta_0$,
\begin{equation}
    \Npw(k,d,\varepsilon,\beta)
    \ge c\min\!\left\{\sqrt{k},\left(\frac c\varepsilon\right)^{c(d-1)}\right\}.
    \label{eq:variable-accuracy-lower}
\end{equation}
\end{corollary}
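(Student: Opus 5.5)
The plan is to combine Lemma~\ref{lem:qary-code} with Theorem~\ref{thm:capacity-lower}. We choose the alphabet size $q$ so that the margin $\frac{1}{48(q-1)^2}$ is at least $C\varepsilon$, where $C$ is the constant from Theorem~\ref{thm:capacity-lower}.

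\textbf{Choice of $q$.} Set $q \coloneqq 1 + \lfloor (48C\varepsilon)^{-1/2}\rfloor$. Then $(q-1)^2 \le 1/(48C\varepsilon)$, which gives $\frac{1}{48(q-1)^2} \ge C\varepsilon$. We need $q\ge2$, which holds once $\varepsilon_0\le 1/(48C)$.

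\textbf{Monotonicity in the margin.} Frontier capacity is nonincreasing in the margin: a sequence that is $\gamma$-convexly separated is also $\gamma'$-convexly separated for every $\gamma'\le\gamma$. Lemma~\ref{lem:qary-code} therefore yields
\[
\Mcap_{\Delta_d}(d,C\varepsilon)\ \ge\ \lfloor q^{c_0(d-1)}\rfloor .
\]

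\textbf{Lower-bounding $q$.} Since $\lfloor x\rfloor \ge x/2$ for $x\ge1$, we have $q \ge \tfrac12 (48C\varepsilon)^{-1/2}$ once $\varepsilon_0$ is small. Hence $q^{c_0(d-1)} \ge (c'/\varepsilon)^{(c_0/2)(d-1)}$ with $c' = 1/(192C)$. This base exceeds $1$ when $\varepsilon_0<c'$.

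\textbf{Removing the floor.} The quantity $x=q^{c_0(d-1)}$ satisfies $x\ge1$, so $\lfloor x\rfloor\ge x/2$. The factor $1/2$ is absorbed after taking square roots.

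\textbf{Applying Theorem~\ref{thm:capacity-lower}.} For $k\ge4$ the theorem gives
\[
\Npw \ \ge\ c\,\min\Bigl\{\sqrt k,\ \tfrac{1}{\sqrt2}\,(c'/\varepsilon)^{(c_0/4)(d-1)}\Bigr\},
\]
where we drop the nonnegative $\log k/\varepsilon^2$ term. We then pick a single constant $c$ no larger than $c_0/4$, $c'$, and the theorem's constant divided by $\sqrt2$. The exponent is handled using $(c'/\varepsilon)^{a}\ge (c/\varepsilon)^{a}$ for $c\le c'$ and $(c/\varepsilon)^{a}\ge(c/\varepsilon)^{a'}$ for $a\ge a'$; the latter needs $c/\varepsilon\ge1$, which is why the statement imposes $\varepsilon_0\le c$. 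For $1\le k\le3$, the right-hand side is at most $c\sqrt3$, and this is at most $1\le \Npw$ after shrinking $c$.

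\textbf{Main obstacle.} The only delicate point is this bookkeeping: one universal $c$ must serve simultaneously as the prefactor, as the base inside $(c/\varepsilon)$, and in the exponent. That works because each of these bounds only gets weaker as $c$ decreases, given $\varepsilon\le\varepsilon_0\le c$.
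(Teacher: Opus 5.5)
Your proposal is correct and follows the paper's proof step for step. It uses the same choice $q=1+\lfloor(48C\varepsilon)^{-1/2}\rfloor$, monotonicity of capacity in the margin together with Lemma~\ref{lem:qary-code}, Theorem~\ref{thm:capacity-lower} for $k\ge4$ after removing the floor via $\lfloor x\rfloor\ge x/2$, and the trivial bound $\Npw\ge1$ for $k\le3$. Your constant bookkeeping is a more explicit version of the paper's ``adjusting constants'' (the paper uses the slightly sharper $q\ge(48C\varepsilon)^{-1/2}$); you should also state explicitly that $\varepsilon_0$ and $\beta_0$ must lie below the thresholds required by Theorem~\ref{thm:capacity-lower}.
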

\begin{proof}
Let $a,A,\varepsilon_*,\beta_*$ denote the respective constants in
Theorem~\ref{thm:capacity-lower}.  Choose
$\varepsilon_0\le\min\{\varepsilon_*,1/(48A)\}$ and
$\beta_0\le\beta_*$, and put
$q\coloneqq1+\lfloor(48A\varepsilon)^{-1/2}\rfloor\ge2$.
Then $A\varepsilon\le1/[48(q-1)^2]$.  Since capacity is nonincreasing
in its margin, Lemma~\ref{lem:qary-code} gives
\begin{equation*}
    \Mcap_{\Delta_d}(d,A\varepsilon)
    \ge\left\lfloor q^{c_0(d-1)}\right\rfloor.
\end{equation*}
For $k\ge4$, Theorem~\ref{thm:capacity-lower} therefore yields
\begin{equation*}
    \Npw(k,d,\varepsilon,\beta)
    \ge\frac{a}{\sqrt2}\min\!\left\{\sqrt{k},
       \left(\frac{1}{48A\varepsilon}\right)^{c_0(d-1)/4}\right\}.
\end{equation*}
Adjusting constants again proves
\eqref{eq:variable-accuracy-lower} for $k\ge4$.
For $1\le k<4$, the same bound follows from adjusting constants given that the sample size is a positive integer.
\end{proof}

For $k\ge4$, combining Proposition~\ref{thm:log-lower} with Corollary~\ref{cor:variable-accuracy-lower} and adjusting the absolute constants gives
\begin{equation}
    \Npw(k,d,\varepsilon,\beta)
    \ge c\left[\frac{\log k}{\varepsilon^2}+
      \min\!\left\{\sqrt{k},\left(\frac c\varepsilon\right)^{c(d-1)}\right\}\right].
    \label{eq:epsilon-lower-summary}
\end{equation}
The upper bounds are \eqref{eq:geom-upper}, with
$L=\mathsf L_{\mathcal W}(k,\varepsilon)$, and \eqref{eq:generic-upper-gaussian}; one may take the smaller of the two.
For the full simplex, \eqref{eq:sep-packing-upper} gives
$\mathsf L_{\Delt_d}(k,\varepsilon)\le1+\min\{k,(1+\lceil40/\varepsilon\rceil)^d\}$.
Thus Frontier Ladder uses
$\widetilde O(\varepsilon^{-d/2-2})$ samples for fixed $d$ and $\beta$, where logarithmic factors in $k$ and $1/\varepsilon$ are suppressed.
The accuracy exponents in these upper and lower bounds hence remain different.

\paragraph{Relation to adaptive query answering.}
The arbitrary-margin reduction~\eqref{eq:convex-reduction} inside the
proof of Theorem~\ref{thm:capacity-lower} also preserves the link to adaptive
query answering beyond constant query accuracy.  If $d\ge2$ and
$\lfloor2^{c_0(d-1)}\rfloor\ge k$, Lemma~\ref{lem:qary-code} with $q=2$
supplies capacity at least $k$ at margin $\gamma=1/48$.
Thus, for every $0<\varepsilon,\beta<1$, that reduction gives
\begin{equation}
    \Naq(k,192\varepsilon,\beta)
    \le\Npw(k,d,\varepsilon,\beta).
    \label{eq:ada-link}
\end{equation}
Hence lower bounds for adaptive query answering transfer to weighted leaderboards, while leaderboard upper bounds in this dimension range also yield upper bounds for adaptive queries.

For population-only adaptive queries at constant confidence, standard bounds have adaptive terms
$\Omega(\sqrt k/\varepsilon)$ and
$\widetilde O(\sqrt k/\varepsilon^2)$, together with the nonadaptive lower bound
$\Omega(\log k/\varepsilon^2)$, in the nontrivial accuracy range~\cite{steinke2015interactive,bassily2016stability}.
The gap in accuracy dependence therefore carries through our reduction.
The matching result of Lyu and Talwar~\cite{lyu2025fingerprinting} assumes simultaneous sample and population accuracy, a stronger requirement than the population accuracy defining $\Naq$.
Relatedly, an accuracy gap also remains for scalar leaderboards ($d=1$).
Shaky Ladder achieves error $\widetilde O(n^{-2/5})$~\cite{hardt2017climbing}, or sample complexity $\widetilde O(\varepsilon^{-5/2})$.
Frontier Ladder has the same leading accuracy dependence because
$\mathsf L_{\Delt_1}(k,\varepsilon)\le2+\lceil40/\varepsilon\rceil$.
The nonadaptive lower bound in Proposition~\ref{thm:log-lower} has only $\varepsilon^{-2}$ dependence.

\begingroup
\setlength{\textfloatsep}{8pt plus 2pt minus 2pt}
\setlength{\floatsep}{6pt plus 2pt minus 2pt}
\setlength{\intextsep}{6pt plus 2pt minus 2pt}
\captionsetup{skip=4pt}
\captionsetup[subfigure]{skip=3pt}

\section{Controlled weighted-leaderboard experiments}
\label{app:controlled-supplements}

We adapt the boosting attack of Blum and Hardt~\cite{blum2015ladder} to a controlled weighted leaderboard. The analyst combines random predictors that happen to perform well on the reused sample; their majority vote inherits a sample advantage that vanishes on fresh data. Here, different predictors become best under different preassigned weightings.

For a run with $k$ total submissions, each population example consists of a fair binary label $Y$ together with $k-1$ independent fair bits.  Predictor $h_t$ returns the $t$-th bit, so every predictor has population accuracy exactly $1/2$.  On a reused sample $S=\{(X_i,Y_i)\}_{i=1}^n$ of size $n=2000$, write
\begin{equation*}
g_t(X,Y)\coloneqq 2\mathbf{1}\{h_t(X)=Y\}-1,
\qquad
\widehat\mu_t\coloneqq\frac1n\sum_{i=1}^n g_t(X_i,Y_i).
\end{equation*}
Thus $g_t$ is $+1$ when predictor $t$ is correct and $-1$ otherwise,
with population mean zero; $\widehat\mu_t>0$ means that predictor $t$
happened to beat chance on the reused sample.

We place these zero-mean fluctuations into the evaluation profiles used in the lower bound. Proposition~\ref{prop:classification-realization} also realizes this construction with binary classifiers and coordinatewise $0$--$1$ loss.

\paragraph{Feedback from many notions of best.}
To make many of these chance fluctuations observable, we use the separated profiles underlying the frontier-capacity lower bound.  We apply the binary profile map of Lemma~\ref{lem:qary-code} to a finite code, obtaining baseline profiles $b_t$ and weightings $w_t$ such that $b_t$ beats every earlier baseline by a fixed margin under $w_t$. Our chosen family at $d=36$ contains $4096$ certified profile--weighting pairs; Section~\ref{app:controlled-implementation} gives implementation details.

We perturb each baseline profile by the corresponding sample fluctuation,
\begin{equation*}
    \widehat R_t\coloneqq b_t-\rho_0\widehat\mu_t\mathbf 1,
    \qquad \rho_0\coloneqq 1/192.
\end{equation*}
The perturbation is small enough relative to the separation margin that submission $t$ remains best under $w_t$, so the reported best-so-far value reveals $\widehat\mu_t$.  Both the predictors and the weightings are fixed before the reused sample is observed.

After the first $k-1$ submissions, the analyst makes its only adaptive submission:
the majority vote of the predictors with $\widehat\mu_t>0$, evaluated using the
remaining profile--weight pair $(b_k,w_k)$ under the same construction above.
These predictors were selected because they happened to agree unusually often with the reused labels.  On a fresh example, however, the label remains an independent fair bit, so this advantage disappears.

\paragraph{Scalar leaderboard comparison.}
We compare this with an ordinary scalar leaderboard using the same random predictors, reused samples, and submission order.  To put a given fluctuation $\widehat\mu_t$ on the same numerical scale as in the changing-weighting interaction, we also use the same factor $\rho_0$, defining the per-example loss
$\ell_t(X,Y)\coloneqq\frac12-\rho_0 g_t(X,Y)$.
This is simply an affine rescaling of $0$--$1$ loss.  Every predictor has population loss $1/2$, while its reused-sample loss is $1/2-\rho_0\widehat\mu_t$.  Because the scalar leaderboard reports only the best value seen so far,
its numerical feedback changes only when $\widehat\mu_t$ sets a new
record.  In our matched scalar attack, the final submission combines
these record-setting predictors.

\paragraph{Results.}
For both interactions, we report the weighted-leaderboard validity error
\begin{equation*}
    E_k\coloneqq\max_{t\le k}\bigl|\widehat v_t-V_t^P(w_t)\bigr|,
\end{equation*}
where, as in Section~\ref{sec:weighted-leaderboard}, $\widehat v_t$ is the reported value and $V_t^P(w_t)$ is the population value of the best submission so far under the requested weighting; the scalar interaction uses the same fixed weighting at every round.

Figure~\ref{fig:controlled-attack} shows that the two interactions increasingly separate as the submission budget grows.  A single weighting hides most chance fluctuations because only new records affect the transcript; changing weightings make many more of the same fluctuations observable.  At the largest budget, this produces more than a fourfold gap in the $90$th-percentile validity error.

\paragraph{Dimension transition.}
The same construction lets us vary how many distinct fluctuations can be exposed.  We fix $n=2000$, $k=4096$, and $\rho_0=1/192$, and repeat the changing-weighting attack while varying $d$.  At each dimension, we use a finite family of certified profile--weight pairs, capped at the submission budget. This comparison measures the performance of these families. Increasing $d$ does not make any random predictor better; larger families expose more chance fluctuations.

The two right-hand panels of Figure~\ref{fig:controlled-attack} show the resulting transition.  The smaller tested families expose fewer fluctuations and produce weaker attacks. As the certified family grows, so does the validity error. In our finite construction, $d=36$ is the first tested dimension at which all 4095 candidate fluctuations can be exposed, leaving the final submission for their adaptive majority. At larger tested dimensions, we embed this same full family, so these runs remain limited by the submission budget.

\begin{figure}[t]
    \centering
    \makebox[\linewidth][c]{%
        \resizebox{0.96\linewidth}{!}{%
        \begin{minipage}{1.04\linewidth}
            \begin{minipage}[c]{0.39\linewidth}
                \centering
                \resizebox{\linewidth}{!}{%
                    \input{fig/tex/figure_main_clean}%
                }
            \end{minipage}%
            \hfill
            \begin{minipage}[c]{0.59\linewidth}
                \centering
                \begin{tikzpicture}
                    \node[anchor=south west,inner sep=0] (img) at (0,0)
                      {\includegraphics[width=\linewidth]{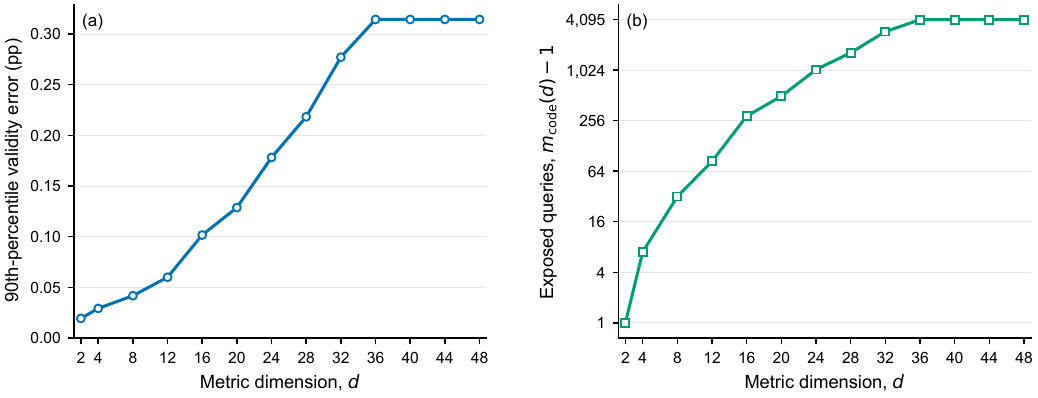}};
                    \fill[white] ($(img.north west)+(0.75cm,-0.02cm)$)
                      rectangle ($(img.north west)+(1.10cm,-0.30cm)$);
                    \fill[white] ($(img.north west)+(6.02cm,-0.02cm)$)
                      rectangle ($(img.north west)+(6.34cm,-0.30cm)$);
                \end{tikzpicture}
            \end{minipage}%
        \end{minipage}%
        }%
    }

    \caption{\textbf{Boosting through many notions of best.}
    Left: changing preassigned weightings expose more chance fluctuations and
    produce larger validity error than one fixed weighting. Right: for the tested
    code families, increasing the number of criteria at fixed $n=2000$ and $k=4096$ increases both error
    and exposure, until the submission budget becomes the limit. All runs use
    exact empirical feedback, paired samples, and $\rho_0=1/192$; validity
    curves show $90$th percentiles over $1000$ paired repetitions.}
    \label{fig:controlled-attack}
\end{figure}

\paragraph{Implementation details.}\label{app:controlled-implementation}
The finite families use a fixed-seed greedy binary packing with at most $100{,}000$ candidates per dimension. At $d=36$, it yields $4096$ vectors of length $35$ with minimum Hamming distance $9$. Larger tested dimensions append constant profile coordinates with zero weight. Paired repetitions share the sample and random predictors.

\section{Real-benchmark protocol and additional results}
\label{app:supp-experiments}
\label{sec:llm-supplement}

\subsection{Calibration and evaluation protocol}

Table~\ref{tab:benchmark-splits} gives the numbers of criteria and split sizes for the four historical snapshots. We split within tasks, keeping duplicate inputs together, and average category losses equally on $C$, $S$, and $T$. Within each category, HELM Capabilities and Open LLM average item losses, while LiveBench and HELM Lite average task means. We choose endpoints using $C$ alone and fix submission budgets before the interaction.

\begin{table}[t]
\centering
\small
\begin{tabular}{@{}lrrrr@{}}
\toprule
Benchmark & $d$ & $|C|$ & $|S|$ & $|T|$ \\
\midrule
HELM Capabilities & 5 & 797 & 1,064 & 2,126 \\
LiveBench & 6 & 156 & 204 & 408 \\
HELM Lite & 10 & 2,041 & 2,721 & 5,438 \\
Open LLM & 6 & 960 & 1,278 & 2,562 \\
\bottomrule
\end{tabular}
\caption{\textbf{Numbers of criteria and split sizes.}
Counts refer to examples in the pinned evaluation frames and are fixed across trials.  The calibration examples remain fixed; the remaining examples are repartitioned into $S$ and $T$.}
\label{tab:benchmark-splits}
\end{table}

The genuine-model pools contain $68$, $33$, $14$, and $7$ models, respectively.  HELM Capabilities includes every model with complete aligned scores on its five scenarios.  LiveBench uses $33$ endpoints with one published judgment per item from the October 22, 2024 snapshot.  For HELM Lite, we include models with complete aligned scores on all ten scenarios and compatible prompt templates.  Open LLM uses the seven archived candidate endpoints with complete scores on the six retained tasks.

Table~\ref{tab:calibration-pairs} gives the selected endpoints. Eligible pairs have both endpoints on the calibration Pareto frontier, and an itemwise oracle choosing the better endpoint must beat the best genuine calibration model. We call this improvement \emph{oracle headroom}. For each pair, \emph{joint Pareto survival} is the fraction of $2000$ calibration bootstrap resamples in which both endpoints remain nondominated, resampling duplicate-input groups within tasks. Among pairs within one percentage point of the highest survival, we maximize mean absolute score difference, then the smaller directional item-win rate; remaining ties use oracle headroom and model-name order.

\begin{table}[t]
\centering
\scriptsize
\setlength{\tabcolsep}{3pt}
\begin{tabular}{@{}lp{0.21\linewidth}p{0.22\linewidth}rrrrr@{}}
\toprule
Benchmark & Endpoint A & Endpoint B & Rank A/B & Headroom & Survival & Disagree & Both win \\
& & & & (pp) & & & \\
\midrule
HELM Capabilities & \shortstack[l]{\texttt{xai/}\texttt{grok-4-0709}} & \shortstack[l]{\texttt{openai/}\texttt{o3-2025-04-16}} & $7/3$ & 5.800 & 0.994 & 0.154 & 0.109 \\
\addlinespace[2pt]
LiveBench & \shortstack[l]{\texttt{gemini-1.5-pro-exp-}\texttt{0801}} & \shortstack[l]{\texttt{claude-3-5-}\\\texttt{sonnet-20240620}} & $6/1$ & 10.373 & 0.990 & 0.267 & 0.139 \\
\addlinespace[2pt]
HELM Lite & \shortstack[l]{\texttt{meta/}\texttt{llama-3.3-70b-}\\\texttt{instruct-turbo}} & \shortstack[l]{\texttt{google/}\texttt{gemini-1.5-pro-002}} & $4/3$ & 5.316 & 0.996 & 0.153 & 0.131 \\
\addlinespace[2pt]
Open LLM & \shortstack[l]{\texttt{mistralai/}\texttt{Mistral-7B-}\\\texttt{Instruct-v0.2}} & \shortstack[l]{\texttt{microsoft/}\texttt{Phi-3-mini-4k-}\\\texttt{instruct}} & $4/1$ & 7.708 & 0.999 & 0.233 & 0.077 \\
\bottomrule
\end{tabular}
\caption{\textbf{Calibration-selected endpoint pairs.}
Rank uses the official aggregate over the full benchmark frame.
``Disagree'' is the mean absolute item-score difference; ``Both win'' is the
smaller directional item-win rate, treating differences at most $10^{-12}$ as
ties. Both use the benchmark's task and category aggregation. Headroom and
survival are defined in the text.}
\label{tab:calibration-pairs}
\end{table}

We submit genuine models in model-name order, followed by $A$, $B$, the random routers, and the final majority router. Initialization uses the same disclosure rules, and the Pareto analyst remembers any endpoint profile previously released. The attack budget $k$ includes both endpoints, $k-3$ random routers, and the final router. Routing rules give the same decision on repeated prompts; majority ties, including an empty retained set, use a fixed random routing rule.

For feedback, attack selection, and ranks, scores are compared to ten decimal places.

Each condition uses $250$ trials that repartition the noncalibration examples into $S$ and $T$, holding $C$, the endpoints, and the router bank fixed. Pointwise $95\%$ percentile intervals for leaderboard gaps use $2000$ paired bootstrap resamples. They describe partition variability conditional on these fixed choices.

\subsection{Additional results and diagnostics}
\label{app:additional-results}

\paragraph{Validity and feedback.}

\begin{figure}[t]
    \centering
    \begin{subfigure}[t]{0.46\linewidth}
        \centering
        \includegraphics[width=0.95\linewidth]{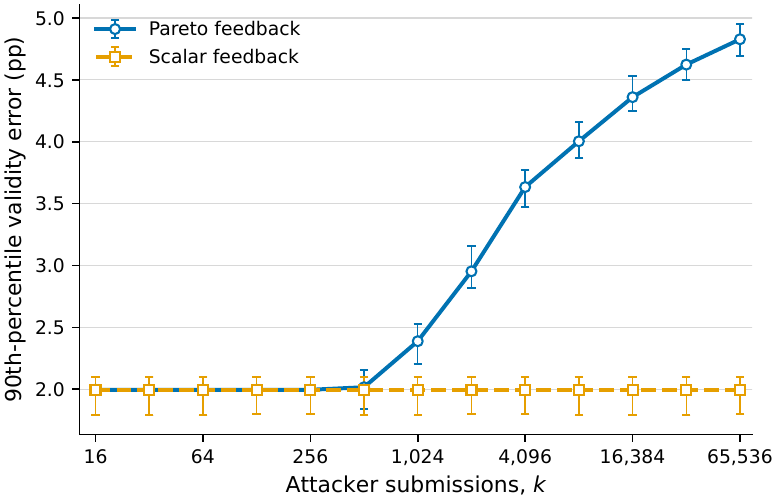}
        \caption{HELM Lite}
    \end{subfigure}\hspace{0.02\linewidth}
    \begin{subfigure}[t]{0.46\linewidth}
        \centering
        \includegraphics[width=0.95\linewidth]{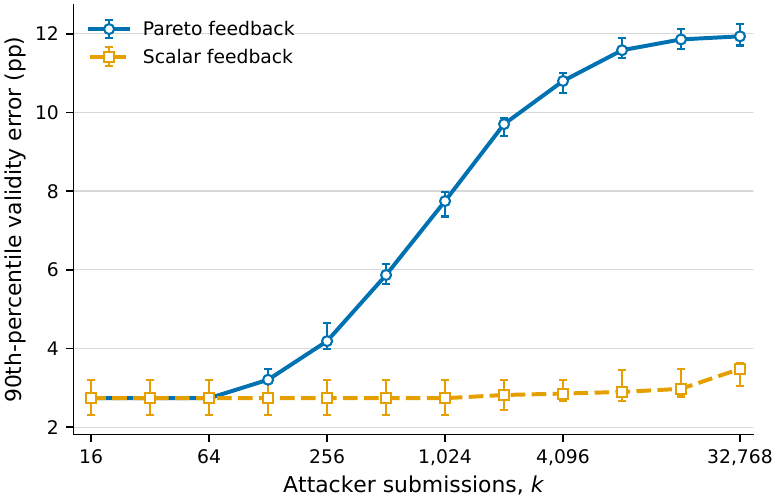}
        \caption{Open LLM}
    \end{subfigure}
    \caption{\textbf{Additional validity results.}
    The curves show the $90$th-percentile reused-to-held-out validity error across $250$ paired trials under each benchmark's common equal-weight objective.  Bars are paired-bootstrap $95\%$ confidence intervals.}
    \label{fig:additional-preloaded-validity}
\end{figure}

Figure~\ref{fig:additional-preloaded-validity} extends the leaderboard-gap comparisons in the top row of Figure~\ref{fig:real-budget-results} to HELM Lite and Open LLM. Table~\ref{tab:four-benchmark-summary} gives paired intervals for the Pareto-minus-scalar difference and the fraction of trials in which both endpoint profiles are available to the Pareto analyst. At the headline budgets, all four estimated differences and their paired intervals are positive. Table~\ref{tab:full-rank-diagnostics} reports mean ranks and candidate-router feedback counts.

\begin{table}[t]
\centering
\small
\begin{tabular}{@{}lrr@{}}
\toprule
Benchmark & $\Prb(A,B\text{ visible on }S)$ & Pareto--scalar gap (pp) [95\% CI] \\
\midrule
HELM Capabilities & 0.932 & 2.503 [2.207, 2.918] \\
LiveBench & 0.884 & 3.320 [2.405, 3.899] \\
HELM Lite & 0.992 & 2.629 [2.476, 2.843] \\
Open LLM & 1.000 & 8.453 [8.217, 8.959] \\
\bottomrule
\end{tabular}
\caption{\textbf{Endpoint visibility and paired error differences.}
At the budgets in Table~\ref{tab:llm-summary}, visibility records whether both
endpoint profiles have been released before the first random router. Differences
between the $90$th-percentile leaderboard gaps and their paired bootstrap
intervals are computed before rounding.}
\label{tab:four-benchmark-summary}
\end{table}

\paragraph{Final-router gaps and ranks.}

Figure~\ref{fig:final-router-gap} compares the final router's own aggregate loss on $S$ and $T$. The majority assembled from Pareto-visible rules has a larger gap over much of the submission range.

\begin{figure}[p]
    \centering
    \begin{subfigure}[t]{0.46\linewidth}
        \centering
        \includegraphics[width=0.95\linewidth]{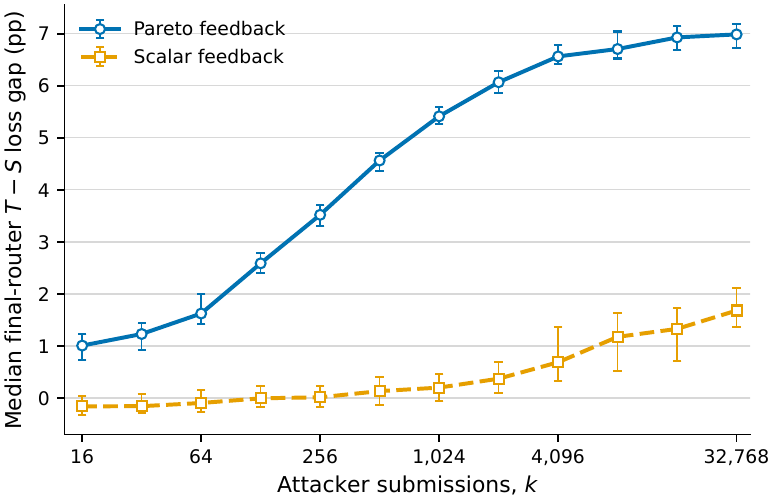}
        \caption{HELM Capabilities}
    \end{subfigure}\hspace{0.02\linewidth}
    \begin{subfigure}[t]{0.46\linewidth}
        \centering
        \includegraphics[width=0.95\linewidth]{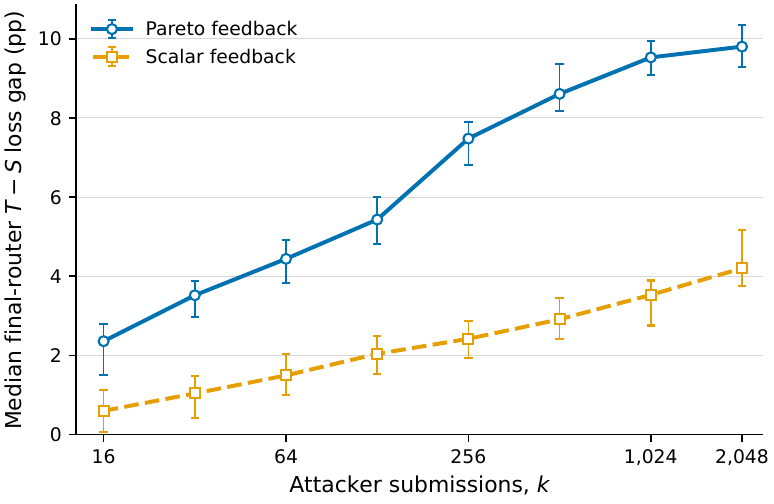}
        \caption{LiveBench}
    \end{subfigure}

    \par\smallskip

    \begin{subfigure}[t]{0.46\linewidth}
        \centering
        \includegraphics[width=0.95\linewidth]{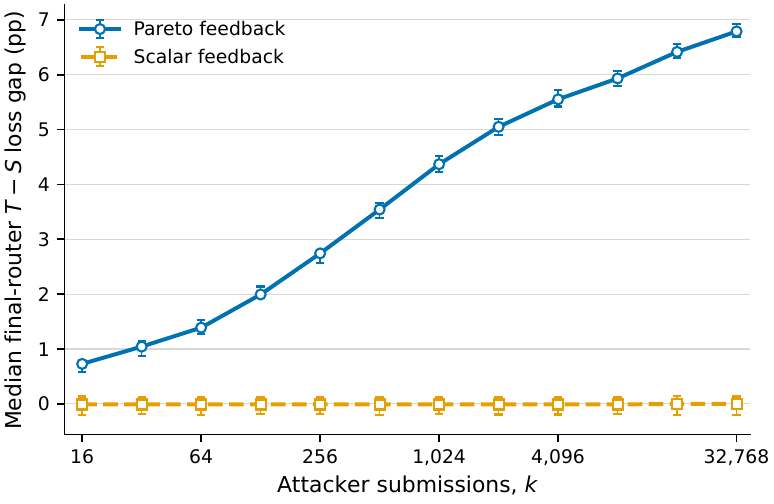}
        \caption{HELM Lite}
    \end{subfigure}\hspace{0.02\linewidth}
    \begin{subfigure}[t]{0.46\linewidth}
        \centering
        \includegraphics[width=0.95\linewidth]{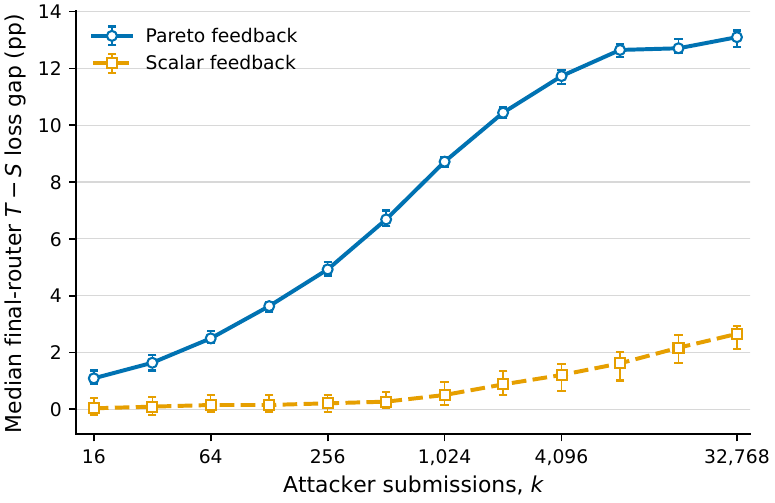}
        \caption{Open LLM}
    \end{subfigure}
    \caption{\textbf{Held-out minus reused loss of the final adaptive router.}
    Curves show the final router's $T-S$ equal-weight loss gap across $250$ paired trials.  The final router is fixed before $T$ is opened.}
    \label{fig:final-router-gap}

\par\medskip
\begingroup
\captionsetup{type=table}
\centering
\small
\setlength{\tabcolsep}{5pt}
\begin{tabular}{@{}llrll@{}}
\toprule
Benchmark & Feedback & \shortstack{Mean feedback\\routers}
& \shortstack{Mean rank on $S$\\{}[95\% CI]}
& \shortstack{Mean rank on $T$\\{}[95\% CI]} \\
\midrule
HELM Capabilities & Pareto & 1,018.156 & 1.34 [1.19, 1.52] & 5.67 [5.55, 5.78] \\
                  & Scalar & 2.204 & 3.18 [2.86, 3.51] & 5.68 [5.57, 5.79] \\
\addlinespace[2pt]
LiveBench & Pareto & 528.732 & 1.18 [1.12, 1.25] & 2.62 [2.54, 2.71] \\
          & Scalar & 2.324 & 1.38 [1.29, 1.48] & 2.52 [2.44, 2.60] \\
\addlinespace[2pt]
HELM Lite & Pareto & 12,964.444 & 1.02 [1.00, 1.06] & 4.06 [4.03, 4.09] \\
          & Scalar & 0.008 & 4.16 [4.09, 4.23] & 4.04 [4.02, 4.06] \\
\addlinespace[2pt]
Open LLM & Pareto & 1,787.376 & 1.00 [1.00, 1.00] & 3.20 [3.15, 3.24] \\
         & Scalar & 1.784 & 1.80 [1.66, 1.92] & 3.08 [3.05, 3.12] \\
\bottomrule
\end{tabular}
\caption{\textbf{Final-router ranks and candidate-router feedback.}
Means and percentile-bootstrap rank intervals use the $250$ paired trials at
the budgets in the caption of Table~\ref{tab:llm-summary}. Feedback counts
include only random candidates whose profiles or aggregate records are
released, excluding genuine models, endpoints, and the final router.
A released Pareto profile need not be retained for the final majority.}
\label{tab:full-rank-diagnostics}
\endgroup
\end{figure}

\begin{figure}[!t]
    \centering
    \includegraphics[width=0.95\linewidth]{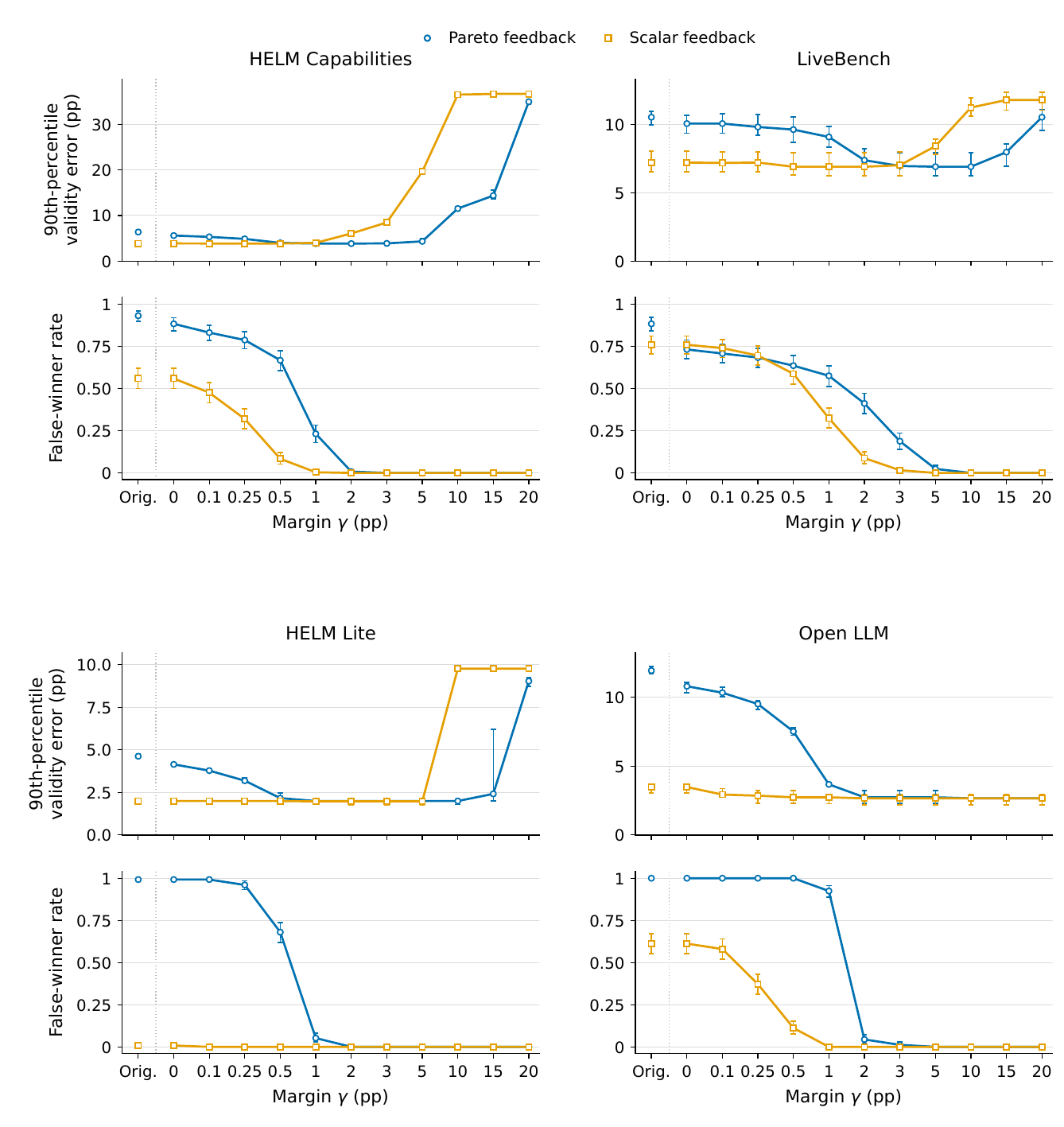}
    \caption{\textbf{Results with larger exposure margins.}
    Figure~\ref{fig:llm-feedback} extended to $\gamma=20$ percentage points
    and all four benchmarks. The Pareto validity error also rises at
    large margins on HELM Capabilities, LiveBench, and HELM Lite as useful
    updates are withheld. Rows show $90$th-percentile published-value error
    and false-winner rates (first on $S$, but not on $T$).
    Each benchmark uses $250$ paired trials; attack budgets are $32{,}768$
    except on LiveBench ($2{,}048$).
    Bars are pointwise $95\%$ bootstrap intervals, using $2000$ resamples for
    errors and $10{,}000$ for false-winner rates. At $\gamma=0$, positive convex exposure is already
    stricter than the original Pareto nondominance rule.}
    \label{fig:margin-gates-extended}
\end{figure}

\begin{samepage}
Table~\ref{tab:full-rank-diagnostics} supplies mean ranks at the headline budgets. Competition rank against genuine models is
$\operatorname{rank}_D(G)\coloneqq
1+\#\{M\text{ genuine}:L_D(M)<L_D(G)\}$, $D\in\{S,T\}$.
Ties share rank. Mean-rank intervals use $10{,}000$ bootstrap resamples of
the $250$ trials, keeping $S$ and $T$ paired within each feedback condition.
For false-winner intervals in Figures~\ref{fig:real-budget-results},
\ref{fig:llm-feedback}, and~\ref{fig:margin-gates-extended}, we bootstrap the
per-trial indicator $10{,}000$ times, pairing the resampled trial IDs across
feedback conditions. No final router ranks first on $T$ at any tested budget
or margin, so false-winner rates equal first-place frequencies on $S$.
\par\end{samepage}

\clearpage
\endgroup

\end{document}